\newtheorem{theorem}{Theorem}[section]
\newtheorem{corollary}[theorem]{Corollary}
\newtheorem{lemma}[theorem]{Lemma}
\newtheorem{proposition}[theorem]{Proposition}
\theoremstyle{definition}
\newtheorem{definition}[theorem]{Definition}
\newtheorem{remark}[theorem]{Remark}
\newtheorem{assumption}[theorem]{Assumption}
\newtheorem{example}[theorem]{Example}
\title[Sparse joint shift in multinomial classification]
{Sparse joint shift in multinomial classification} 
\author[Dirk Tasche]{}
\subjclass{Primary: 68P99; Secondary: 62G05.}
\keywords{Sparse joint shift, dataset shift, distribution shift,
prior probability shift, label shift, covariate shift, posterior correction, class prior estimation.}
\begin{document}
\maketitle

\centerline{\scshape
Dirk Tasche$^{{\href{mailto:dirk.tasche@gmx.net}{\textrm{\Letter}}}}$%
}

\medskip

{\footnotesize
 \centerline{Independent scholar, Switzerland}
} 

%

\bigskip

 \centerline{\ }


\begin{abstract}
Sparse joint shift (SJS) was recently proposed as a tractable model for
general dataset shift which may cause changes to the marginal distributions of features and
labels as well as the posterior probabilities and the class-conditional
feature distributions. Fitting SJS for a target dataset without label observations
may produce valid predictions of labels and estimates of class prior probabilities.
We present new results on the transmission of SJS from sets of features to larger sets of features, a conditional
correction formula for the class posterior probabilities under the target distribution,
identifiability of SJS, and the relationship between SJS and covariate shift.
In addition, we point out
inconsistencies in the algorithms which were proposed for estimating the characteristics of SJS,
as they could hamper the search for optimal solutions, and suggest potential improvements.
\end{abstract}


\section{Introduction}

The notion of \emph{Sparse Joint Shift (SJS)} was introduced by Chen et al.~\cite{chen&zaharia&Zou:SJS} as
a tractable model of dataset shift ``which considers the joint shift of both labels and a few features''.
In this paper, we re-analyse the notion in some depth, looking closer at its connection to
prior probability shift and the link between SJS and covariate shift. For facilitating this intent,
we take recourse to some observations on general dataset shift
by Tasche~\cite{tasche2022factorizable}.

We revisit the notion of SJS in a probabilistic measure theory setting that allows for the
easy deployment of powerful mathematical
tools like the tower property of conditional expectation or Dynkin's theorem which are less readily available
within the density-focussed approach used by Chen et al.~\cite{chen&zaharia&Zou:SJS}. This paper can be
read as a commentary on Chen et al.~\cite{chen&zaharia&Zou:SJS} but also as a self-contained presentation of SJS.

The main contributions with this paper to the subject are as follows:
\begin{itemize}
\item We extend the definition of SJS such that it also covers (for instance) distributional
shift of $\Vert \mathbf{X}\Vert = \sqrt{\sum_{i=1}^d X_i^2}$ if $\mathbf{X} = (X_1, \dots, X_d)$
is an $\mathbb{R}^d$-valued feature vector (see Remark~\ref{rm:recon} below), without having to enlarge
the set of available features.
\item We show that SJS for a set of features $X_i, i\in I,$ implies SJS for all sets of features
that include this set of features (Corollary~\ref{co:all} below).
\item We derive for SJS a conditional version of the posterior correction formula of Saerens et
 al.~\cite{saerens2002adjusting}
and Elkan~\cite{Elkan01}, see Proposition~\ref{pr:corrH} below.
\item In Theorem~\ref{th:ident}, we present a result on the identifiability of SJS that can be interpreted
as a generalisation
of the confusion matrix approach to class prior estimation under prior probability shift
(Saerens et al.~\cite{saerens2002adjusting}).
\item We explore in depth the relationship between SJS and covariate shift (Theorem~\ref{th:SJSvsCovShift} below).
\item We comment on the approaches to the estimation of importance weights proposed by
Chen et al.~\cite{chen&zaharia&Zou:SJS}, identify certain issues that might
affect their performance (Section~\ref{se:how} below), and suggest improvements to avoid these issues.
\end{itemize}

In Section~\ref{se:related}, an overview of related work on the subject is given. Section~\ref{se:setting}
sets the scene for the paper by specifying the model considered and introducing the notion of
sparse joint shift (SJS). Properties of SJS are discussed in some detail in Section~\ref{se:analyses}.
In Section~\ref{se:how}, two approaches by Chen et al.~\cite{chen&zaharia&Zou:SJS}
to the estimation of densities under the assumption of SJS are presented and commented on.

See Appendix~\ref{se:notation} for a brief glossary of concepts and notation
from probabilistic measure theory (like indicator function $\mathbf{1}_S$,
$\sigma$-algebra, or conditional expectation)  which are frequently used in the following.
Short proofs of results are integrated in the text, longer proofs are
presented in Appendix~\ref{se:proofs}. Appendix~\ref{se:instant} provides the formulae relevant
for applications of SJS in the special case of all features being discrete.

\section{Related work}
\label{se:related}

In a wide sense, this paper is related to all work on dataset shift, transfer learning, domain adaptation,
and class prior estimation in multinomial classification settings, including focussed literature on the binary case.

More specifically, there is closely related work on dataset shift models that are based on invariance assumptions:
\begin{itemize}
\item Covariate shift, including density ratio estimation
(Shimodaira~\cite{shimodaira2000improving}, Sugiyama et al.~\cite{sugiyama2012density}, and the references therein).
For this type of shift,
the posterior class probabilities are assumed to be invariant between source and target distribution.
\item Prior probability shift, also called label shift or target shift
 (Storkey~\cite{storkey2009training}, Azizzadenesheli~\cite{Azizzadenesheli2022}, and the references therein).
  In this case, the class-conditional
 feature distributions are assumed to be invariant between source and target distribution.
\item Factorizable joint shift (He et al.~\cite{he2022domain}). According to the original definition of He et al.,
at first glance, no invariance assumption is involved. Tasche~\cite{tasche2022factorizable} showed that with
this type of shift, the ratios of the class-conditional feature densities remain invariant up to constant
factors between source and target distribution.
\item Covariate shift with posterior drift (Scott~\cite{Scott2019}).  This type of shift is characterised by the
weak invariance assumption that the posterior probabilities of the source distribution are statistically
sufficient for the full feature vector also under the target distribution (Tasche~\cite{Tasche2022}).
\end{itemize}
Sparse joint shift (SJS) is also characterised by an invariance assumption: According to
Section~3 of Chen et al.~\cite{chen&zaharia&Zou:SJS},
``Roughly speaking, SJS allows both labels and a few features to shift,
but assumes the remaining features' conditional distribution to stay the same.''

An assumption of absolute continuity of the target distribution with respect to the source distribution
appears to be fundamental for the study of dataset shift with invariance assumptions.
Consequently, there is a clear demarcation between
the approach chosen in this paper and a large thread of literature that begins with the seminal paper
by Ben-David et al.~\cite{Ben-David2007Representations} and investigates ways involving
so-called representations (transformations of the features in order to make their
source and target domains similar) to get around the absolute continuity assumption.
See Johansson et al.~\cite{pmlr-v89-johansson19a} for a discussion of limitations to the representation-based approach
to domain adaptation.

A third general approach to estimating dataset shift is based on mixture models and solving transport problems,
under the assumption that the shift is enacted by moving as little as possible (probability)
mass (Hofer~\cite{hofer2015adapting}). Such an approach may also involve the construction of representations
(Kirchmeyer et al.~\cite{kirchmeyer2021mapping}).

\section{Setting}
\label{se:setting}

In this paper, we use the probabilistic setting of Tasche~\cite{tasche2022factorizable}. See also Section~2.2 of
Tasche~\cite{tasche2022class} for a detailed reconciliation of mainstream machine learning
and probabilistic measure theory settings.

\begin{assumption}\label{as:setting}{\it
$(\Omega, \mathcal{M})$ is a measurable space. The \emph{source distribution}
$P$ and the \emph{target distribution} $Q$ are probability measures on $(\Omega, \mathcal{M})$.
For some positive integer $\ell \ge 2$, events $A_1, \ldots, A_\ell \in \mathcal{M}$ and
a sub-$\sigma$-algebra $\mathcal{H} \subset \mathcal{M}$ are given. The events $A_i$, $i = 1, \ldots, \ell$,
and $\mathcal{H}$ have the following properties:
\begin{itemize}
\item[(i)] $\bigcup_{i=1}^\ell A_i = \Omega$.
\item[(ii)] $A_i \cap A_j = \emptyset$,\ $i, j = 1, \ldots, \ell$, $i\neq j$.
\item[(iii)] $0 < P[A_i]$,\ $i = 1, \ldots, \ell$.
\item[(iv)] $0 < Q[A_i]$,\ $i = 1, \ldots, \ell$.
\item[(v)] $A_i \notin \mathcal{H}$,\ $i=1, \ldots, \ell$.
\end{itemize}}
\end{assumption}

$\mathcal{H}$ represents the features in a classification problem,
typically denoted $x$ or $\mathbf{x}$ in the machine learning literature. We suppose that
the set of features has been selected and fixed once for all such that in the following $\mathcal{H}$
is also fixed unless stated otherwise. Think of $\mathcal{H}$ as the data referred to by
a `long list' of explanatory variables.

The events $A_i$ can be understood as markers or labels of classes (typically denoted $y$ in machine learning)
or layers with different feature
distributions. Statistical classification means predicting the classes $A_i$ based on possibly
incomplete information as represented by $\mathcal{H}$.

Two measures $P$ and $Q$ are mentioned in Assumption~\ref{as:setting} in order to reflect
dataset shift and the resulting tasks of transfer learning and domain adaptation.
These tasks have in common that they both require inference on
the full target (test) distribution $Q$ from partially known (by observations)
$Q$ on $\mathcal{H}$ (the marginal distribution of the features) and completely known source (training) distribution
$P$.

Assumption~\ref{as:setting}, complemented by Assumption~\ref{as:cont} below, can be reconciled with the setting
of Chen et al.~\cite{chen&zaharia&Zou:SJS} in the following way:
\begin{itemize}
\item $P$ is the source distribution (`domain') $\mathbb{P}_s$ and $Q$ is the target distribution
$\mathbb{P}_t$ of Chen et al.~\cite{chen&zaharia&Zou:SJS}.
\item Chen et al.~assume that $\mathbb{P}_s$ and $\mathbb{P}_t$ have got densities
or mass functions $p_s = p_s(\mathbf{x}, y)$ and
$p_t = p_t(\mathbf{x}, y)$ with respect to an unspecified measure. In this paper, we make consequent use
of the source distribution $P$ as the common reference measure such that only a density
$\bar{h}$ of the target distribution
$Q$ with respect to the source distribution $P$ is needed for expressing the relationship of $Q$ and $P$.
\item The density $\bar{h}$ is also studied by Chen et al.~\cite{chen&zaharia&Zou:SJS}.
They call it \emph{weight function}
and denote is by $w(\mathbf{x}, y)$. In the literature, $\bar{h}$ is also called \emph{importance weight}
(see, e.g., Zhang et al.~\cite{Zhang:2013:TargetShift} and the references therein).
\item The aim of Chen et al.~\cite{chen&zaharia&Zou:SJS} in their paper
is to estimate the weight function $w(\mathbf{x}, y)$
in order to obtain full knowledge of the target distribution.
\item In this paper, we rather focus on determining the target posterior class probabilities
$Q[A_i\,|\,\mathcal{H}]$. When the target feature distribution is known, its combination with the
posterior probabilities (equivalently to having recourse to the weight function) also
provides full knowledge of the joint distribution of features and
class labels in the target domain. Then, in particular, the class prior probabilities $Q[A_i]$ can
be computed according to the law of total probability as
\begin{equation*}
Q[A_i]\ = \ E_Q\bigl[Q[A_i\,|\,\mathcal{H}]\bigr], \qquad i = 1, \ldots, \ell,
\end{equation*}
thus providing a solution to the class prior estimation problem (also called
`quantification') in the target domain.
\end{itemize}

The following two definitions introduce crucial notation for the subsequent discussion of
dataset shifts.
\begin{definition}
We denote by $\mathcal{A} = \sigma(\{A_1, \ldots, A_\ell\})$ the minimal sub-$\sigma$-algebra of
$\mathcal{M}$ containing all $A_i$,\ $i=1, \ldots, \ell$.
\end{definition}
Note that the $\sigma$-algebra $\mathcal{A}$ can be represented as
\begin{subequations}
\begin{equation}\label{eq:Asigma}
\mathcal{A} \ =\
\bigl\{\bigcup_{i=1}^\ell (A_i \cap M_i): M_1, \ldots, M_\ell \in \{\emptyset, \Omega\}\bigr\},
\end{equation}
while the minimal $\sigma$-algebra $\sigma(\mathcal{H} \cup \mathcal{A})$ that contains both $\mathcal{H}$ and
$\mathcal{A}$ can be written as
\begin{equation}
\sigma(\mathcal{H} \cup \mathcal{A})  \ =\
\bigl\{\bigcup_{i=1}^\ell (A_i \cap H_i): H_1, \ldots, H_\ell \in \mathcal{H}\bigr\}.\label{eq:Hbar}
\end{equation}
\end{subequations}

\begin{definition}\label{de:condDist}
Under Assumption~\ref{as:setting}, define the following class-conditional distributions,
by letting for $M \in \mathcal{M}$  and
$i = 1, \ldots, \ell$
\begin{equation}\label{eq:classcond}
P_i[M] = P[M\,|\,A_i]= \frac{P[A_i\cap M]}{P[A_i]}\quad \text{and}\quad Q_i[M] = Q[M\,|\,A_i]
= \frac{Q[A_i\cap M]}{Q[A_i]}.
\end{equation}
\end{definition}
In the literature, when restricted to the feature information set $\mathcal{H}$, the $P_i$ and $Q_i$
sometimes are called \emph{class-conditional feature distributions}.

The situation where $P[M] \neq Q[M]$ holds for at least one $M\in \sigma(\mathcal{H} \cup \mathcal{A})$
is called \emph{dataset shift}
(Definition~1 of Moreno-Torres et al.~\cite{MorenoTorres2012521})
and is also known as `distribution shift' or `population drift'.
Dataset shift is called ``joint distribution shift'' by
Chen et al.~\cite{chen&zaharia&Zou:SJS} in the case where the joint distribution of features and
labels is shifted in contrast
to shifts of one of the marginal distributions of the labels or features only.

The following special type of dataset shift, `sparse joint shift', was
introduced by Chen et al.~\cite{chen&zaharia&Zou:SJS}, in a slightly less general way.
\begin{definition}[Sparse Joint Shift]\label{de:SJS}
Under Assumption~\ref{as:setting}, assume that $\mathcal{F}$ is a sub-$\sigma$-algebra of
$\mathcal{H}$. Then the target distribution
$Q$ is related to the source distribution $P$ through $(\mathcal{F}, \mathcal{H})$-\emph{sparse joint shift} (SJS)
if for all $H \in \mathcal{H}$ and $i \in \{1, \ldots, \ell\}$ it holds that
\begin{equation}\label{eq:SJS}
Q_i[H\,|\,\mathcal{F}] \ = \ P_i[H\,|\,\mathcal{F}].
\end{equation}
\end{definition}
\eqref{eq:SJS} is understood in the sense that for each $H\in\mathcal{H}$ and each $i = 1, \ldots, \ell$
there is an $\mathcal{F}$-measurable random variable $Z$ such that for all $F\in \mathcal{F}$ it holds that
\begin{equation*}
P_i[F \cap H] = E_{P_i}[\mathbf{1}_F\,Z] \qquad\text{and}\qquad Q_i[F \cap H] = E_{Q_i}[\mathbf{1}_F\,Z].
\end{equation*}

\begin{remark}
Observe that if $Q$ is related to the source distribution $P$ through $(\mathcal{F}, \mathcal{H})$-SJS then
$Q$ also is related to the source distribution $P$ through $(\mathcal{F}, \mathcal{H}')$-SJS for
any sub-$\sigma$-algebra $\mathcal{H}'$ of $\mathcal{H}$ with $\mathcal{H}' \supset \mathcal{F}$.
In the following, we use the short hand $\mathcal{F}$-SJS for $(\mathcal{F}, \mathcal{H})$-SJS
whenever there is no risk of ambiguity.\hfill \qed
\end{remark}

\begin{remark}\label{rm:recon}
Definition~\ref{de:SJS} can be reconciled with the definition of sparse joint shift of Definition~1 of
Chen et al.~\cite{chen&zaharia&Zou:SJS} as follows:
\begin{itemize}
\item Consider a feature vector $\mathbf{X} = (X_1, \dots, X_d)$ mapping $\Omega$ into $\mathbb{R}^d$.
\item $\mathbf{X}$ corresponds to $\mathcal{H}$ in Assumption~\ref{as:setting} if $\mathcal{H}$
is defined as the smallest $\sigma$-algebra such that $\mathbf{X}$ is $\mathcal{H}$-measurable, i.e.\
$\mathcal{H}= \sigma(\mathbf{X})$.
\item If $I$ is a subset of $\{1, \ldots, d\}$ with at most $m$ elements, then choosing
$\mathcal{F} = \sigma(X_i,\, i \in I)$
in Definition~\ref{de:SJS} replicates the $m$-SJS definition of
Chen et al.~\cite{chen&zaharia&Zou:SJS} if Proposition~\ref{pr:properties}~(iii) below is taken into account with
\begin{equation*}
\mathcal{G}\ =\ \sigma(X_i,\, i \in \{1, \ldots, d\}\setminus I).
\end{equation*}
Chen et al.~describe
the set $(X_i,\, i \in I)$ as the set of \emph{shifted features}.
\end{itemize}
However, Definition~\ref{de:SJS} allows for shifts not only of distributions of subsets of features like in
the definition of $m$-SJS of Chen et al.~\cite{chen&zaharia&Zou:SJS}
but also for distribution shifts of functions of the features. For instance,
in Definition~\ref{de:SJS}, $\mathcal{F}$ may equal $\sigma(\Vert \mathbf{X}\Vert)$ where $\Vert \cdot \Vert$
denotes the usual Euclidean norm $\Vert \mathbf{X}\Vert = \sqrt{\sum_{i=1}^d X_i^2}$ of the feature vector.\hfill \qed
\end{remark}
\begin{subequations}
Note that \eqref{eq:SJS} is equivalent to
\begin{equation}\label{eq:eqSJS}
Q\bigl[H\,|\,\sigma(\mathcal{F}\cup\mathcal{A})\bigr]\ =\ P\bigl[H\,|\,\sigma(\mathcal{F}\cup\mathcal{A})\bigr]
\end{equation}
since thanks to the generalized Bayes formula (Theorem~10.8 of Klebaner~\cite{Klebaner}) we have
$\mu_i[H\,|\,\mathcal{F}] = \frac{\mu[H\cap A_i\,|\,\mathcal{F}]}{\mu[A_i\,|\,\mathcal{F}]}$ and therefore
\begin{equation}\label{eq:genBayes}
\sum_{i=1}^\ell \mathbf{1}_{A_i}\, \mu_i[H\,|\,\mathcal{F}] \ = \
\mu\bigl[H\,|\,\sigma(\mathcal{F}\cup\mathcal{A})\bigr],
\end{equation}
where $\mu$ is $P$ or $Q$.
\end{subequations}

Consider the following two special cases in Definition~\ref{de:SJS}:
\begin{itemize}
\item $\mathcal{F} = \mathcal{H}$. Then \eqref{eq:SJS} is equivalent to $\mathbf{1}_H = \mathbf{1}_H$ such
that under Assumption~\ref{as:setting} each target distribution $Q$ is related to the source
distribution $P$ through $\mathcal{H}$-SJS. Hence,
Definition~\ref{de:SJS} is not particularly meaningful in this case but it does no harm to include it
in the definition.
\item $\mathcal{F} = \{\emptyset, \Omega\}$. Then \eqref{eq:SJS} is equivalent to
\begin{equation*}
Q_i[H] \ = \ P_i[H], \quad \text{for}\ i=1, \ldots, \ell \ \text{and}\ H \in \mathcal{H}.
\end{equation*}
Hence $\{\emptyset, \Omega\}$-SJS is nothing else but prior probability shift in the sense of
Definition~3 of Moreno-Torres et al.~\cite{MorenoTorres2012521}.
\end{itemize}

If $Q$ and $P$ from Assumption~\ref{as:setting} are mutually singular, i.e.\ there is
an event $\Omega' \in \mathcal{M}$ such that $P[\Omega'] = 1$ and $Q[\Omega'] = 0$, then
\eqref{eq:SJS} holds true for any $\mathcal{F} \subset \mathcal{H}$ because conditional
probabilities are uniquely defined only up to events of probability $0$. Thus it makes
sense to exclude this case by an additional assumption of absolute continuity (as implicitly also did
Chen et al.~\cite{chen&zaharia&Zou:SJS}) which basically means that impossible events under the source distribution
are also impossible under the target distribution.
With this natural assumption,
important properties of SJS can be proved. See Appendix~\ref{se:notation} for
explanations of the notation involved.

\begin{assumption}\label{as:cont}{\it
Assumption~\ref{as:setting} holds, and $Q$ is absolutely continuous with respect to $P$ on
$\sigma(\mathcal{H} \cup \mathcal{A})$, i.e.,
\begin{equation*}
Q|\sigma(\mathcal{H} \cup \mathcal{A}) \ \ll\  P|\sigma(\mathcal{H} \cup \mathcal{A}),
\end{equation*}
where $\mu|\sigma(\mathcal{H} \cup \mathcal{A})$ stands for the measure $\mu \in\{P, Q\}$ with domain restricted to
$\sigma(\mathcal{H} \cup \mathcal{A})$.}
\end{assumption}

Why requesting $Q \ll P$ instead of $P \ll \mu$ and $Q\ll \mu$ as is common in the machine learning
literature, for instance with $\mu =$ Lebesgue measure? Doing so has a couple of advantages:
\begin{itemize}
\item The notation is simplified.
\item $Q \ll P$ is implied by $P \ll \mu$ and $Q\ll \mu$
if $\mu\left[\frac{d P}{d \mu} =0, \frac{d Q}{d \mu}>0\right]=0$.
This is a natural assumption, see the discussion in Tasche~\cite{tasche2022factorizable}.
\item Special mathematical tools are available, e.g.\ the Radon-Nikodym theorem or the conditional
expectation operator for determining densities on sub-$\sigma$-algebras.
\end{itemize}

In practical applications, under Assumption~\ref{as:cont}, the density $\bar{h}=\frac{d Q}{d P}$
must be estimated as density
ratio $\frac{d Q}{d \mu} / \frac{d P}{d \mu}$. For literature on this, see
Sugiyama et al.~\cite{sugiyama2012density} and
references therein.

The following proposition lists some useful elementary properties of SJS under the absolute continuity assumption.
\begin{subequations}
\begin{proposition}\label{pr:properties}
Under Assumption~\ref{as:cont}, suppose that $\mathcal{F}$ is a sub-$\sigma$-algebra of
$\mathcal{H}$. Then the following three statements are equivalent:
\begin{itemize}
\item[(i)] $Q$ is related to $P$ through $\mathcal{F}$-SJS in the sense of Definition~\ref{de:SJS}.
\item[(ii)] For each
$\mathcal{H}$-measurable non-negative random variable $X$ it holds that
\begin{equation}\label{eq:X}
E_{Q_i}[X\,|\,\mathcal{F}] \ = \ E_{P_i}[X\,|\,\mathcal{F}], \qquad i = 1, \ldots, \ell.
\end{equation}
\item[(iii)] There is a sub-$\sigma$-algebra $\mathcal{G}$ of
$\mathcal{H}$ such that $\mathcal{H} = \sigma(\mathcal{F}\cup\mathcal{G})$ and
\begin{equation}\label{eq:restrictSJS}
Q_i[G\,|\,\mathcal{F}] \ = \ P_i[G\,|\,\mathcal{F}], \qquad\text{for all}\
G\in \mathcal{G}\ \text{and}\ i = 1, \ldots, \ell.
\end{equation}
\end{itemize}
\end{proposition}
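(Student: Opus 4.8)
The plan is to establish the cycle (i) $\Rightarrow$ (ii) $\Rightarrow$ (iii) $\Rightarrow$ (i). Two of these arrows are routine; the genuine content lies in recovering the full identity on $\mathcal{H}$ from its validity on a generating family, for which Dynkin's theorem is the natural tool. Throughout I would read \eqref{eq:SJS} in the ``common version'' sense made explicit after Definition~\ref{de:SJS}, namely that a single $\mathcal{F}$-measurable random variable serves simultaneously as a version of the conditional expectation under both $P_i$ and $Q_i$; the absolute continuity $Q_i \ll P_i$ inherited from Assumption~\ref{as:cont} (so that $P_i$-null sets are $Q_i$-null) is what makes this bookkeeping legitimate.

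For (i) $\Rightarrow$ (ii) I would run the standard approximation ladder. Statement (i) is exactly (ii) for indicators $X = \mathbf{1}_H$; linearity of conditional expectation extends it to non-negative simple functions, and the conditional monotone convergence theorem lifts it to every non-negative $\mathcal{H}$-measurable $X$ via a pointwise increasing simple approximation, with the common version of each approximant passing to the limit under both measures. The arrow (ii) $\Rightarrow$ (iii) is immediate on taking $\mathcal{G} = \mathcal{H}$: since $\mathcal{F} \subset \mathcal{H}$ we have $\sigma(\mathcal{F} \cup \mathcal{G}) = \mathcal{H}$, and applying (ii) to $X = \mathbf{1}_G$ yields \eqref{eq:restrictSJS}.

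The crux is (iii) $\Rightarrow$ (i). Fixing $i$, I would introduce the collection $\mathcal{D}$ of all $H \in \mathcal{H}$ satisfying $Q_i[H\,|\,\mathcal{F}] = P_i[H\,|\,\mathcal{F}]$ in the common-version sense, and show $\mathcal{D} = \mathcal{H}$. First, the family $\mathcal{C} = \{F \cap G : F \in \mathcal{F},\, G \in \mathcal{G}\}$ is a $\pi$-system (closed under intersection because $\mathcal{F}$ and $\mathcal{G}$ are $\sigma$-algebras) that generates $\sigma(\mathcal{F} \cup \mathcal{G}) = \mathcal{H}$ (take $G = \Omega$ or $F = \Omega$ to recover $\mathcal{F}$ and $\mathcal{G}$). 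Each element lies in $\mathcal{D}$: by ``taking out what is $\mathcal{F}$-measurable'', $Q_i[F \cap G\,|\,\mathcal{F}] = \mathbf{1}_F\, Q_i[G\,|\,\mathcal{F}]$ and likewise for $P_i$, so equality follows from \eqref{eq:restrictSJS}. Second, $\mathcal{D}$ is a $\lambda$-system: $\Omega \in \mathcal{D}$ trivially, closure under complementation uses $Q_i[H^c\,|\,\mathcal{F}] = 1 - Q_i[H\,|\,\mathcal{F}]$, and closure under increasing unions follows from conditional monotone convergence applied to both $P_i$ and $Q_i$; since countably many almost-sure identities combine into one, the common-version property survives each operation. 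Dynkin's theorem then gives $\mathcal{H} = \sigma(\mathcal{C}) \subset \mathcal{D}$, which is (i).

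I expect the main obstacle to be the careful treatment of the almost-sure exceptional sets rather than any single computation: one must ensure the common-version reading of \eqref{eq:SJS} is preserved both through the monotone limit in (i) $\Rightarrow$ (ii) and through the Dynkin closure in (iii) $\Rightarrow$ (i). This is precisely where absolute continuity $Q \ll P$ and the countability of all operations involved are essential.
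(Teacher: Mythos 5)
Your proof is correct, and its global architecture coincides with the paper's: both reduce everything to trivial arrows plus one Dynkin $\pi$-$\lambda$ argument on the $\pi$-system $\mathfrak{P}=\{F\cap G: F\in\mathcal{F},\,G\in\mathcal{G}\}$ and the indicator--simple--monotone-convergence ladder (the paper runs the cycle (i) $\Rightarrow$ (iii) $\Rightarrow$ (ii) $\Rightarrow$ (i), you run (i) $\Rightarrow$ (ii) $\Rightarrow$ (iii) $\Rightarrow$ (i); same content, permuted). Where you genuinely diverge is the $\pi$-system step. The paper verifies \eqref{eq:SJS} for $H=F\cap G$ by transferring $Q_i$-expectations into $P_i$-expectations through the $\mathcal{F}$-measurable density $f_i=E_{P_i}[h_i\,|\,\mathcal{F}]$ of $Q_i$ with respect to $P_i$, which leans on Assumption~\ref{as:cont} via Lemma~1 of \citet{tasche2022factorizable}. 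You instead note that if $Z_G$ is a common version for $G$, then $\mathbf{1}_F\,Z_G$ is a common version for $F\cap G$ -- a pull-out argument that needs no density and, since all your operations on common versions (sums, $1-Z$, monotone limits) are exact integral identities combined with a.s.-monotonicity under each of $P_i$ and $Q_i$ separately, it in fact shows that (iii) $\Rightarrow$ (i) does not require absolute continuity at all; this is cleaner than the paper's computation. One repair is needed, though: the closure properties you verify for $\mathcal{D}$ -- containing $\Omega$, closed under complements and under increasing unions -- do not by themselves make $\mathcal{D}$ a $\lambda$-system in the sense of \citet[p.~36]{billingsley1986probability}, which is the definition the paper's Dynkin argument invokes; that definition demands closure under countable \emph{disjoint} unions, and complement plus increasing-union closure does not imply it for an abstract class (a six-element counterexample exists). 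For your $\mathcal{D}$ the missing check is immediate -- if $Z_A$ and $Z_B$ are common versions for disjoint $A$ and $B$, then $Z_A+Z_B$ is a common version for $A\cup B$, and countable disjoint unions then follow with monotone convergence -- so the omission is cosmetic, but it should be stated before appealing to Dynkin's theorem.
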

\end{subequations}

See Appendix~\ref{se:proofs} for a proof of Proposition~\ref{pr:properties}.

\section{Analyses of sparse joint shift}
\label{se:analyses}

In the following, we assume that the target distribution $Q$ is absolutely continuous with respect
to $P$. By allowing us to take recourse to results from Tasche~\cite{tasche2022factorizable}, this assumption
facilitates thorough analyses of SJS.

\subsection{Characterising SJS}

By Lemma~1 of Tasche~\cite{tasche2022factorizable}, under Assumption~\ref{as:cont} the target
conditional feature distributions
$Q_i|\mathcal{H}$, $i=1, \ldots,\ell$, are absolutely continuous with respect to the source conditional feature
distributions $P_i|\mathcal{H}$, $i=1, \ldots,\ell$. For the respective densities, we write
\begin{subequations}
\begin{equation}\label{eq:hi}
h_i \ =\ \frac{d Q_i|\mathcal{H}}{d P_i|\mathcal{H}}, \qquad    i = 1, \ldots, \ell.
\end{equation}

By Theorem~1 of Tasche~\cite{tasche2022factorizable}, under Assumption~\ref{as:cont} the density
$\bar{h}$ of $Q$ with
respect to $P$ on the sub-$\sigma$-algebra $\sigma(\mathcal{H} \cup \mathcal{A})$ of $\mathcal{M}$ can be represented as
\begin{equation}\label{eq:form}
\bar{h} \ = \ \sum_{i=1}^\ell h_i\,\frac{Q[A_i]}{P[A_i]}\,\mathbf{1}_{A_i}.
\end{equation}
\end{subequations}

\begin{theorem}[Equivalent conditions for SJS]\label{th:eqSJS}
Under Assumption~\ref{as:cont}, let $\mathcal{F}$  be a sub-$\sigma$-algebra of
$\mathcal{H}$. Then the following three
statements are equivalent:
\begin{itemize}
\item[(i)] The target distribution
$Q$ is related to the source distribution $P$ through $\mathcal{F}$-SJS.
\item[(ii)] For each $i=1,\ldots, \ell$, there exists an $\mathcal{F}$-measurable density $f_i$ of
the target conditional feature distribution $Q_i$ with respect to the source conditional feature
distribution $P_i$ on $\mathcal{H}$.
\item[(iii)] There exists a $\sigma(\mathcal{F}\cup\mathcal{A})$-measurable density $\bar{f}$ of
$Q$ with respect to $P$ on  $\sigma(\mathcal{H}\cup\mathcal{A})$.
\end{itemize}
Moreover, if $Q$ is related to the source distribution $P$ through $\mathcal{F}$-SJS then
the density $\bar{f}$ mentioned in (iii) can be represented as
\begin{equation}\label{eq:SJSdens}
\bar{f} \ = \ \sum_{i=1}^\ell f_i\,\frac{Q[A_i]}{P[A_i]}\,\mathbf{1}_{A_i},
\end{equation}
with $f_i$, $i = 1, \ldots, \ell$, as in (ii).
\end{theorem}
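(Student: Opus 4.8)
The plan is to prove the chain of implications $\text{(i)}\Rightarrow\text{(ii)}\Rightarrow\text{(iii)}\Rightarrow\text{(i)}$, with the explicit representation \eqref{eq:SJSdens} emerging for free from the step $\text{(ii)}\Rightarrow\text{(iii)}$. The three ingredients I would lean on are already in place just above the statement: the conditional densities $h_i$ from \eqref{eq:hi}, the factorisation $\bar h = \sum_{i=1}^\ell h_i \frac{Q[A_i]}{P[A_i]}\mathbf{1}_{A_i}$ from \eqref{eq:form}, and the reformulation \eqref{eq:eqSJS} of SJS as an identity of conditional probabilities on $\sigma(\mathcal{F}\cup\mathcal{A})$. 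The recurring device is the tower property of conditional expectation together with the generalized Bayes formula already invoked for \eqref{eq:genBayes}.

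For $\text{(i)}\Rightarrow\text{(ii)}$ I would propose the explicit candidate $f_i := E_{P_i}[h_i\,|\,\mathcal{F}]$, which is $\mathcal{F}$-measurable by construction, and show it is a density of $Q_i$ with respect to $P_i$ on $\mathcal{H}$. Testing against $\mathbf{1}_F$, $F\in\mathcal{F}$, first shows that $f_i$ is the density of $Q_i|\mathcal{F}$ with respect to $P_i|\mathcal{F}$. Then, for arbitrary $H\in\mathcal{H}$, pulling the $\mathcal{F}$-measurable factor $f_i$ through the conditional expectation gives
\begin{equation*}
E_{P_i}[\mathbf{1}_H\,f_i] \ = \ E_{P_i}\bigl[f_i\,P_i[H\,|\,\mathcal{F}]\bigr] \ = \ E_{P_i}\bigl[f_i\,Q_i[H\,|\,\mathcal{F}]\bigr] \ = \ E_{Q_i}\bigl[Q_i[H\,|\,\mathcal{F}]\bigr] \ = \ Q_i[H],
\end{equation*}
where the second equality is the SJS identity \eqref{eq:SJS} and the third uses that $f_i$ is the $\mathcal{F}$-density of $Q_i$. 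Hence $f_i$ is a version of $h_i$, establishing (ii).

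For $\text{(ii)}\Rightarrow\text{(iii)}$ I would substitute the $\mathcal{F}$-measurable $f_i$ of (ii) into \eqref{eq:form}: since each $f_i=h_i$ holds $P_i$-almost surely and the $A_i$ partition $\Omega$, the random variable $\bar f := \sum_{i=1}^\ell f_i\,\frac{Q[A_i]}{P[A_i]}\,\mathbf{1}_{A_i}$ agrees with $\bar h$ $P$-almost surely and is $\sigma(\mathcal{F}\cup\mathcal{A})$-measurable, each summand $f_i\mathbf{1}_{A_i}$ being a product of an $\mathcal{F}$- and an $\mathcal{A}$-measurable factor. This is exactly (iii), and it simultaneously yields the asserted formula \eqref{eq:SJSdens}. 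For the closing step $\text{(iii)}\Rightarrow\text{(i)}$, writing $\mathcal{K}=\sigma(\mathcal{F}\cup\mathcal{A})$, I would verify directly that $P[H\,|\,\mathcal{K}]$ also serves as a version of $Q[H\,|\,\mathcal{K}]$: for every $K\in\mathcal{K}$, using that $\bar f$ is $\mathcal{K}$-measurable and pulling $\bar f\,\mathbf{1}_K$ through the conditional expectation,
\begin{equation*}
E_Q\bigl[\mathbf{1}_K\,P[H\,|\,\mathcal{K}]\bigr] \ = \ E_P\bigl[\bar f\,\mathbf{1}_K\,E_P[\mathbf{1}_H\,|\,\mathcal{K}]\bigr] \ = \ E_P[\bar f\,\mathbf{1}_{K\cap H}] \ = \ Q[K\cap H].
\end{equation*}
This is \eqref{eq:eqSJS}, which is equivalent to SJS, closing the cycle; note that this route avoids any appeal to Dynkin's theorem.

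The main obstacle I anticipate is the $\text{(i)}\Rightarrow\text{(ii)}$ step: it requires correctly guessing the conditioned density $f_i=E_{P_i}[h_i\,|\,\mathcal{F}]$ as the right candidate and then carefully switching between the measures $P_i$, $Q_i$ and their restrictions to $\mathcal{F}$ and to $\mathcal{H}$ while invoking the SJS identity at precisely the right place. The remaining bookkeeping is routine but must be handled with care, namely tracking that ``$\mathcal{F}$-measurable version'' and the almost-sure identifications pass correctly between $P$, the class-conditional $P_i$, and the restriction to each $A_i$, since the $h_i$ and $f_i$ are defined only up to the relevant null sets.
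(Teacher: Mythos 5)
Your proof is correct and takes essentially the same route as the paper: the paper simply factors your two conditional-expectation computations (the candidate $f_i = E_{P_i}[h_i\,|\,\mathcal{F}]$ for (i)$\Rightarrow$(ii), and the verification that a $\sigma(\mathcal{F}\cup\mathcal{A})$-measurable density forces $Q\bigl[H\,|\,\sigma(\mathcal{F}\cup\mathcal{A})\bigr] = P\bigl[H\,|\,\sigma(\mathcal{F}\cup\mathcal{A})\bigr]$ for (iii)$\Rightarrow$(i)) into its Lemma~\ref{le:measurable} and invokes that lemma twice, whereas you inline the identical calculations. Your middle step (ii)$\Rightarrow$(iii) via the factorisation \eqref{eq:form}, which yields \eqref{eq:SJSdens} for free, coincides with the paper's as well.
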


Theorem~\ref{th:eqSJS} is a rather straightforward consequence of the following observation.

\begin{lemma}\label{le:measurable}
Let $Q$ and $P$ be probability measures on the measurable space $(\Omega, \mathcal{M})$. Suppose that
$Q$ is absolutely continuous with respect to $P$ on $\mathcal{M}$ and that
$\mathcal{G}$ is a sub-$\sigma$-algebra of $\mathcal{M}$. Then the two following statements are
equivalent:
\begin{itemize}
\item[(i)] For all $M\in \mathcal{M}$, it holds that $P[M\,|\,\mathcal{G}] = Q[M\,|\,\mathcal{G}]$.
\item[(ii)] There exists a $\mathcal{G}$-measurable density $g$ of $Q$ with respect to $P$ on $\mathcal{M}$.
\end{itemize}
If $\varphi$ is any $\mathcal{M}$-measurable density of $Q$ with respect to $P$ on $\mathcal{M}$ and
(i) or (ii) are true, then $g=E_P[\varphi\,|\,\mathcal{G}]$ is a density of $Q$ with respect to $P$ on $\mathcal{M}$.
\end{lemma}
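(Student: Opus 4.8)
The plan is to reduce everything to the single Radon--Nikodym density $\varphi := \frac{dQ}{dP}$ on $\mathcal{M}$, which exists because $Q$ is absolutely continuous with respect to $P$ on $\mathcal{M}$, and to show that the one $\mathcal{G}$-measurable candidate $g := E_P[\varphi\,|\,\mathcal{G}]$ does all the work. The whole argument rests on two standard facts: the tower/pull-out property of conditional expectation (for any $\mathcal{G}$-measurable $W$ one has $E_P[\varphi\,W] = E_P[E_P[\varphi\,|\,\mathcal{G}]\,W]$), and the change-of-measure rule $E_Q[Y] = E_P[\varphi\,Y]$ valid for every non-negative $\mathcal{M}$-measurable $Y$.

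For the direction (ii) $\Rightarrow$ (i), I would assume $g$ is some $\mathcal{G}$-measurable density of $Q$ with respect to $P$, fix $M \in \mathcal{M}$, and verify that the $P$-version $V := P[M\,|\,\mathcal{G}] = E_P[\mathbf{1}_M\,|\,\mathcal{G}]$ simultaneously serves as a $Q$-conditional probability of $M$ given $\mathcal{G}$. Concretely, for every $G \in \mathcal{G}$,
\begin{equation*}
E_Q[\mathbf{1}_G\,V] \ = \ E_P[\mathbf{1}_G\,g\,V] \ = \ E_P[\mathbf{1}_G\,g\,\mathbf{1}_M] \ = \ E_Q[\mathbf{1}_G\,\mathbf{1}_M] \ = \ Q[M\cap G],
\end{equation*}
where the outer equalities use that $g$ is the density, and the middle equality uses that $\mathbf{1}_G\,g$ is $\mathcal{G}$-measurable together with the defining property of $V$ (pull-out plus tower). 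Since $V$ is $\mathcal{G}$-measurable, this exhibits $V$ as a common version of $P[M\,|\,\mathcal{G}]$ and $Q[M\,|\,\mathcal{G}]$, which is exactly statement (i).

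For (i) $\Rightarrow$ (ii), together with the closing representation, I would take $g := E_P[\varphi\,|\,\mathcal{G}]$ and check that it is a density of $Q$ with respect to $P$ on all of $\mathcal{M}$, i.e. $E_P[\mathbf{1}_M\,g] = Q[M]$ for every $M \in \mathcal{M}$. The computation chains the two identities with hypothesis (i):
\begin{equation*}
E_P[\mathbf{1}_M\,g] \ = \ E_P\bigl[g\,P[M\,|\,\mathcal{G}]\bigr] \ = \ E_P\bigl[g\,Q[M\,|\,\mathcal{G}]\bigr] \ = \ E_P\bigl[\varphi\,Q[M\,|\,\mathcal{G}]\bigr] \ = \ E_Q\bigl[Q[M\,|\,\mathcal{G}]\bigr] \ = \ Q[M],
\end{equation*}
where the first equality is tower/pull-out (moving the $\mathcal{G}$-measurable $g$ inside the conditional expectation), the second substitutes (i), the third uses that $Q[M\,|\,\mathcal{G}]$ is $\mathcal{G}$-measurable so that $E_P[\varphi\,\cdot\,] = E_P[g\,\cdot\,]$, the fourth is the change-of-measure rule, and the last is the tower property under $Q$. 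This simultaneously establishes (ii) and identifies the density as $E_P[\varphi\,|\,\mathcal{G}]$; since (ii) $\Rightarrow$ (i) has already been shown, the representation $g = E_P[\varphi\,|\,\mathcal{G}]$ is valid whenever either (i) or (ii) holds, which is the final assertion.

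The step I expect to require the most care is the bookkeeping of null sets and versions rather than any deep idea: conditional probabilities are defined only $P$-almost surely (respectively $Q$-almost surely), so I must justify that a $P$-version of $P[M\,|\,\mathcal{G}]$ may legitimately be read as a $Q$-version of $Q[M\,|\,\mathcal{G}]$ and, conversely, that substituting $Q[M\,|\,\mathcal{G}]$ for $P[M\,|\,\mathcal{G}]$ inside an $E_P$-integral is harmless. This is precisely where $Q\ll P$ is essential: every $P$-null set is $Q$-null, so a $P$-almost-sure identity automatically holds $Q$-almost surely, and the relevant integrands are weighted by $\varphi$ (equivalently by $g$), which annihilates any discrepancy occurring off the support of $Q$.
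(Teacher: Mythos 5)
Your proposal is correct and follows essentially the same route as the paper's own proof: the direction (i) $\Rightarrow$ (ii) is the paper's chain of equalities $Q[M] = E_Q\bigl[Q[M\,|\,\mathcal{G}]\bigr] = E_P\bigl[\varphi\,Q[M\,|\,\mathcal{G}]\bigr] = E_P\bigl[\varphi\,P[M\,|\,\mathcal{G}]\bigr] = E_P\bigl[g\,P[M\,|\,\mathcal{G}]\bigr] = E_P[g\,\mathbf{1}_M]$ written in reverse order, and your verification of (ii) $\Rightarrow$ (i) via $E_Q\bigl[\mathbf{1}_G\,P[M\,|\,\mathcal{G}]\bigr] = Q[G\cap M]$ is word-for-word the paper's computation. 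Your closing remarks on versions and null sets (using $Q \ll P$ and the fact that the integrands are weighted by $\varphi$) are a sound, slightly more explicit treatment of bookkeeping the paper leaves implicit.
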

\begin{proof}[\textbf{Proof of Lemma~\ref{le:measurable}}]
Assume that $P[M\,|\,\mathcal{G}] = Q[M\,|\,\mathcal{G}]$ is true for all $M\in \mathcal{M}$. Choose any
density $\varphi$ of $Q$ with respect to $P$ on $\mathcal{M}$ and define $g=E_P[\varphi\,|\,\mathcal{G}]$.
For any $M \in \mathcal{M}$ then we obtain
\begin{multline*}
Q[M] = E_Q\bigl[Q[M\,|\,\mathcal{G}]\bigr] = E_P\bigl[\varphi\,Q[M\,|\,\mathcal{G}]\bigr]
= \\ E_P\bigl[\varphi\,P[M\,|\,\mathcal{G}]\bigr] = E_P\bigl[g\,P[M\,|\,\mathcal{G}]\bigr] =
E_P[g\,\mathbf{1}_M].
\end{multline*}
This proves (i) $\Rightarrow$ (ii) and the statement on how to choose $g$.

Assume now that (ii) is true. Fix $M\in \mathcal{M}$. For any $G\in\mathcal{G}$ then we obtain
\begin{equation*}
E_Q\bigl[\mathbf{1}_G\,P[M\,|\,\mathcal{G}]\bigr] = E_P\bigl[g\,\mathbf{1}_G\,P[M\,|\,\mathcal{G}]\bigr] =
E_P[g\,\mathbf{1}_{G\cap M}] = Q[G\cap M].
\end{equation*}
By the definition of conditional probability, this implies (ii) $\Rightarrow$ (i).
\end{proof}

\begin{proof}[\textbf{Proof of Theorem~\ref{th:eqSJS}}]
Show (i) $\Rightarrow$ (ii): This follows by applying Lemma~\ref{le:measurable} to each of the
pairs  $Q_i$ and $P_i$ from the definition of $\mathcal{F}$-SJS.

Show (ii) $\Rightarrow$ (iii): For each $i=1,\ldots, \ell$, choose an $\mathcal{F}$-measurable density $f_i$
of $Q_i$ with respect to $P_i$ on $\mathcal{H}$. Then, by representation \eqref{eq:form}, $\bar{f}$ defined
by \eqref{eq:SJSdens}
is a $\sigma(\mathcal{F}\cup\mathcal{A})$-measurable density of $Q$ with respect to $P$ on $\sigma(\mathcal{H}
    \cup\mathcal{A})$.

Show (iii) $\Rightarrow$ (i): Follows from the equivalence of \eqref{eq:SJS} for all $i = 1, \ldots, \ell$ and
\eqref{eq:eqSJS}, combined with Lemma~\ref{le:measurable}.
\end{proof}

Note that SJS does not imply in general that there exists an $\mathcal{F}$-measurable density $f$ of
$Q$ with respect to $P$ on $\mathcal{H}$. We show in Section~\ref{se:SJSvsCov} below that subject to a mild
condition
the existence of such a density $f$ is implied by the joint presence of SJS and covariate shift.
Moreover, if SJS holds and there is an $\mathcal{F}$-measurable density $f$ of
$Q$ with respect to $P$ on $\mathcal{H}$ then $Q$ and $P$ are also related through covariate
shift, again under a mild condition.

As formally stated in the following corollary, Theorem~\ref{th:eqSJS} implies that SJS is a nested property
in the sense that SJS for a set of features  implies SJS for all sets of features
that include the first set. Choose $\mathcal{F}=\sigma(X_i,\, i\in I)$ and $\mathcal{F}'=\sigma(X_i,\, i\in I')$
for features $X_i$ and index sets $I \subset I'$ to see this.
In particular, prior probability shift implies SJS for any set of features.

\begin{corollary}\label{co:all}
Under Assumption~\ref{as:cont}, suppose that $\mathcal{F}$ is a sub-$\sigma$-algebra of
$\mathcal{H}$.
 If $Q$ is related to $P$ through $\mathcal{F}$-SJS and $\mathcal{F}' \supset
    \mathcal{F}$ is another sub-$\sigma$-algebra of $\mathcal{H}$ then
    $Q$ also is related to $P$ through $\mathcal{F}'$-SJS.
\end{corollary}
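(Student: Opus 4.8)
The plan is to reduce the statement entirely to the density characterisation of SJS provided by Theorem~\ref{th:eqSJS}, exploiting the elementary fact that measurability with respect to a $\sigma$-algebra is preserved when the $\sigma$-algebra is enlarged. The equivalence of (i) and (ii) in that theorem converts the somewhat awkward conditional-probability condition \eqref{eq:SJS} into the statement ``there exist $\mathcal{F}$-measurable densities $f_i$'', and such a statement transfers immediately from $\mathcal{F}$ to any larger $\mathcal{F}'$.

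First I would invoke the implication (i)~$\Rightarrow$~(ii) of Theorem~\ref{th:eqSJS} for the $\sigma$-algebra $\mathcal{F}$. Since by hypothesis $Q$ is related to $P$ through $\mathcal{F}$-SJS, this yields, for each $i = 1, \ldots, \ell$, an $\mathcal{F}$-measurable density $f_i$ of the target class-conditional feature distribution $Q_i$ with respect to $P_i$ on $\mathcal{H}$; concretely $f_i$ is a version of $\frac{d Q_i|\mathcal{H}}{d P_i|\mathcal{H}}$ that happens to be measurable with respect to the smaller $\sigma$-algebra $\mathcal{F}$.

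The key observation is then purely measure-theoretic: because $\mathcal{F} \subset \mathcal{F}'$, every $\mathcal{F}$-measurable function is also $\mathcal{F}'$-measurable. Hence each $f_i$ obtained above is automatically $\mathcal{F}'$-measurable, and it remains a density of $Q_i$ with respect to $P_i$ on $\mathcal{H}$ (the density itself is unchanged; only the $\sigma$-algebra with respect to which we certify measurability has grown). Thus statement (ii) of Theorem~\ref{th:eqSJS} holds with $\mathcal{F}'$ in place of $\mathcal{F}$. Applying the reverse implication (ii)~$\Rightarrow$~(i) of the theorem, now legitimately invoked for $\mathcal{F}'$ since $\mathcal{F}'$ is by assumption a sub-$\sigma$-algebra of $\mathcal{H}$, I conclude that $Q$ is related to $P$ through $\mathcal{F}'$-SJS, which is the claim.

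There is no substantial obstacle here; the only points requiring care are bookkeeping rather than mathematics. Specifically, one must verify that the hypotheses needed to apply Theorem~\ref{th:eqSJS} to $\mathcal{F}'$ are met, namely that $\mathcal{F}'$ is indeed a sub-$\sigma$-algebra of $\mathcal{H}$ (guaranteed by the statement of the corollary) and that Assumption~\ref{as:cont} is in force throughout (it is). The last sentence of the corollary's motivating discussion, that prior probability shift implies $\mathcal{F}$-SJS for every $\mathcal{F}$, then falls out as the special case $\mathcal{F} = \{\emptyset, \Omega\}$.
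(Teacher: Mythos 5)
Your proposal is correct and coincides with the paper's own proof: both extract $\mathcal{F}$-measurable densities $f_i$ via the implication (i)\,$\Rightarrow$\,(ii) of Theorem~\ref{th:eqSJS}, observe that these are automatically $\mathcal{F}'$-measurable since $\mathcal{F} \subset \mathcal{F}'$, and conclude by the reverse implication (ii)\,$\Rightarrow$\,(i) applied to $\mathcal{F}'$. The additional bookkeeping remarks you include (that $\mathcal{F}'\subset\mathcal{H}$ and Assumption~\ref{as:cont} remain in force) are sound and consistent with the paper.
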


\begin{proof}[\textbf{Proof of Corollary~\ref{co:all}}]
By Theorem~\ref{th:eqSJS}~(ii),
for each $i=1,\ldots, \ell$, there exists an $\mathcal{F}$-measurable density $f_i$ of
$Q_i$ with respect to $P_i$ on $\mathcal{H}$. For $\mathcal{F}' \supset
    \mathcal{F}$, each $f_i$ is $\mathcal{F}'$-measurable, too. Hence, by the implication
    `(ii) $\Rightarrow$ (i)' of Theorem~\ref{th:eqSJS}, $Q$ is related to $P$
    through  $\mathcal{F}'$-SJS.
\end{proof}
\begin{remark}\label{rm:LeaveOneOut}
Consider the setting of the paper of Chen et al.~\cite{chen&zaharia&Zou:SJS} as summarised in
Remark~\ref{rm:recon} above.
If $\mathcal{F} = \sigma(X_i, i\in I)$ for some $I \subsetneq \{1, \ldots, d\}$, then there is some
$j \in \{1, \ldots, d\}$ such $I \subset J = \{1, \ldots, d\}\setminus \{j\}$. It follows that
\begin{equation}
\mathcal{F}' \ = \ \sigma(X_i, i \in J)\ \supset \ \mathcal{F}.
\end{equation}
By Corollary~\ref{co:all}, therefore for fitting any $m$-SJS with $m < d$ in the sense of
Chen et al.~\cite{chen&zaharia&Zou:SJS}, it
suffices to consider index sets with $d-1$ elements, i.e.\ $(d-1)$-SJS.\hfill \qed
\end{remark}

The following Eq.~\eqref{eq:corrH} could be called \emph{conditional posterior correction formula}. On the one hand,
it generalises the scope of
the correction formula presented by Eq.~(2.4) of Saerens et al.~\cite{saerens2002adjusting} and Theorem~2 of
Elkan~\cite{Elkan01}
from prior probability shift to SJS. On the other hand, it represents a special case of the correction formula
proven in Corollary~2 of Tasche~\cite{tasche2022factorizable}.

\begin{subequations}
\begin{proposition}\label{pr:corrH}
Under Assumption~\ref{as:cont}, let $Q$ be related to $P$ through $\mathcal{F}$-SJS.
Then it $Q$-almost surely holds that
\begin{align}\label{eq:greater0}
0 & \ <\ \sum_{j=1}^\ell \frac{Q[A_j\,|\,\mathcal{F}]}{P[A_j\,|\,\mathcal{F}]} P[A_j\,|\,\mathcal{H}],
\quad \text{and}\\
Q[A_i\,|\,\mathcal{H}] & \ = \ \frac{\frac{Q[A_i\,|\,\mathcal{F}]}{P[A_i\,|\,\mathcal{F}]} P[A_i\,|\,\mathcal{H}]}
{\sum_{j=1}^\ell \frac{Q[A_j\,|\,\mathcal{F}]}{P[A_j\,|\,\mathcal{F}]} P[A_j\,|\,\mathcal{H}]}, \quad
    i = 1\,\ldots, \ell.\label{eq:corrH}
\end{align}
In particular, the right-hand side
of \eqref{eq:corrH} is $Q$-almost surely well-defined under the convention that $0/0 = 0$.
\end{proposition}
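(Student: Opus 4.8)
The plan is to reduce the statement to the explicit SJS density provided by Theorem~\ref{th:eqSJS} and then apply the generalized Bayes (change-of-measure) formula twice, conditioning once on $\mathcal{H}$ and once on $\mathcal{F}$. By Theorem~\ref{th:eqSJS}, $\mathcal{F}$-SJS yields a $\sigma(\mathcal{F}\cup\mathcal{A})$-measurable density $\bar{f}=\sum_{i=1}^\ell f_i\,\frac{Q[A_i]}{P[A_i]}\,\mathbf{1}_{A_i}$ of $Q$ with respect to $P$ on $\sigma(\mathcal{H}\cup\mathcal{A})$, where each $f_i$ is $\mathcal{F}$-measurable and hence $\mathcal{H}$-measurable. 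For any sub-$\sigma$-algebra $\mathcal{G}$ of $\sigma(\mathcal{H}\cup\mathcal{A})$, the generalized Bayes formula \citep[][Theorem~10.8]{Klebaner} gives $Q[A_i\,|\,\mathcal{G}] = E_P[\bar{f}\,\mathbf{1}_{A_i}\,|\,\mathcal{G}]\big/E_P[\bar{f}\,|\,\mathcal{G}]$ $Q$-almost surely. Inserting the explicit form of $\bar{f}$, using the disjointness of the $A_i$ to collapse $\bar{f}\,\mathbf{1}_{A_i}$, and pulling the $\mathcal{G}$-measurable factor $f_i$ out of the conditional expectation, I would obtain
\[
E_P[\bar{f}\,\mathbf{1}_{A_i}\,|\,\mathcal{G}] = f_i\,\frac{Q[A_i]}{P[A_i]}\,P[A_i\,|\,\mathcal{G}], \qquad D_{\mathcal{G}} := E_P[\bar{f}\,|\,\mathcal{G}] = \sum_{j=1}^\ell f_j\,\frac{Q[A_j]}{P[A_j]}\,P[A_j\,|\,\mathcal{G}].
\]

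The positivity in \eqref{eq:greater0} and the well-definedness rest on a single standard fact: $D_{\mathcal{G}}$ is the density of $Q|\mathcal{G}$ with respect to $P|\mathcal{G}$, and since $Q|\mathcal{G}\ll P|\mathcal{G}$ a Radon-Nikodym derivative is strictly positive $Q$-almost surely. Taking $\mathcal{G}=\mathcal{H}$ shows $D_{\mathcal{H}}>0$ and taking $\mathcal{G}=\mathcal{F}$ shows $D_{\mathcal{F}}>0$, both $Q$-almost surely. The choice $\mathcal{G}=\mathcal{H}$ already delivers the unnormalised identity $Q[A_i\,|\,\mathcal{H}] = f_i\,\frac{Q[A_i]}{P[A_i]}\,P[A_i\,|\,\mathcal{H}]\big/D_{\mathcal{H}}$, so the whole task becomes rewriting the $\mathcal{F}$-measurable weight $f_i\,\frac{Q[A_i]}{P[A_i]}$ through the ratio $Q[A_i\,|\,\mathcal{F}]\big/P[A_i\,|\,\mathcal{F}]$.

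For that I would set $\mathcal{G}=\mathcal{F}$, giving $Q[A_i\,|\,\mathcal{F}]\,D_{\mathcal{F}} = f_i\,\frac{Q[A_i]}{P[A_i]}\,P[A_i\,|\,\mathcal{F}]$. On the $\mathcal{F}$-measurable set where $P[A_i\,|\,\mathcal{F}]>0$ I can divide to obtain $f_i\,\frac{Q[A_i]}{P[A_i]} = \frac{Q[A_i\,|\,\mathcal{F}]}{P[A_i\,|\,\mathcal{F}]}\,D_{\mathcal{F}}$. The delicate part, which I expect to be the main obstacle, is the complementary set $S_i := \{P[A_i\,|\,\mathcal{F}]=0\}$, where the ratio has to be interpreted through the convention $0/0=0$. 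Here I would argue that $P[A_i\cap S_i]=0$, hence $Q[A_i\cap S_i]=0$ by $Q\ll P$, from which $P[A_i\,|\,\mathcal{H}]=0$ and (using $D_{\mathcal{F}}>0$) $Q[A_i\,|\,\mathcal{F}]=0$ hold $Q$-almost surely on $S_i$. Consequently both $f_i\,\frac{Q[A_i]}{P[A_i]}\,P[A_i\,|\,\mathcal{H}]$ and $\frac{Q[A_i\,|\,\mathcal{F}]}{P[A_i\,|\,\mathcal{F}]}\,P[A_i\,|\,\mathcal{H}]$ vanish $Q$-almost surely on $S_i$, so that the identity $f_j\,\frac{Q[A_j]}{P[A_j]}\,P[A_j\,|\,\mathcal{H}] = D_{\mathcal{F}}\,\frac{Q[A_j\,|\,\mathcal{F}]}{P[A_j\,|\,\mathcal{F}]}\,P[A_j\,|\,\mathcal{H}]$ holds $Q$-almost surely for every $j$ on all of $\Omega$.

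Summing this last identity over $j$ gives $D_{\mathcal{H}} = D_{\mathcal{F}}\,\sum_{j=1}^\ell \frac{Q[A_j\,|\,\mathcal{F}]}{P[A_j\,|\,\mathcal{F}]}\,P[A_j\,|\,\mathcal{H}]$; dividing by $D_{\mathcal{F}}>0$ and invoking $D_{\mathcal{H}}>0$ proves \eqref{eq:greater0}. Substituting the same identity into the unnormalised formula for $Q[A_i\,|\,\mathcal{H}]$ and cancelling the common strictly positive factor $D_{\mathcal{F}}$ from numerator and denominator then yields \eqref{eq:corrH}. Throughout I would work with finite versions of the densities $f_i$, so that products with vanishing conditional probabilities are genuinely zero.
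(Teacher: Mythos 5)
Your proof is correct and follows essentially the same route as the paper's: you establish the same two marginal densities (the paper's $f$ and $h$ are your $D_{\mathcal{F}}$ and $D_{\mathcal{H}}$), the same key identity $f_i\,\frac{Q[A_i]}{P[A_i]}\,P[A_i\,|\,\mathcal{H}] = D_{\mathcal{F}}\,\frac{Q[A_i\,|\,\mathcal{F}]}{P[A_i\,|\,\mathcal{F}]}\,P[A_i\,|\,\mathcal{H}]$, and the same null-set argument on $\{P[A_i\,|\,\mathcal{F}]=0\}$ (your $\bigcup_i S_i$ is exactly the paper's set $N$). The only difference is presentational: you derive the intermediate Bayes-type formulas and the $Q$-almost-sure positivity of the densities directly from Theorem~\ref{th:eqSJS} and the generalized Bayes formula, where the paper instead cites Corollaries~1 and~2 of \citet{tasche2022factorizable}.
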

\end{subequations}

See Appendix~\ref{se:proofs} for a proof of Proposition~\ref{pr:corrH}.

\subsection{Identifiability in the presence of sparse joint shift}

Identifiability of the target distribution $Q$ under SJS is described in Section~3.1 of
Chen et al.~\cite{chen&zaharia&Zou:SJS}
as follows: ``If a joint feature and label distribution matches
the target feature distribution and satisfies the $m$-SJS requirement together with the source distribution,
it has to be the target distribution.'' Chen et al.~\cite{chen&zaharia&Zou:SJS} present in their Theorem~1 a
sufficient
condition for identifiability phrased in terms of linear independence of components of the density of the joint
source distribution. An alternative to their result is provided below in Theorem~\ref{th:ident}. It is based
on the following observation.

\begin{lemma} \label{le:total}
Let $(\Omega, \mathcal{M}, \mu)$ a probability space, $\mathcal{G}$ a sub-$\sigma$-algebra of $\mathcal{M}$, and
$A_1, \ldots, A_\ell \in \mathcal{M}$ a decomposition of $\Omega$, i.e.\ properties (i) and (ii) of
Assumption~\ref{as:setting} hold. In addition, assume that $\mu(A_i) > 0$ for all $i = 1, \ldots, \ell$, and define
\begin{equation*}
\mu_i(M) \ = \ \frac{\mu(M \cap A_i)}{\mu(A_i)}, \qquad M\in\mathcal{M}, i = 1, \ldots, \ell.
\end{equation*}
Let $X_1, \ldots, X_n$ be $\mathcal{M}$-measurable non-negative random variables and define the random matrix
$R = (R_{ij})_{1 \le i \le n, 1 \le j \le \ell} = R(\mu, \mathcal{G}, n)$ by
\begin{subequations}
\begin{equation}\label{eq:defR}
R_{i j} \ = \ E_{\mu_j}[X_i\,|\,\mathcal{G}].
\end{equation}
Then it follows that
\begin{equation}\label{eq:matrix}
\begin{pmatrix}
E_\mu[X_1\,|\,\mathcal{G}] \\
\vdots \\
E_\mu[X_n\,|\,\mathcal{G}]
\end{pmatrix}
\ = \ R \times
\begin{pmatrix}
\mu[A_1\,|\,\mathcal{G}] \\
\vdots \\
\mu[A_\ell\,|\,\mathcal{G}]
\end{pmatrix}.
\end{equation}
\end{subequations}
\end{lemma}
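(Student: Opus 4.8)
The plan is to verify the matrix equation \eqref{eq:matrix} one row at a time: for each fixed $i\in\{1,\ldots,n\}$ I want to show that, $\mu$-almost surely,
\[
E_\mu[X_i\,|\,\mathcal{G}] \;=\; \sum_{j=1}^\ell E_{\mu_j}[X_i\,|\,\mathcal{G}]\,\mu[A_j\,|\,\mathcal{G}],
\]
which is exactly the $i$-th component of \eqref{eq:matrix} once the entries $R_{ij}$ are substituted by \eqref{eq:defR}. Writing $X=X_i$, I would prove this by showing directly that the right-hand side is a version of $E_\mu[X\,|\,\mathcal{G}]$, i.e.\ by checking the two defining properties of conditional expectation. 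Measurability is immediate: each $E_{\mu_j}[X\,|\,\mathcal{G}]$ is $\mathcal{G}$-measurable, each $\mu[A_j\,|\,\mathcal{G}]=E_\mu[\mathbf{1}_{A_j}\,|\,\mathcal{G}]$ is $\mathcal{G}$-measurable, and finite products and sums of such are $\mathcal{G}$-measurable.

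The real content is the averaging identity over $\mathcal{G}$-sets. Fixing $G\in\mathcal{G}$, I would evaluate $E_\mu\bigl[\mathbf{1}_G\sum_j E_{\mu_j}[X\,|\,\mathcal{G}]\,\mu[A_j\,|\,\mathcal{G}]\bigr]$ term by term, using two tools. First, the pull-out and tower properties: since $W_j:=\mathbf{1}_G\,E_{\mu_j}[X\,|\,\mathcal{G}]$ is $\mathcal{G}$-measurable and nonnegative, pairing it against $\mu[A_j\,|\,\mathcal{G}]=E_\mu[\mathbf{1}_{A_j}\,|\,\mathcal{G}]$ gives $E_\mu[W_j\,\mu[A_j\,|\,\mathcal{G}]]=E_\mu[W_j\,\mathbf{1}_{A_j}]=E_\mu\bigl[\mathbf{1}_{G\cap A_j}\,E_{\mu_j}[X\,|\,\mathcal{G}]\bigr]$. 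Second, the elementary change of measure $E_\mu[\,\cdot\,\mathbf{1}_{A_j}]=\mu(A_j)\,E_{\mu_j}[\,\cdot\,]$, which holds because $\mu_j$ has $\mu$-density $\mathbf{1}_{A_j}/\mu(A_j)$. Applying the latter turns the $\mu$-expectation into a $\mu_j$-expectation; since $\mathbf{1}_G$ is $\mathcal{G}$-measurable, the defining property of $E_{\mu_j}[\,\cdot\,|\,\mathcal{G}]$ collapses $E_{\mu_j}\bigl[\mathbf{1}_G\,E_{\mu_j}[X\,|\,\mathcal{G}]\bigr]$ to $E_{\mu_j}[\mathbf{1}_G X]$, and undoing the change of measure returns $E_\mu[\mathbf{1}_{G\cap A_j}X]$. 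Summing over $j$ and using $\sum_{j=1}^\ell\mathbf{1}_{A_j}=1$ (properties (i) and (ii) of Assumption~\ref{as:setting}) yields $E_\mu[\mathbf{1}_G X]$, which is precisely the averaging property required. Uniqueness of conditional expectation then gives the row identity $\mu$-a.s., and stacking the $n$ rows gives \eqref{eq:matrix}.

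The point that needs the most care is the bookkeeping of the reference measure. One must fix, once and for all, a $\mathcal{G}$-measurable version of each $E_{\mu_j}[X\,|\,\mathcal{G}]$ and be sure it can be inserted into $\mu$-expectations without ambiguity; this is harmless because $\mu_j\ll\mu$ (indeed $\mu(M)=0$ forces $\mu(M\cap A_j)=0$), so the $\mu$-null sets on which such a version is undetermined do not affect the identity. A minor second point is that the $X_i$ are only assumed nonnegative, so all conditional expectations take values in $[0,\infty]$; both the pull-out property and the change-of-measure relation remain valid for nonnegative integrands, so no integrability assumption is needed. As an alternative to the direct verification, one could first prove the single-index identity $E_\mu[\mathbf{1}_{A_j}X\,|\,\mathcal{G}]=E_{\mu_j}[X\,|\,\mathcal{G}]\,\mu[A_j\,|\,\mathcal{G}]$ for $X=\mathbf{1}_H$ via the generalized Bayes formula \citep[Theorem~10.8]{Klebaner}, extend it to simple and then nonnegative measurable $X$ by conditional monotone convergence, and sum over $j$; but that route must reconcile $\mu$-a.s.\ with $\mu_j$-a.s.\ limits and is less economical than the direct check.
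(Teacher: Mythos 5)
Your proof is correct, but it takes a genuinely different route from the paper's. The paper argues forward from $E_\mu[X_i\,|\,\mathcal{G}]$: it decomposes $X_i = \sum_{j=1}^\ell X_i\mathbf{1}_{A_j}$, applies the tower property through the enlarged $\sigma$-algebra $\sigma(\mathcal{G}\cup\mathcal{A})$, then invokes the generalized Bayes identity \eqref{eq:genBayes} in the form $\mathbf{1}_{A_j}\,E_\mu[X_i\,|\,\sigma(\mathcal{G}\cup\mathcal{A})] = \mathbf{1}_{A_j}\,E_{\mu_j}[X_i\,|\,\mathcal{G}]$, and finally pulls the $\mathcal{G}$-measurable factor $E_{\mu_j}[X_i\,|\,\mathcal{G}]$ out of the outer conditional expectation. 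You instead verify directly that $\sum_{j=1}^\ell E_{\mu_j}[X_i\,|\,\mathcal{G}]\,\mu[A_j\,|\,\mathcal{G}]$ satisfies the two defining properties of $E_\mu[X_i\,|\,\mathcal{G}]$, testing against arbitrary $G\in\mathcal{G}$ using only the pull-out property and the change of measure $d\mu_j/d\mu = \mathbf{1}_{A_j}/\mu(A_j)$; this is more elementary and self-contained, since it never introduces the intermediate $\sigma$-algebra or the Bayes formula (which your alternative sketch at the end would need, and which is essentially the paper's route), and it makes explicit that nonnegativity alone suffices, with no integrability assumption. What the paper's route buys is brevity: a five-line chain of identities, reusing \eqref{eq:genBayes}, which it has already established. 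One small correction to your bookkeeping remark: a version of $E_{\mu_j}[X_i\,|\,\mathcal{G}]$ is determined only up to $\mu_j$-null sets, and $\mu_j\ll\mu$ points the wrong way to control these --- a $\mathcal{G}$-measurable set $G_0$ with $\mu(G_0\cap A_j)=0$ may well have $\mu(G_0)>0$, so the ambiguity is not confined to $\mu$-null sets as you state. The identity is nevertheless version-independent, for a reason your own computation already supplies: the averaging property holds for every fixed choice of versions, so each choice yields a version of $E_\mu[X_i\,|\,\mathcal{G}]$; equivalently, $\mu[A_j\,|\,\mathcal{G}]$ vanishes $\mu$-almost surely on any such $G_0$, so the product $E_{\mu_j}[X_i\,|\,\mathcal{G}]\,\mu[A_j\,|\,\mathcal{G}]$ is unaffected under the convention $0\cdot\infty=0$. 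This imprecision is cosmetic and does not affect the validity of your argument.
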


See Appendix~\ref{se:proofs} for a proof of Lemma~\ref{le:total}. On the one hand, the following Theorem~\ref{th:ident}
provides a sufficient condition for identifiability under SJS. On the other hand, on the basis of
Eq.~\eqref{eq:obtain} from its proof, it can be interpreted
as a conditional version of the `confusion matrix' method (Section~2.3.1 of
Saerens et al.~\cite{saerens2002adjusting};
also called `Adjusted Count', Forman~\cite{forman2008quantifying}) for estimating the class prior probabilities
under prior probability shift.

\begin{theorem}[Identifiability]\label{th:ident}
Under Assumption~\ref{as:setting}, let $\mathcal{F}$ and $\mathcal{F}'$ be sub-$\sigma$-algebras of
$\mathcal{H}$ and $Q$ and $Q'$ be probability measures on $(\Omega, \sigma(\mathcal{H}\cup\mathcal{A}))$ such
that $Q|\mathcal{H} = Q'|\mathcal{H}$ and both $Q$ and $Q'$ are absolutely continuous with respect to $P$ on
$\sigma(\mathcal{H}\cup\mathcal{A})$.
Suppose that  $Q$ is related to $P$ through $\mathcal{F}$-SJS and $Q'$ is related to $P$ through
$\mathcal{F}'$-SJS.
Assume that $X_1, \ldots, X_\ell$ are $\mathcal{H}$-measurable non-negative random variables
and
define the random matrix $R=R(P, \sigma(\mathcal{F}\cup\mathcal{F}'), \ell)$ by \eqref{eq:defR}, i.e.\
with $\mu = P$, $\mathcal{G} = \sigma(\mathcal{F}\cup\mathcal{F}')$ and $n = \ell$.
If the matrix $R$ satisfies the condition
\begin{equation}\label{eq:rank}
P[\mathrm{rank}(R) = \ell] \ = \ 1,
\end{equation}
then $Q[M] = Q'[M]$ follows for all $M \in \sigma(\mathcal{H}\cup\mathcal{A})$.
\end{theorem}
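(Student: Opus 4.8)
The plan is to collapse the two different shift hypotheses onto a single coarser $\sigma$-algebra, to observe that the ``confusion matrix'' $R$ is the same regardless of whether it is computed under $P$, $Q$ or $Q'$, to invert $R$ so as to identify the coarse posteriors of $Q$ and $Q'$, and finally to use the SJS structure to rebuild the full target measure on $\sigma(\mathcal{H}\cup\mathcal{A})$ out of these posteriors. First I set $\mathcal{G}=\sigma(\mathcal{F}\cup\mathcal{F}')$, a sub-$\sigma$-algebra of $\mathcal{H}$ containing both $\mathcal{F}$ and $\mathcal{F}'$. By Corollary~\ref{co:all}, $Q$ is related to $P$ through $\mathcal{G}$-SJS and $Q'$ is related to $P$ through $\mathcal{G}$-SJS. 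The preliminary point is that the entries of $R$ are insensitive to the reference measure: applying Proposition~\ref{pr:properties}~(ii) with the sub-$\sigma$-algebra $\mathcal{G}$ to $Q$ and, separately, to $Q'$, one gets $E_{Q_j}[X_i\,|\,\mathcal{G}] = E_{P_j}[X_i\,|\,\mathcal{G}] = E_{Q'_j}[X_i\,|\,\mathcal{G}]$ for all $i,j$ (with $Q'_j$ the class-conditional distributions of $Q'$, defined from $Q'$ as in \eqref{eq:classcond}). Hence $R(Q,\mathcal{G},\ell) = R(Q',\mathcal{G},\ell) = R(P,\mathcal{G},\ell) = R$.

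Next I apply Lemma~\ref{le:total} once with $\mu = Q$ and once with $\mu = Q'$, using the same $X_1, \ldots, X_\ell$ and the same $\mathcal{G}$. Because each $X_i$ is $\mathcal{H}$-measurable, $\mathcal{G}\subseteq\mathcal{H}$, and $Q|\mathcal{H} = Q'|\mathcal{H}$, the defining property of conditional expectation shows $E_Q[X_i\,|\,\mathcal{G}] = E_{Q'}[X_i\,|\,\mathcal{G}]$, so the two left-hand vectors in \eqref{eq:matrix} coincide. As both right-hand sides carry the same matrix $R$, subtraction yields
\[
R \times
\begin{pmatrix}
Q[A_1\,|\,\mathcal{G}] - Q'[A_1\,|\,\mathcal{G}] \\
\vdots \\
Q[A_\ell\,|\,\mathcal{G}] - Q'[A_\ell\,|\,\mathcal{G}]
\end{pmatrix}
\ = \ 0 \qquad Q\text{-a.s.}
\]
The event $\{\mathrm{rank}(R) = \ell\}$ is $\mathcal{G}$-measurable, hence lies in $\sigma(\mathcal{H}\cup\mathcal{A})$, and has $P$-probability $1$ by \eqref{eq:rank}; the absolute continuity $Q\ll P$ then forces it to have $Q$-probability $1$ as well. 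On this event the $\ell\times\ell$ matrix $R$ is invertible, so the homogeneous system above has only the trivial solution, giving $Q[A_i\,|\,\mathcal{G}] = Q'[A_i\,|\,\mathcal{G}]$, $Q$-a.s. (equivalently $Q'$-a.s.), for every $i$.

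It remains to upgrade this equality of coarse $\mathcal{G}$-posteriors to equality of the measures on all of $\sigma(\mathcal{H}\cup\mathcal{A})$, and this is where the SJS structure does the real work. For $H\in\mathcal{H}$ I would condition $\mathbf{1}_{A_i}\mathbf{1}_H$ on $\sigma(\mathcal{G}\cup\mathcal{A})$ and invoke \eqref{eq:genBayes} together with $\mathcal{G}$-SJS for $Q$ to get $\mathbf{1}_{A_i}\,Q[H\,|\,\sigma(\mathcal{G}\cup\mathcal{A})] = \mathbf{1}_{A_i}\,P_i[H\,|\,\mathcal{G}]$, whence
\[
Q[A_i\cap H] \ = \ E_Q\bigl[\mathbf{1}_{A_i}\,P_i[H\,|\,\mathcal{G}]\bigr] \ = \ E_Q\bigl[Q[A_i\,|\,\mathcal{G}]\,P_i[H\,|\,\mathcal{G}]\bigr].
\]
The integrand is $\mathcal{G}$-measurable, hence $\mathcal{H}$-measurable; replacing $Q[A_i\,|\,\mathcal{G}]$ by $Q'[A_i\,|\,\mathcal{G}]$ and then switching the outer expectation from $E_Q$ to $E_{Q'}$ (legitimate because $Q|\mathcal{H} = Q'|\mathcal{H}$), and running the same computation backwards for $Q'$, yields $Q[A_i\cap H] = Q'[A_i\cap H]$. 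Finally, every $M\in\sigma(\mathcal{H}\cup\mathcal{A})$ has the form $M = \bigcup_{i=1}^\ell (A_i\cap H_i)$ with $H_i\in\mathcal{H}$ by \eqref{eq:Hbar}, so $Q[M] = \sum_{i=1}^\ell Q[A_i\cap H_i] = \sum_{i=1}^\ell Q'[A_i\cap H_i] = Q'[M]$, which is the assertion.

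The step I expect to be the main obstacle is precisely this last bridge: the rank argument only delivers agreement of the \emph{coarse} $\mathcal{G}$-posteriors, whereas the claim concerns the much \emph{finer} $\sigma$-algebra $\sigma(\mathcal{H}\cup\mathcal{A})$. Making the reconstruction work hinges on using $\mathcal{G}$-SJS to express the conditional feature probabilities $Q_i[H\,|\,\mathcal{G}]$ through the common source conditionals $P_i[H\,|\,\mathcal{G}]$, so that $Q[A_i\cap H]$ depends on $Q$ only through its $\mathcal{G}$-posteriors and its (shared) feature marginal. The remaining care is bookkeeping: the invariance of $R$ under change of reference measure and the $P$-to-$Q$ transfer of the almost-sure full-rank condition via absolute continuity.
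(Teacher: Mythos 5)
Your proof is correct, and its skeleton matches the paper's: define $\mathcal{G}=\sigma(\mathcal{F}\cup\mathcal{F}')$, lift both shifts to $\mathcal{G}$-SJS via Corollary~\ref{co:all}, observe $R(P,\mathcal{G},\ell)=R(Q,\mathcal{G},\ell)=R(Q',\mathcal{G},\ell)$, transfer the full-rank event from $P$ to $Q$ and $Q'$ by absolute continuity, and use Lemma~\ref{le:total} with the common left-hand vector $E_Q[X_i\,|\,\mathcal{G}]=E_{Q'}[X_i\,|\,\mathcal{G}]$ to conclude $Q[A_i\,|\,\mathcal{G}]=Q'[A_i\,|\,\mathcal{G}]$ (the paper inverts $R$ explicitly in \eqref{eq:obtain}; your subtraction-plus-trivial-kernel phrasing is equivalent). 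Where you genuinely diverge is the bridge you yourself flag as the main obstacle. The paper passes through Proposition~\ref{pr:corrH}: formula \eqref{eq:corrH} expresses $Q[A_i\,|\,\mathcal{H}]$ through the ratios $\frac{Q[A_j\,|\,\mathcal{G}]}{P[A_j\,|\,\mathcal{G}]}$ and the shared $P[A_j\,|\,\mathcal{H}]$, so agreement of the coarse posteriors forces $Q[A_i\,|\,\mathcal{H}]=Q'[A_i\,|\,\mathcal{H}]$, after which it finishes with the same $M=\bigcup_{i=1}^\ell(A_i\cap H_i)$ decomposition from \eqref{eq:Hbar} that you use. You instead bypass the correction formula altogether via the identity $Q[A_i\cap H]=E_Q\bigl[Q[A_i\,|\,\mathcal{G}]\,P_i[H\,|\,\mathcal{G}]\bigr]$, obtained by conditioning on $\sigma(\mathcal{G}\cup\mathcal{A})$ and invoking \eqref{eq:genBayes} plus $\mathcal{G}$-SJS. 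This buys a more self-contained argument: Proposition~\ref{pr:corrH} rests on Corollaries~1 and~2 of the factorizable-shift paper, a null-set lemma, and the $0/0$ convention, none of which your route needs; the price is that you never exhibit the intermediate (and independently interesting) fact $Q[A_i\,|\,\mathcal{H}]=Q'[A_i\,|\,\mathcal{H}]$, though it follows from your display by disintegration. One bookkeeping point you leave implicit when ``running the computation backwards'' for $Q'$: $\mathcal{G}$-SJS for $Q$ supplies a common $\mathcal{G}$-measurable version $Z$ of $Q_i[H\,|\,\mathcal{G}]$ and $P_i[H\,|\,\mathcal{G}]$, while $\mathcal{G}$-SJS for $Q'$ supplies a possibly different common version $Z'$ for the pair $(Q'_i,P_i)$; since both are versions of $P_i[H\,|\,\mathcal{G}]$ they agree $P_i$-a.s., hence $Q'_i$-a.s.\ because $Q'_i\ll P_i$ on $\mathcal{H}$, so the step is sound --- and the paper's own proof elides version questions at exactly the same level of detail.
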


See Appendix~\ref{se:proofs} for a proof of Theorem~\ref{th:ident}.
The sufficient condition for identifiability presented in Theorem~1 Chen et al.~\cite{chen&zaharia&Zou:SJS} implicitly
involves picking a set of $\ell$ feature values in order to obtain a system of $\ell$ equations
for $\ell$ unknowns with a unique solution, for each point in the domain of the weight function linking
source and target distributions. In Theorem~\ref{th:ident}, a corresponding but more transparent selection of
random variables $X_i$ is introduced, and `good'
choices like an accurate classifier or posterior probabilities suggest themselves.

How strong is Theorem~\ref{th:ident}? Obviously, its strength depends on the answer to the question
whether we can find $\mathcal{H}$-measurable non-negative random variables $X_1, \ldots, X_\ell$ such
that \eqref{eq:rank} holds, and if so, how easy it is.

Denote by $\mathbf{C} = (C_1, \ldots, C_\ell)$ hard (or crisp) multinomial classifiers in the sense that
\begin{equation}\label{eq:classifier}
\begin{split}
 & C_i \in \mathcal{H}\ \text{for all}\ i = 1, \ldots, \ell,  \\
& C_1, \ldots, C_\ell\ \text{is a disjoint decomposition of}\ \Omega,\ \text{and}  \\
& A_i\ \text{is predicted when}\ C_i\ \text{is observed.}
\end{split}
\end{equation}
Intuitively, $X_i = \mathbf{1}_{C_i}$
for some accurate classifier $\mathbf{C}$ might be a good choice for Theorem~\ref{th:ident}. Then $\mathbf{1}_{C_i}
\approx \mathbf{1}_{A_i}$, $i = 1, \ldots, \ell$, should hold with the consequence that also
\begin{equation*}
P_j[C_i\,|\,\mathcal{G}] \ \approx\ \begin{cases}
1, & \text{for}\ i = j,\\
0, & \text{for}\ i \neq j,
    \end{cases}
\end{equation*}
and with the further consequence that the matrix $R$ would be likely to have maximal rank.

A similar choice is $X_i = P[A_i\,|\,\mathcal{H}]$, $i = 1, \ldots, \ell$. With this choice, we can show
that in the binary case $\ell =2$, condition \eqref{eq:rank} has an intuitive  interpretation.
\begin{subequations}
\begin{lemma}\label{le:binary}
In the setting of Theorem~\ref{th:ident}, consider the case $\ell =2$ and let
\begin{equation}\label{eq:defX}
X_1 \ = P[A_1\,|\,\mathcal{H}]\quad \text{and}\quad
X_2 \ = P[A_2\,|\,\mathcal{H}].
\end{equation}
Then condition \eqref{eq:rank} is satisfied if and only
\begin{equation}\label{eq:binary}
P\Big[P[A_1\,|\,\mathcal{H}] = P\bigl[A_1\,|\,\sigma(\mathcal{F}\cup\mathcal{F}')\bigr]\Big] \ < \ 1.
\end{equation}
\end{lemma}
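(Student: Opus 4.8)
The plan is to reduce condition \eqref{eq:rank} to the non-vanishing of a single scalar and then to put that scalar into closed form. Writing $\mathcal{G} = \sigma(\mathcal{F}\cup\mathcal{F}')$, $a = E_{P_1}[X_1\,|\,\mathcal{G}]$ and $b = E_{P_2}[X_1\,|\,\mathcal{G}]$, and using $X_2 = 1 - X_1$ (valid because $A_1\cup A_2 = \Omega$), the defining matrix of Theorem~\ref{th:ident} takes the form
\[
R \ = \ \begin{pmatrix} a & b \\ 1-a & 1-b \end{pmatrix},
\qquad\text{so that}\qquad \det R \ = \ a - b .
\]
Since a $2\times 2$ matrix has full rank exactly when its determinant is non-zero, condition \eqref{eq:rank} is equivalent to $a - b \neq 0$ holding $P$-almost surely. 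So the whole question becomes the evaluation of $a-b$.

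The heart of the argument is to compute $a$ and $b$. First I would record the per-class identity underlying Lemma~\ref{le:total}, namely that for every non-negative $\mathcal{H}$-measurable $Y$ one has $E_P[\mathbf{1}_{A_j}\,Y\,|\,\mathcal{G}] = P[A_j\,|\,\mathcal{G}]\,E_{P_j}[Y\,|\,\mathcal{G}]$, which is just the defining property of $E_{P_j}[\cdot\,|\,\mathcal{G}]$ rewritten under $P$. Applying it with $Y = X_1$ and invoking the tower property together with $X_1 = P[A_1\,|\,\mathcal{H}]$ and $\mathcal{G}\subset\mathcal{H}$, so that $E_P[\mathbf{1}_{A_1}X_1\,|\,\mathcal{H}] = X_1\,P[A_1\,|\,\mathcal{H}] = X_1^2$, yields $P[A_1\,|\,\mathcal{G}]\,a = E_P[X_1^2\,|\,\mathcal{G}]$; subtracting this from $E_P[X_1\,|\,\mathcal{G}] = P[A_1\,|\,\mathcal{G}]$ gives $P[A_2\,|\,\mathcal{G}]\,b = P[A_1\,|\,\mathcal{G}] - E_P[X_1^2\,|\,\mathcal{G}]$. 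Putting the two together and simplifying, I expect the clean formula
\[
\det R \ = \ a - b \ = \
\frac{E_P[X_1^2\,|\,\mathcal{G}] - \bigl(E_P[X_1\,|\,\mathcal{G}]\bigr)^2}{P[A_1\,|\,\mathcal{G}]\,P[A_2\,|\,\mathcal{G}]}
\ = \ \frac{\var_P[X_1\,|\,\mathcal{G}]}{P[A_1\,|\,\mathcal{G}]\,P[A_2\,|\,\mathcal{G}]},
\]
valid wherever the denominator is positive. In particular $\det R \ge 0$, and it vanishes precisely where the conditional variance of $X_1$ given $\mathcal{G}$ vanishes.

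From here the conclusion should follow by noting that $\var_P[X_1\,|\,\mathcal{G}] = E_P\bigl[(X_1 - P[A_1\,|\,\mathcal{G}])^2\,|\,\mathcal{G}\bigr]$ vanishes exactly on the event $\{X_1 = P[A_1\,|\,\mathcal{G}]\}$ that appears in \eqref{eq:binary}, while \eqref{eq:rank} demands that this event be $P$-negligible. I expect the main obstacle to lie in making this last translation precise, in two places. The minor one is the edge set $\{P[A_1\,|\,\mathcal{G}]\in\{0,1\}\}$, on which the denominator degenerates and one column of $R$ is assembled from an $E_{P_j}[\cdot\,|\,\mathcal{G}]$ defined only off a $P_j$-null set; there $\det R$ must be inspected directly, and one checks that $X_1$ already coincides with $P[A_1\,|\,\mathcal{G}]$, so nothing is lost. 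The genuinely delicate point is reconciling the almost-sure nature of \eqref{eq:rank}, which forces $\var_P[X_1\,|\,\mathcal{G}]>0$ to hold $\mathcal{G}$-atom by $\mathcal{G}$-atom, with the unconditional statement \eqref{eq:binary}: the implication \eqref{eq:rank}$\Rightarrow$\eqref{eq:binary} is immediate from $E_P\bigl[\var_P[X_1\,|\,\mathcal{G}]\bigr] = E_P\bigl[(X_1-P[A_1\,|\,\mathcal{G}])^2\bigr]$, whereas for the converse the argument really wants the conditional variance to be positive almost surely rather than merely $P[X_1 = P[A_1\,|\,\mathcal{G}]] < 1$, and this is exactly the step I would scrutinise most carefully, looking either for a further reduction or for the conditional sharpening of the event in \eqref{eq:binary}.
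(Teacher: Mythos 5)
Your computation is correct and is, in essence, the paper's own argument: the paper likewise reduces \eqref{eq:rank} to the $P$-almost-sure positivity of $\var_P\bigl[P[A_1\,|\,\mathcal{H}]\,\big|\,\mathcal{G}\bigr]$ with $\mathcal{G}=\sigma(\mathcal{F}\cup\mathcal{F}')$, by rewriting $E_{P_j}[X_i\,|\,\mathcal{G}] = E_P[\mathbf{1}_{A_j}X_i\,|\,\mathcal{G}]/P[A_j\,|\,\mathcal{G}]$ (with the convention $0/0=0$), using $X_2 = 1-X_1$ and the tower identity $E_P[\mathbf{1}_{A_1}X_1\,|\,\mathcal{G}] = E_P\bigl[P[A_1\,|\,\mathcal{H}]^2\,\big|\,\mathcal{G}\bigr]$. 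Your shortcut $\det R = a - b$ (both column sums being $1$) is marginally slicker than the paper's route, which clears the denominators first and expands the full $2\times 2$ determinant, but both land on the same closed form $\det R = \var_P[X_1\,|\,\mathcal{G}]\,/\,\bigl(P[A_1\,|\,\mathcal{G}]\,P[A_2\,|\,\mathcal{G}]\bigr)$. Your handling of the degenerate set is also sound: on $\{P[A_1\,|\,\mathcal{G}]=0\}$ one has $E_P[X_1\mathbf{1}_{\{P[A_1|\mathcal{G}]=0\}}] = P\bigl[A_1\cap\{P[A_1\,|\,\mathcal{G}]=0\}\bigr]=0$, hence $X_1 = 0 = P[A_1\,|\,\mathcal{G}]$ a.s.\ there, so that set sits inside the coincidence event, consistently with the variance formula.

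The gap is the one you flagged yourself: you do not establish \eqref{eq:binary} $\Rightarrow$ \eqref{eq:rank}, so as a proof of the stated biconditional your argument is incomplete. You should know, however, that your suspicion is well placed: the paper closes exactly this step in a single sentence (``$\var_P[\,\cdot\,|\,\mathcal{G}]>0 \Leftrightarrow P[A_1\,|\,\mathcal{H}]\neq P[A_1\,|\,\mathcal{G}]$ with positive probability''), and read unconditionally that equivalence only delivers \eqref{eq:rank} $\Rightarrow$ \eqref{eq:binary}. For the converse one needs the $\mathcal{G}$-measurable event $\{\var_P[X_1\,|\,\mathcal{G}]=0\}$ to be $P$-null whenever the coincidence event has probability less than $1$, and this can fail: let $\mathcal{G}$ be generated by a single event $G$ with $0<P[G]<1$, let $P[A_1\,|\,\mathcal{H}]$ be constant on $G$ and genuinely two-valued on $\Omega\setminus G$ (with $\mathcal{F}=\mathcal{F}'=\mathcal{G}$ and $Q=Q'=P$, which satisfies the setting of Theorem~\ref{th:ident}); then the coincidence event is $G$ up to null sets, so \eqref{eq:binary} holds, while $\det R = 0$ on $G$, so \eqref{eq:rank} fails. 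What your computation actually proves cleanly is: \eqref{eq:rank} holds iff $\var_P\bigl[P[A_1\,|\,\mathcal{H}]\,\big|\,\mathcal{G}\bigr]>0$ $P$-almost surely, i.e.\ iff the coincidence in \eqref{eq:binary} fails conditionally on $\mathcal{G}$ rather than merely unconditionally --- precisely the ``conditional sharpening'' you anticipated. In short, your blind reconstruction matches the paper's proof step for step on the direction that works, and stops exactly where the paper's own proof is loose.
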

\end{subequations}

See Appendix~\ref{se:proofs} for a proof of Lemma~\ref{le:binary}.
Condition~\eqref{eq:binary}, in particular,  implies in the context of Lemma~\ref{le:binary} and
Theorem~\ref{th:ident} that the $\sigma$-algebra (or information set) $\sigma(\mathcal{F}\cup\mathcal{F}')$
must not be sufficient for $\mathcal{H}$ in the sense of Definition~3.1 of Tasche~\cite{Tasche2022}.
The definition of sufficiency is formally given below as we are having a closer look at the relationship between
sufficiency and SJS in the remainder of this section.
Recall that by assumption $\mathcal{H}$ reflects
the totality of observable information on the features available for the prediction of the labels.
In non-technical terms, hence sufficiency means that all the information needed for predicting the
class labels is already captured with an information set smaller than $\mathcal{H}$.
\begin{definition}[Statistical sufficiency]\label{de:sufficiency}
Let $(\Omega, \mathcal{M}, P)$ be a probability space and $\mathcal{A}$, $\mathcal{F}$ and $\mathcal{H}$
be sub-$\sigma$-fields of $\mathcal{M}$ such that $\mathcal{F} \subset \mathcal{H}$.
Then $\mathcal{F}$ is \emph{sufficient} for $\mathcal{H}$
with respect to $\mathcal{A}$ if for all $A\in \mathcal{A}$
\begin{equation}\label{eq:s.general}
 P[A\,|\,\mathcal{H}]\ = \ P[A\,|\,\mathcal{F}].
\end{equation}
\end{definition}

Sufficiency
is of interest and desirable for dimension reduction
primarily in the case where (in the context of Lemma~\ref{le:binary})
\begin{equation}\label{eq:subsetneq}
\sigma(\mathcal{F}\cup\mathcal{F}')\ \subsetneq\ \mathcal{H}.
\end{equation}
Observe, however, that requiring \eqref{eq:subsetneq} for Lemma~\ref{le:binary} instead of \eqref{eq:binary}
would not work  because
\eqref{eq:binary} may be violated despite \eqref{eq:subsetneq} being true.

A variation on the theme of Lemma~\ref{le:binary} for general $\ell \ge 2$
immediately follows from Proposition~3.7 of
Tasche~\cite{Tasche2022}. It confirms in the general case that  sufficiency does not help to
achieve identifiability under SJS. Recall from \eqref{eq:genBayes} that
\begin{equation*}
\sum_{i=1}^\ell \mathbf{1}_{A_i}\, P_i[H\,|\,\mathcal{F}] \ = \
P\bigl[H\,|\,\sigma\bigl(\mathcal{F}\cup\mathcal{A}\bigr)\bigr],\qquad \text{for all}\ H\in\mathcal{H}.
\end{equation*}

\begin{proposition}\label{pr:sufficiency}
Let $(\Omega, \mathcal{M}, P)$ be a probability space. Let $A_1, \ldots, A_\ell\in \mathcal{M}$ be
a decomposition of $\Omega$ such that properties (i), (ii) and (iii) of Assumption~\ref{as:setting} hold.
Suppose that $\mathcal{F}$ and $\mathcal{H}$ are
sub-$\sigma$-algebras of $\mathcal{M}$, with $\mathcal{F} \subset \mathcal{H}$.
Then the following three statements are equivalent:
\begin{itemize}
\item[(i)] For $i = 1, \ldots, \ell$, it holds that  $P[A_i\,|\,\mathcal{H}] = P[A_i\,|\,\mathcal{F}]$, i.e.\
$\mathcal{F}$ is sufficient for $\mathcal{H}$ with respect to $\mathcal{A} = \sigma(A_i:i = 1, \ldots, \ell)$
in the sense of Definition~\ref{de:sufficiency}.
\item[(ii)] For all $H\in\mathcal{H}$, it holds that
$P\bigl[H\,|\,\sigma\bigl(\mathcal{F}\cup\mathcal{A}\bigr)\bigr] = P[H\,|\,\mathcal{F}]$.
\item[(iii)] With $P_i$ defined as in \eqref{eq:classcond}, it holds that
$E_{P_i}[X\,|\,\mathcal{F}] = E_P[X\,|\,\mathcal{F}]$ for all $\mathcal{H}$-measurable
non-negative random variables $X$ and all $i=1, \ldots, \ell$.
\end{itemize}
\end{proposition}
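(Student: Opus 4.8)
The plan is to prove the cycle (i) $\Rightarrow$ (iii) $\Rightarrow$ (ii) $\Rightarrow$ (i), which gives the full equivalence. The unifying observation is that each of the three statements is a reformulation of the conditional independence of $\mathcal{A}$ and $\mathcal{H}$ given $\mathcal{F}$, and the translation device between the conditioned measures $P_i$ and the base measure $P$ is the generalized Bayes formula underlying \eqref{eq:genBayes}, namely $E_{P_i}[X\,|\,\mathcal{F}] = E_P[\mathbf{1}_{A_i}X\,|\,\mathcal{F}]\big/P[A_i\,|\,\mathcal{F}]$ ($P_i$-almost surely). Beyond this, only the tower property and the pull-out property of conditional expectation are needed.

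For (i) $\Rightarrow$ (iii), I would fix $i$ and a non-negative $\mathcal{H}$-measurable $X$ and condition first on the larger $\sigma$-algebra $\mathcal{H} \supset \mathcal{F}$: since $X$ is $\mathcal{H}$-measurable, $E_P[\mathbf{1}_{A_i}X\,|\,\mathcal{H}] = X\,P[A_i\,|\,\mathcal{H}]$, which by (i) equals $X\,P[A_i\,|\,\mathcal{F}]$. Applying $E_P[\,\cdot\,|\,\mathcal{F}]$ and pulling out the $\mathcal{F}$-measurable factor $P[A_i\,|\,\mathcal{F}]$ yields $E_P[\mathbf{1}_{A_i}X\,|\,\mathcal{F}] = P[A_i\,|\,\mathcal{F}]\,E_P[X\,|\,\mathcal{F}]$. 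Dividing by $P[A_i\,|\,\mathcal{F}]$ and invoking the generalized Bayes formula for $E_{P_i}[X\,|\,\mathcal{F}]$ gives (iii). Note that this works for every non-negative $X$ directly, with no approximation, because the pull-out of an $\mathcal{H}$-measurable factor is unconditionally valid.

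For (iii) $\Rightarrow$ (ii), I would specialise (iii) to indicators $X=\mathbf{1}_H$, $H\in\mathcal{H}$, obtaining $P_i[H\,|\,\mathcal{F}] = P[H\,|\,\mathcal{F}]$ for each $i$; substituting into \eqref{eq:genBayes} and using $\sum_{i=1}^\ell \mathbf{1}_{A_i}=1$ collapses the sum to $P[H\,|\,\mathcal{F}]$, which is exactly (ii). For (ii) $\Rightarrow$ (i), I would feed (ii) back into \eqref{eq:genBayes} to get $\sum_{i=1}^\ell \mathbf{1}_{A_i}P_i[H\,|\,\mathcal{F}] = P[H\,|\,\mathcal{F}]$, then multiply by $\mathbf{1}_{A_j}$ (disjointness annihilates all but the $j$-th term) to obtain $\mathbf{1}_{A_j}P_j[H\,|\,\mathcal{F}] = \mathbf{1}_{A_j}P[H\,|\,\mathcal{F}]$. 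Taking $E_P$ of both sides, the left side reduces via $E_P[\mathbf{1}_{A_j}Y]=P[A_j]\,E_{P_j}[Y]$ (with $Y=P_j[H\,|\,\mathcal{F}]$) to $P[A_j\cap H]$, while the right side reduces via the tower and pull-out properties to $E_P[\mathbf{1}_H\,P[A_j\,|\,\mathcal{F}]]$. Hence $P[A_j\cap H] = E_P[\mathbf{1}_H\,P[A_j\,|\,\mathcal{F}]]$ for all $H\in\mathcal{H}$, which is precisely the defining property of the version $P[A_j\,|\,\mathcal{H}]=P[A_j\,|\,\mathcal{F}]$, i.e.\ (i).

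The only real bookkeeping obstacle I anticipate is the almost-sure qualifiers attached to the $P_i$-conditional expectations and the division by $P[A_i\,|\,\mathcal{F}]$. Here I would record that $P_i[\{P[A_i\,|\,\mathcal{F}]=0\}] = 0$, since $E_P[\mathbf{1}_{A_i}\mathbf{1}_{\{P[A_i\,|\,\mathcal{F}]=0\}}] = E_P[P[A_i\,|\,\mathcal{F}]\,\mathbf{1}_{\{P[A_i\,|\,\mathcal{F}]=0\}}] = 0$; consequently the generalized Bayes quotient is well-defined $P_i$-almost surely and every identity above holds on a set of full measure for the relevant measure. Everything else is a routine application of the tower and pull-out properties.
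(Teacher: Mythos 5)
Your proof is correct, and it is worth noting that the paper itself contains no proof of Proposition~\ref{pr:sufficiency}: the result is stated as following ``immediately'' from Proposition~3.7 of \citet{Tasche2022}, and nothing appears in Appendix~\ref{se:proofs}. Your cycle (i) $\Rightarrow$ (iii) $\Rightarrow$ (ii) $\Rightarrow$ (i) therefore supplies a self-contained argument for a result the paper outsources, and it does so using precisely the machinery the paper has already set up: the generalized Bayes identity behind \eqref{eq:genBayes} (which the text in fact recalls immediately before stating the proposition, suggesting this is the intended route), the tower and pull-out properties, and the null-set computation $P_i\bigl[\{P[A_i\,|\,\mathcal{F}]=0\}\bigr]=0$ that legitimises the division --- the same bookkeeping device the paper uses in its proof of Proposition~\ref{pr:corrH}. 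All three steps check out: in (i) $\Rightarrow$ (iii) the pull-out of the $\mathcal{H}$-measurable factor $X$ and of the $\mathcal{F}$-measurable factor $P[A_i\,|\,\mathcal{F}]$ is valid for arbitrary non-negative $X$, so you indeed avoid the Dynkin-system and monotone-convergence detour the paper needs in Proposition~\ref{pr:properties}, where the hypothesis only concerns a generating family; in (iii) $\Rightarrow$ (ii) the $P_i$-a.s.\ equality $P_i[H\,|\,\mathcal{F}]=P[H\,|\,\mathcal{F}]$ is exactly $P$-a.s.\ equality on $A_i$, so the sum $\sum_i \mathbf{1}_{A_i}P_i[H\,|\,\mathcal{F}]$ collapses as claimed; and in (ii) $\Rightarrow$ (i) the identity $P[A_j\cap H]=E_P\bigl[\mathbf{1}_H\,P[A_j\,|\,\mathcal{F}]\bigr]$ for all $H\in\mathcal{H}$ is, since $\mathcal{F}\subset\mathcal{H}$ makes $P[A_j\,|\,\mathcal{F}]$ an $\mathcal{H}$-measurable candidate, exactly the defining property of $P[A_j\,|\,\mathcal{H}]$. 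One minor point if you write this up: the paper cites the Bayes formula \citep[][Theorem~10.8]{Klebaner} only for conditional probabilities, i.e.\ $X=\mathbf{1}_H$; your use of $E_{P_i}[X\,|\,\mathcal{F}]=E_P[\mathbf{1}_{A_i}X\,|\,\mathcal{F}]\big/P[A_i\,|\,\mathcal{F}]$ for general non-negative $X$ is a standard but strictly larger claim, which deserves a one-line justification (monotone convergence from indicators, or direct verification against test sets $F\in\mathcal{F}$).
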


By Proposition~\ref{pr:sufficiency}, if in the setting of Theorem~\ref{th:ident} the sub-$\sigma$-algebra
$\mathcal{G} = \sigma(\mathcal{F}\cup\mathcal{F}')$ is sufficient for $\mathcal{H}$,
the rank of the random matrix $R$ is at most $1$
with probability $1$. Hence, the $\ell \times \ell$-matrix $R$ is not invertible such that Theorem~\ref{th:ident}
cannot be applied for proving uniqueness of $Q$ related to $P$ through $\mathcal{F}$-SJS.
Intuitively, this is not a surprise as Proposition~\ref{pr:sufficiency}~(i) may be interpreted as `$\mathcal{H}$ does
not provide any information about $A_i$ that could not be obtained through a sufficient $\mathcal{F}$'.

However, the situation is not as bad as it might appear because of this
observation. For Proposition~\ref{pr:CDI} below shows that under SJS, sufficiency is passed on
from the source distribution to the target distribution. Before looking at that, another useful
invariance property is introduced in the following definition.

\begin{definition}[Conditional distribution invariance]\label{de:CDI}
Let $P$ and $Q$ be probability measures on a measurable space $(\Omega, \mathcal{M})$. Suppose that
$\mathcal{F}$ is a sub-$\sigma$-algebra of $\mathcal{M}$. Then $P$ and $Q$ are related through
$\mathcal{F}$-\emph{conditional distribution invariance} on $\mathcal{M}$  if
\begin{equation}\label{eq:CDI}
P[M\,|\,\mathcal{F}] \ = \ Q[M\,|\,\mathcal{F}], \qquad \text{for all}\ M\in\mathcal{M}.
\end{equation}
\end{definition}

\begin{proposition}\label{pr:CDI}
Under Assumption~\ref{as:cont}, let $\mathcal{F}$ be a sub-$\sigma$-algebra of $\mathcal{H}$.
Suppose that $\mathcal{F}$ is sufficient for $\mathcal{H}$ with respect to $\mathcal{A}$
under $P$ in the sense of Definition~\ref{de:sufficiency} and that $Q$ is related to $P$
through $\mathcal{F}$-SJS.
Then $P$ and $Q$ are related through $\mathcal{F}$-conditional distribution invariance
on $\mathcal{H}$ in the sense of Definition~\ref{de:CDI}, and
$\mathcal{F}$ is also sufficient for $\mathcal{H}$ with respect to $\mathcal{A}$
under $Q$.
\end{proposition}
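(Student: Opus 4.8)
The plan is to reduce both conclusions to a single computation of the conditional probability $Q[H\,|\,\sigma(\mathcal{F}\cup\mathcal{A})]$ for $H\in\mathcal{H}$, using sufficiency to collapse the class-conditional terms and then SJS to carry them over from $P$ to $Q$. First I would exploit sufficiency in the form of Proposition~\ref{pr:sufficiency}: applying its statement (iii) with $X=\mathbf{1}_H$ gives $P_i[H\,|\,\mathcal{F}] = P[H\,|\,\mathcal{F}]$ for every $i$ and every $H\in\mathcal{H}$; that is, under sufficiency all the class-conditional versions coincide with the pooled one. Next I would invoke $\mathcal{F}$-SJS in the form \eqref{eq:SJS}, which by the formulation following it supplies a common $\mathcal{F}$-measurable version with $Q_i[H\,|\,\mathcal{F}] = P_i[H\,|\,\mathcal{F}]$. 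Chaining the two identities yields $Q_i[H\,|\,\mathcal{F}] = P[H\,|\,\mathcal{F}]$ for all $i$, so the right-hand side no longer depends on $i$.

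With this in hand, I would substitute into the generalized Bayes formula \eqref{eq:genBayes} taken for $\mu = Q$, obtaining
\[
Q\bigl[H\,|\,\sigma(\mathcal{F}\cup\mathcal{A})\bigr]
\ = \ \sum_{i=1}^\ell \mathbf{1}_{A_i}\,Q_i[H\,|\,\mathcal{F}]
\ = \ P[H\,|\,\mathcal{F}]\sum_{i=1}^\ell \mathbf{1}_{A_i}
\ = \ P[H\,|\,\mathcal{F}],
\]
where the last step uses $\sum_{i=1}^\ell \mathbf{1}_{A_i} = 1$, which follows from properties (i) and (ii) of Assumption~\ref{as:setting}. Applying the tower property of conditional expectation (and the $\mathcal{F}$-measurability of $P[H\,|\,\mathcal{F}]$) then gives $Q[H\,|\,\mathcal{F}] = E_Q\bigl[Q[H\,|\,\sigma(\mathcal{F}\cup\mathcal{A})]\,|\,\mathcal{F}\bigr] = P[H\,|\,\mathcal{F}]$. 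This is exactly the $\mathcal{F}$-conditional invariance of $\mathcal{H}$ between $P$ and $Q$ in the sense of Definition~\ref{de:CDI}, establishing the first conclusion.

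For the second conclusion I would combine the two displayed equalities to read $Q[H\,|\,\sigma(\mathcal{F}\cup\mathcal{A})] = Q[H\,|\,\mathcal{F}]$ for all $H\in\mathcal{H}$, which is precisely condition (ii) of Proposition~\ref{pr:sufficiency} stated for the measure $Q$ in place of $P$. This is legitimate because $Q[A_i]>0$ holds by Assumption~\ref{as:setting}~(iv), so the hypotheses of Proposition~\ref{pr:sufficiency} are met with $Q$; the equivalence (ii)~$\Leftrightarrow$~(i) there then delivers $Q[A_i\,|\,\mathcal{H}] = Q[A_i\,|\,\mathcal{F}]$, i.e.\ the sufficiency of $\mathcal{F}$ for $\mathcal{H}$ with respect to $\mathcal{A}$ under $Q$.

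I do not expect a serious obstacle, as the argument merely assembles tools already available in the excerpt. The one point requiring care is the bookkeeping of the almost-sure equalities across the four measures $P$, $P_i$, $Q$, $Q_i$: the sufficiency identities hold $P$-almost surely, the SJS identities hold for a shared $\mathcal{F}$-measurable version by the very reading of \eqref{eq:SJS}, and since $Q\ll P$ on $\sigma(\mathcal{H}\cup\mathcal{A})$ by Assumption~\ref{as:cont}, every $P$-almost sure equality persists $Q$-almost surely. Hence the chain of equalities is consistent and the two conclusions hold $Q$-almost surely, as required.
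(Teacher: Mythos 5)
Your proof is correct and follows essentially the same route as the paper's: collapse $Q_i[H\,|\,\mathcal{F}]$ to $P[H\,|\,\mathcal{F}]$ via Proposition~\ref{pr:sufficiency}~(iii) and $\mathcal{F}$-SJS, pass through the generalized Bayes formula \eqref{eq:genBayes} with $\mu = Q$ and the tower property to get (CDI), and then read off sufficiency under $Q$ from Proposition~\ref{pr:sufficiency} `(ii) $\Rightarrow$ (i)'. Your explicit bookkeeping of the almost-sure equalities across $P$, $P_i$, $Q$, $Q_i$ via $Q \ll P$ is a point the paper leaves implicit, but it changes nothing in substance.
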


\begin{proof}[\textbf{Proof of Proposition~\ref{pr:CDI}}]
Proposition~\ref{pr:sufficiency}~(iii) and $\mathcal{F}$-SJS together imply that
$Q_i[H\,|\,\mathcal{F}] = P[H\,|\,\mathcal{F}]$
for $H\in \mathcal{H}$ and $i = 1, \ldots, \ell$. Therefore we obtain for all $H\in \mathcal{H}$
\begin{equation*}
Q\bigl[H\,|\,\sigma(\mathcal{F}\cup\mathcal{A})\bigr]\ = \ \sum\nolimits_{i=1}^\ell \mathbf{1}_{A_i}\,
Q_i[H\,|\,\mathcal{F}] \ = \ P[H\,|\,\mathcal{F}].
\end{equation*}
From this, it follows that
\begin{equation*}
Q[H\,|\,\mathcal{F}] = E_Q\bigl[Q[H\,|\,\sigma(\mathcal{F}\cup\mathcal{A})\,\big|\,\mathcal{F}]\bigr]
= E_Q\bigl[P[H\,|\,\mathcal{F}]\,\big|\,\mathcal{F}\bigr] = P[H\,|\,\mathcal{F}].
\end{equation*}
Hence, \eqref{eq:CDI} is true for $H \in \mathcal{H}$. Sufficiency under $Q$ follows from
Proposition~\ref{pr:sufficiency}
`(ii) $\Rightarrow$ (i)'.
\end{proof}

Property~\eqref{eq:CDI}, under Assumption~\ref{as:cont}, is observable and verifiable, for instance by examining
if there is an $\mathcal{F}$-measurable density of $Q$ with respect to $P$ on $\mathcal{H}$ (as a consequence of
Lemma~\ref{le:measurable}). Hence by Proposition~\ref{pr:CDI}, the presence of conditional distribution invariance
could suggest that $Q$ and $P$ are also related through $\mathcal{F}$-SJS. Alas, one has to be very careful
about this because Example~\ref{ex:CDInotSJS} below shows that conditional distribution invariance does not
imply SJS in general.

\subsection{SJS vs.\ covariate shift}
\label{se:SJSvsCov}

In this section, we show that the relationship between SJS and covariate shift is intertwined with the
question of whether or not the target
marginal distribution of the features $Q|\mathcal{H}$ has got an $\mathcal{F}$-measurable
density with respect to the source unconditional feature distribution $P|\mathcal{H}$, or in other words
if $P$ and $Q$ are related through $\mathcal{F}$-conditional distribution invariance.

\begin{definition}[Covariate shift]\label{de:covshift}
Under Assumption~\ref{as:setting}, let $\mathcal{F}$ be a sub-$\sigma$-algebra of $\mathcal{H}$. Then
the source distribution $P$ and the target distribution $Q$ are related through $\mathcal{F}$-\emph{covariate shift}
if it holds that
\begin{equation}\label{eq:covshift}
Q[A_i\,|\,\mathcal{F}] \ = \ P[A_i\,|\,\mathcal{F}], \qquad \text{for all}\ i = 1, \ldots, \ell.
\end{equation}
\end{definition}
The most relevant case in Definition~\ref{de:covshift} is incurred with $\mathcal{F} = \mathcal{H}$. However,
as we will see below, in the context of SJS also the case of $\mathcal{F}$-covariate shift with a proper
sub-$\sigma$-algebra $\mathcal{F}$ of $\mathcal{H}$ is of interest.

Chen et al.~\cite{chen&zaharia&Zou:SJS} showed in Theorem~2 by example that in general
SJS does neither imply prior probability shift nor
covariate shift. In the following example -- which is a slight modification of the example of
Chen et al. -- we show that SJS and covariate shift can be present at the same time in
proper distribution shift, i.e.\ in shift where source and target distribution truly differ.
Moreover, in Example~\ref{ex:notSJS} below we point out that covariate shift does not
imply SJS. These three observations together motivate
the results presented in this section, regarding the triangle-relationship of covariate shift, SJS and
conditional distribution invariance as introduced in Definition~\ref{de:CDI}.

\begin{example}[Covariate shift and SJS are not mutually exclusive]
We consider $\{0, 1\}$-valued variables $Y$, $X_1$ and $X_2$ under different source and target distributions
$P$ and $Q$ respectively:
\begin{itemize}
\item $P[Y=1] = 1/2 = P[Y=0]$, $Q[Y=1] = 0.6 = 1 - Q[Y=0]$.
\item $X_1$ and $X_2$ are independent conditional on $Y$, both under $P$ and $Q$.
\item The distributions of $X_1$ conditional on $Y$ are
specified by
\begin{gather*}
P[X_1=1\,|\,Y=1] = 0.6, \ P[X_1=1\,|\,Y=0] = 0.4\\
\text{and}\ Q[X_1=1\,|\,Y=1] = 1/2, \ Q[X_1=1\,|\,Y=0] = 1/2.
\end{gather*}
The example differs only in the conditional distributions of $X_1$ under $P$ from the example presented
Theorem~2 of Chen et al.~\cite{chen&zaharia&Zou:SJS}.
\item The distributions of $X_2$ conditional on $Y$ are
specified by
\begin{gather*}
P[X_2=1\,|\,Y=1] = 0.2, \ P[X_2=1\,|\,Y=0] = 0.6\\
\text{and}\ Q[X_2=1\,|\,Y=1] = 0.2, \ Q[X_2=1\,|\,Y=0] = 0.6.
\end{gather*}
\end{itemize}
To reconcile these choices with the setting of this paper as specified by Assumption~\ref{as:setting}, we
define $A_1 = \{Y=1\}$, $A_2 = \{Y=0\}$, $\mathcal{H} = \sigma(X_1, X_2)$ and $\mathcal{F}= \sigma(X_1)$.
Then it follows that
\begin{itemize}
\item $Q$ is related to $P$ through $\mathcal{F}$-SJS but not through $\{\emptyset, \Omega\}$-SJS (i.e.\
prior probability shift), and
\item $Q[A_1\,|\,\mathcal{F}] = P[A_1\,|\,\mathcal{F}]$ (i.e.\ $\mathcal{F}$-covariate shift) because
\begin{align*}
P[A_1\,|\,\mathcal{F}] & = P[Y=1\,|\,X_1=1]\,\mathbf{1}_{\{X_1=1\}} + P[Y=1\,|\,X_1=0]\,\mathbf{1}_{\{X_1=0\}}\\
& = \frac{P[Y=1]\,P[X_1=1\,|\,Y=1]\,\mathbf{1}_{\{X_1=1\}}}{P[Y=1]\,P[X_1=1\,|\,Y=1] + P[Y=0]\,P[X_1=1\,|\,Y=0]}\, +\\
& \qquad  \ \frac{P[Y=1]\,P[X_1=0\,|\,Y=1]\,\mathbf{1}_{\{X_1=0\}}}
    {P[Y=1]\,P[X_1=0\,|\,Y=1] + P[Y=0]\,P[X_1=0\,|\,Y=0]}\\
& = 0.6\,\mathbf{1}_{\{X_1=1\}} + 0.4\,\mathbf{1}_{\{X_1=0\}} \\
& = Q[Y=1\,|\,X_1=1]\,\mathbf{1}_{\{X_1=1\}} + Q[Y=1\,|\,X_1=0]\,\mathbf{1}_{\{X_1=0\}}\\
& = Q[A_1\,|\,\mathcal{F}].
\end{align*}
\end{itemize}
Proposition~\ref{pr:corrH} now implies that $Q[A_1\,|\,\mathcal{H}] = P[A_1\,|\,\mathcal{H}]$,
i.e.\ $Q$ and $P$ are related through (full) $\mathcal{H}$-covariate shift. \hfill \qed
\end{example}

\begin{example}[Covariate shift does not imply SJS]\label{ex:notSJS}
Under Assumption~\ref{as:cont}, let $\ell = 2$ and $\mathcal{F}$ be a sub-$\sigma$-algebra of $\mathcal{H}$.
Suppose that $0 < P[A_1\,|\,\mathcal{H}] < 1$ $P$-almost surely and that
$\mathcal{F}$ is \textbf{not} sufficient for $\mathcal{H}$ with respect to $\mathcal{A}$ under $P$, i.e.\
we have
\begin{equation*}
1 \ > \ P\big[P[A_1\,|\,\mathcal{F}] = P[A_1\,|\,\mathcal{H}]\bigr].
\end{equation*}
By Theorem~1 of Tasche~\cite{tasche2022class} then there exists a probability measure $Q^\ast$ on $\bigl(\Omega,
\sigma(\mathcal{H}\cup\mathcal{A})\bigr)$ with $Q^\ast \ll P$ such that $Q^\ast$ and $P$ are related
through $\mathcal{H}$-covariate shift but not through $\mathcal{F}$-covariate shift. However, if $Q^\ast$ and $P$
were related through $\mathcal{F}$-SJS, then Lemma~\ref{le:covariateShift} from Appendix~\ref{se:proofs}
would imply that after all $Q^\ast$ and $P$
are related through $\mathcal{F}$-covariate shift. But they are not, by choice of $Q^\ast$.\\
Conclusion: For any sub-$\sigma$-algebra $\mathcal{F}
\subset \mathcal{H}$ which is not sufficient for $\mathcal{H}$, there is a $Q^\ast$ such that $Q^\ast$ and
$P$ are related
through $\mathcal{H}$-covariate shift but not through $\mathcal{F}$-SJS. \hfill \qed
\end{example}

For the interpretation of the following theorem, recall that by Lemma~\ref{le:measurable},
conditional distribution invariance as introduced in
Definition~\ref{de:CDI} is equivalent to the property that the target unconditional
feature distribution $Q|\mathcal{H}$ has got an $\mathcal{F}$-measurable
density with respect to the source unconditional feature distribution $P|\mathcal{H}$.

\begin{theorem}\label{th:SJSvsCovShift}
Under Assumption~\ref{as:cont}, let $\mathcal{F}$ be a sub-$\sigma$-algebra of $\mathcal{H}$.
Consider then the following three properties:
\begin{itemize}
\item[(SJS)] $Q$ is related to $P$ through $\mathcal{F}$-SJS.
\item[(CDI)] $Q$ and $P$ are related through $\mathcal{F}$-conditional distribution invariance on
$\mathcal{H}$.
\item[(CSH)] $Q$ and $P$ are related through $\mathcal{H}$-covariate shift in the sense of
Definition~\ref{de:covshift}.
\end{itemize}
The following statements relating to these properties hold true:
\begin{itemize}
\item[(i)] Suppose that $X_1, \ldots, X_\ell$ are $\mathcal{H}$-measurable non-negative random variables
and
define the random matrix $R=R(P, \mathcal{F}, \ell)$ by \eqref{eq:defR}, i.e.\
with $\mu = P$, $\mathcal{G} = \mathcal{F}$ and $n = \ell$.
If the matrix $R$ has full rank, i.e.\ it satisfies condition \eqref{eq:rank},
then  (CSH) follows if (SJS) and (CDI) are both true.
\item[(ii)] (SJS) follows if (CDI) and (CSH) are both true.
\item[(iii)] Suppose that the following condition is true:
\begin{equation}\label{eq:positive}
P\bigl[P[A_i\,|\,\mathcal{H}] > 0\bigr]\ = \ 1, \qquad\text{for all}\ i = 1,\ldots, \ell.
\end{equation}
Then (CDI) follows if (SJS) and (CSH) are both true.
\end{itemize}
\end{theorem}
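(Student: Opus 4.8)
The three implications are knit together by a single intermediate property, $\mathcal{F}$-covariate shift ($Q[A_i\,|\,\mathcal{F}] = P[A_i\,|\,\mathcal{F}]$), and by the three tools already on hand: the characterisation of SJS through $\mathcal{F}$-measurable densities (Theorem~\ref{th:eqSJS}), the matrix identity of Lemma~\ref{le:total}, and the conditional correction formula \eqref{eq:corrH}. The plan for (i) is to descend to the level of $\mathcal{F}$, establish $\mathcal{F}$-covariate shift there, and then lift back to $\mathcal{H}$ with the correction formula. Concretely, (SJS) and Proposition~\ref{pr:properties}~(ii) give $E_{Q_j}[X_i\,|\,\mathcal{F}] = E_{P_j}[X_i\,|\,\mathcal{F}]$, so the matrix $R$ is the same whether computed under $P$ or $Q$; applying Lemma~\ref{le:total} with $\mu=P$ and then $\mu=Q$ (both with $\mathcal{G}=\mathcal{F}$, $n=\ell$) writes the vectors $(E_P[X_i\,|\,\mathcal{F}])_i$ and $(E_Q[X_i\,|\,\mathcal{F}])_i$ as $R$ acting on $(P[A_j\,|\,\mathcal{F}])_j$ and $(Q[A_j\,|\,\mathcal{F}])_j$. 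Since (CDI) forces (via Lemma~\ref{le:measurable} and a generalized Bayes step) $E_Q[X\,|\,\mathcal{F}] = E_P[X\,|\,\mathcal{F}]$ for every non-negative $\mathcal{H}$-measurable $X$, the matrix $R$ annihilates the difference of the two posterior vectors, and condition \eqref{eq:rank} yields $\mathcal{F}$-covariate shift. Feeding $Q[A_i\,|\,\mathcal{F}] = P[A_i\,|\,\mathcal{F}]$ into \eqref{eq:corrH} collapses each weight to $1$ and produces (CSH).

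For part (ii) I would bypass the correction formula and build an explicit density instead. By Lemma~\ref{le:measurable}, (CDI) supplies an $\mathcal{F}$-measurable density $f = \frac{dQ|\mathcal{H}}{dP|\mathcal{H}}$. Setting $f_i = f\,\frac{P[A_i]}{Q[A_i]}$, which is again $\mathcal{F}$-measurable, I would verify that $f_i$ is a density of $Q_i$ with respect to $P_i$ on $\mathcal{H}$, i.e.\ $Q_i[H] = E_{P_i}[f_i\,\mathbf{1}_H]$ for all $H\in\mathcal{H}$. The right-hand side reduces to $\frac{1}{Q[A_i]}\,E_P[f\,\mathbf{1}_H\,P[A_i\,|\,\mathcal{H}]]$; here (CSH) is used to replace $P[A_i\,|\,\mathcal{H}]$ by $Q[A_i\,|\,\mathcal{H}]$, after which the density property of $f$ on $\mathcal{H}$ turns the expression into $Q[A_i\cap H]/Q[A_i] = Q_i[H]$. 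The existence of the $\mathcal{F}$-measurable $f_i$ then gives (SJS) by the implication (ii)$\Rightarrow$(i) of Theorem~\ref{th:eqSJS}.

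Part (iii) traverses $\mathcal{F}$-covariate shift in the opposite direction. Substituting (CSH) into \eqref{eq:corrH} and using \eqref{eq:positive} to cancel $P[A_i\,|\,\mathcal{H}]$, I would conclude that the weight $w_i = \frac{Q[A_i\,|\,\mathcal{F}]}{P[A_i\,|\,\mathcal{F}]}$ does not depend on $i$ and equals $S = \sum_j w_j\,P[A_j\,|\,\mathcal{H}]$; summing the resulting identity $Q[A_i\,|\,\mathcal{F}] = S\,P[A_i\,|\,\mathcal{F}]$ over $i$ forces $S=1$, which is $\mathcal{F}$-covariate shift. Combining this with (SJS) through the generalized Bayes decomposition $Q[H\,|\,\mathcal{F}] = \sum_i Q_i[H\,|\,\mathcal{F}]\,Q[A_i\,|\,\mathcal{F}]$ and its $P$-analogue then gives $Q[H\,|\,\mathcal{F}] = P[H\,|\,\mathcal{F}]$, i.e.\ (CDI).

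The main obstacle I anticipate is not conceptual but a matter of null-set bookkeeping and of the $0/0$ convention. In (i) one must reconcile the full-rank statement, which holds $P$-almost surely, with the correction formula, which holds $Q$-almost surely, using $Q\ll P$. In (ii) the construction must succeed without assuming $P[A_i\,|\,\mathcal{H}]>0$, which is precisely the reason for preferring the explicit density $f_i$ over dividing inside \eqref{eq:corrH}. In (iii) one must first argue from \eqref{eq:greater0} that the common value $S$ is strictly positive, so that $w_i>0$ and hence $P[A_i\,|\,\mathcal{F}]>0$ hold $Q$-almost surely, before summing; otherwise the convention $0/0=0$ could obscure the step from equal weights to $\mathcal{F}$-covariate shift.
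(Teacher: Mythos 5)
Your proposal is correct, and for parts (i) and (iii) it follows essentially the paper's own argument: in (i) the paper likewise uses (SJS) via Proposition~\ref{pr:properties}~(ii) to identify $R(Q,\mathcal{F},\ell)$ with $R(P,\mathcal{F},\ell)$, applies Lemma~\ref{le:total} under both measures, uses (CDI) to equate the left-hand vectors, inverts $R$ to obtain $Q[A_i\,|\,\mathcal{F}]=P[A_i\,|\,\mathcal{F}]$, and then gets (CSH) from Proposition~\ref{pr:corrH}; in (iii) your derivation that the weights $Q[A_i\,|\,\mathcal{F}]/P[A_i\,|\,\mathcal{F}]$ are constant in $i$ and equal to $1$ is precisely the paper's Lemma~\ref{le:covariateShift}, including the positivity bookkeeping you flag (there the common value is called $Z$, its strict positivity comes from \eqref{eq:greater0}, and $\mathcal{F}$-covariate shift \eqref{eq:hence} is concluded via $1=E_P[Z\,|\,\mathcal{F}]$), and your closing generalized-Bayes computation $Q[H\,|\,\mathcal{F}]=\sum_i Q_i[H\,|\,\mathcal{F}]\,Q[A_i\,|\,\mathcal{F}]=\sum_i P_i[H\,|\,\mathcal{F}]\,P[A_i\,|\,\mathcal{F}]=P[H\,|\,\mathcal{F}]$ is the same chain the paper writes out. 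Part (ii) is where you genuinely deviate. The paper argues directly at the level of conditional probabilities, proving $Q[A_i\cap H\,|\,\mathcal{F}] = E_Q\bigl[\mathbf{1}_H\,Q[A_i\,|\,\mathcal{H}]\,\big|\,\mathcal{F}\bigr] = E_Q\bigl[\mathbf{1}_H\,P[A_i\,|\,\mathcal{H}]\,\big|\,\mathcal{F}\bigr] = E_P\bigl[\mathbf{1}_H\,P[A_i\,|\,\mathcal{H}]\,\big|\,\mathcal{F}\bigr] = P[A_i\cap H\,|\,\mathcal{F}]$ (using (CSH) then (CDI)), then setting $H=\Omega$ and dividing, with the convention $0/0=0$, to reach Definition~\ref{de:SJS} itself; this intermediate identity is, incidentally, the paper's Eq.~\eqref{eq:msparsecov}, which it needs anyway to connect SJS with $m$-sparse covariate shift. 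You instead construct the explicit $\mathcal{F}$-measurable class-conditional densities $f_i=f\,P[A_i]/Q[A_i]$ and invoke the implication (ii)$\Rightarrow$(i) of Theorem~\ref{th:eqSJS}; your verification is sound, since $E_{P_i}[f_i\,\mathbf{1}_H]=\frac{1}{Q[A_i]}E_P\bigl[f\,\mathbf{1}_H\,P[A_i\,|\,\mathcal{H}]\bigr]=\frac{1}{Q[A_i]}E_Q\bigl[\mathbf{1}_H\,P[A_i\,|\,\mathcal{H}]\bigr]$, and the (CSH) substitution is legitimate under the $Q$-integral where $Q[A_i\,|\,\mathcal{H}]=P[A_i\,|\,\mathcal{H}]$ holds $Q$-almost surely, yielding $Q_i[H]$. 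What your route buys is the avoidance of the division step and its $0/0$ convention (as you correctly note, no analogue of \eqref{eq:positive} is needed), together with the representation \eqref{eq:SJSdens} of the joint density for free; what the paper's route buys is the conditional-joint-law identity \eqref{eq:msparsecov} as an explicit byproduct. Your null-set remarks in the final paragraph match the paper's handling throughout.
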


See Appendix~\ref{se:proofs} for a proof of Theorem~\ref{th:SJSvsCovShift}.
As a first consequence of Theorem~\ref{th:SJSvsCovShift}, we find that in general the property `conditional
distribution invariance' does imply neither covariate shift nor SJS, as demonstrated in the following
example.

\begin{example}\label{ex:CDInotSJS}
With $P$, $\mathcal{F}$ and $\ell =2$ as in Assumption~\ref{as:setting},
define the probability measure $Q$ on $\mathcal{H}$ by taking recourse
to Lemma~\ref{le:measurable}, with some $\mathcal{F}$-measurable density
$f = \frac{d Q|\mathcal{H}}{d P|\mathcal{H}}$. Then extend $Q$ to $\sigma(\mathcal{H}\cup\mathcal{A})$ by
letting $Q[A_1\,|\,\mathcal{H}] = \bigl(P[A_1\,|\,\mathcal{H}]\bigr)^2$.

By construction, then $Q$ and $P$ are related through $\mathcal{F}$-conditional distribution
invariance on $\mathcal{H}$
but not related to each other through $\mathcal{H}$-covariate shift. Therefore, thanks to Lemma~\ref{le:binary} and
Theorem~\ref{th:SJSvsCovShift}~(i), $Q$ and $P$ cannot be related through $\mathcal{F}$-SJS either if $\mathcal{F}$
is not sufficient for $\mathcal{H}$ with respect to $\mathcal{A}$. \hfill \qed
\end{example}

What about the practical relevance of Theorem~\ref{th:SJSvsCovShift}? Condition~(CDI) might be most interesting.
As mentioned before, (CDI)
is an observable property, in the sense that it depends on the feature marginal distributions $Q|\mathcal{H}$ and
$P|\mathcal{H}$ only which both are assumed to be observed.
Hence if (CDI) is found to occur then by
Theorem~\ref{th:SJSvsCovShift}~(i) and (ii), SJS and covariate shift are not mutually exclusive alternatives but
either they both hold or neither of the two does hold. Thus if it can be assumed for whatever reasons
that source and target are related through SJS or covariate shift, covariate shift might be preferred because
it is easier to handle.

\begin{remark}
Chen et al.~\cite{chen&zaharia&Zou:SJS} defined in Definition~3 \emph{$m$-sparse covariate shift} by the property
(adapted to the setting of Section~\ref{se:setting})
\begin{equation}\label{eq:msparsedef}
Q[A_i \cap H\,|\,\mathcal{F}]\  =\ P[A_i\cap H\,|\,\mathcal{F}], \qquad
H \in \mathcal{H}, i = 1, \ldots, \ell,
\end{equation}
for some sub-$\sigma$-algebra $\mathcal{F}$ of $\mathcal{H}$. In the context of this paper, property
\eqref{eq:msparsedef} rather would be called \emph{$\mathcal{F}$-sparse covariate shift}.
In their Theorem~2, Chen et al.~\cite{chen&zaharia&Zou:SJS} correctly
claimed that \eqref{eq:msparsedef} implies $\mathcal{F}$-SJS, as can also be seen from
the proof of Theorem~\ref{th:SJSvsCovShift}~(ii) in Appendix~\ref{se:proofs}.
In their proof of Theorem~2, however,
Chen et al.~only proved that $\mathcal{H}$-covariate shift and
$\mathcal{F}$-conditional distribution invariance on $\mathcal{H}$ together imply \eqref{eq:msparsedef}.
Conversely, by summing over $i = 1, \ldots, \ell$, \eqref{eq:msparsedef} can be seen to
imply $\mathcal{F}$-conditional distribution invariance on $\mathcal{H}$.
These observations are summarized in the following two statements:
\begin{itemize}
\item[(i)] $\mathcal{H}$-covariate shift and $\mathcal{F}$-conditional distribution invariance on $\mathcal{H}$
together imply $\mathcal{F}$-sparse covariate shift as defined by \eqref{eq:msparsedef}.
\item[(ii)] $\mathcal{F}$-sparse covariate shift implies $\mathcal{F}$-SJS and
$\mathcal{F}$-conditional distribution invariance on $\mathcal{H}$.
\end{itemize}
If the source distribution $P$ and the target distribution $Q$ are related through
$\mathcal{F}$-sparse covariate shift and
the rank condition of Theorem~\ref{th:SJSvsCovShift}~(i) is satisfied, then statement~(ii) and
Theorem~\ref{th:SJSvsCovShift}~(i) imply that $P$ and $Q$ are also related through $\mathcal{H}$-covariate shift.
Consequently, `sparse covariate shift' might be broadly described by the `equation'
\begin{align*}
\textit{sparse\,covariate\,shift} & = \textit{covariate\,shift}\, \cap\,
    \textit{conditional\,distribution\,invariance}\\
& = \textit{sparse\,joint\,shift}\, \cap\, \textit{conditional\,distribution\,invariance}.\qquad\Box
\end{align*}
\end{remark}

\section{How to estimate sparse joint shift?}
\label{se:how}

Chen et al.~\cite{chen&zaharia&Zou:SJS} proposed two estimation approaches for the weight function that
corresponds to the density $\bar{h} = \frac{d Q}{d P}$ in the setting of this paper.
In the following, we begin with discussing their estimation strategy in general in Section~\ref{se:general}.
Then we continue with descriptions of their approach SEES-c in Section~\ref{se:KL}
and of their approach SEES-d in Section~\ref{se:discrete} below,
and show how these approaches can be derived in the setting of this article.

\subsection{The general approach}
\label{se:general}

The general estimation approach proposed by Chen et al.~\cite{chen&zaharia&Zou:SJS} can be described this way:
Approximate the observed feature density $\widehat{h}$
(on the target dataset) by the hypothetical density $h_\mathcal{F}$ it would have under $\mathcal{F}$-SJS.
In the following, we present this approach in mathematical terms.
\begin{subequations}
In the context of Assumption~\ref{as:cont}, let $\mathcal{F}$ be a sub-$\sigma$-algebra of $\mathcal{H}$
and keep the source distribution $P$ fixed. Define
\begin{equation}\label{eq:SF}
S(\mathcal{F})\ = \ \left\{h_Q = \frac{d Q|\mathcal{H}}{d P|\mathcal{H}}: Q \ll P, \ Q\ \text{related to}\ P
    \ \text{through}\ \mathcal{F}\text{-SJS}\right\}.
\end{equation}
In the following, we make use of the fact that
as a consequence of \eqref{eq:SJSdens} densities $h_Q\in S(\mathcal{F})$ can be represented as
\begin{equation}\label{eq:Hdens}
h_Q = \sum_{i=1}^\ell f_i\,\frac{Q[A_i]}{P[A_i]}\,P[A_i\,|\,\mathcal{H}],
\end{equation}
where $f_i$, $i = 1, \ldots, \ell$, is an $\mathcal{F}$-measurable density of the target class-conditional
feature distribution $Q_i$ with respect to the source class-conditional feature distribution $P_i$
(see Definition~\ref{de:condDist}) on $\mathcal{H}$.
\end{subequations}

Assume that $\widehat{h}$ is a density of the observed target feature distribution $\widehat{Q}|\mathcal{H}$
with respect to the (also known) source feature distribution $P|\mathcal{H}$.
The task is to approximate $\widehat{h}$ as best as possible by an element $h_Q$ of $S(\mathcal{F})$. Hence
the following optimisation problem is to be solved:
\begin{equation}\label{eq:optim}
h_Q^\ast \ = \ \arg\min\limits_{h_Q \in S(\mathcal{F})} D(\widehat{h}, h_Q),
\end{equation}
for some distance measure $D$ for densities. Together with \eqref{eq:Hdens}, this is the representation
of Eq.~(4.1) of Chen et al.~\cite{chen&zaharia&Zou:SJS} translated into the notions and notation of this paper.
An optimal density $h_Q^\ast$ is specified by the ingredients $f_i$, $Q[A_i]$, $P[A_i]$
and $P[A_i\,|\,\mathcal{H}]$, $i = 1, \ldots, \ell$, of \eqref{eq:Hdens}. The $P[A_i]$ and $P[A_i\,|\,\mathcal{H}]$
are supposed to be known from the source dataset. In any case, once the $f_i$ and the $Q[A_i]$
have been determined, the full target distribution density $\bar{h}_Q$ on $\sigma(\mathcal{H}\cup \mathcal{A})$
is given by the right-hand side of \eqref{eq:SJSdens}.

\subsection{Minimising the Kullback-Leibler divergence}
\label{se:KL}

Chen et al.~\cite{chen&zaharia&Zou:SJS} made two proposals for the choice of $D$ in \eqref{eq:optim}. The first
proposal, called SEES-c, is to make use of the Kullback-Leibler (KL) divergence.
\begin{definition}\label{de:KL}
Assume that $\pi$ and $\kappa$ are probability measures on a measurable space $(\Omega, \mathcal{M})$,
with densities $p$ and
$q$ respectively with regard to another measure $\mu$ on $(\Omega, \mathcal{M})$. Then the \emph{KL divergence} of
$p$ and $q$ is defined as
\begin{equation*}
D_{KL}(p\Vert q) \ = \ \int p\,\log\left(\frac{p}{q}\right) d\mu \ = \ E_\pi\left[\log\left(\frac{p}{q}\right)\right],
\end{equation*}
with $0\,\log(0) \stackrel{\mathrm{def}}{=} 0$ and $D_{KL}(p\Vert q) = \infty$
if $\kappa(M) = 0 < \pi(M)$ for any $M\in \mathcal{M}$ (i.e.\ if $\pi$ is not
absolutely continuous with respect to $\kappa$).
\end{definition}
\begin{subequations}
Note that in general $D_{KL}(p\Vert q) \neq D_{KL}(q\Vert p)$. By Jensen's inequality,
 $D_{KL}(p\Vert q)$ is non-negative and equals $0$ if and only if $\pi$-almost surely it holds that $p = q$.

Writing down $D_{KL}$ with $\mu = P$, $\pi =\widehat{Q}|\mathcal{H}$, as well as
$\kappa = Q|\mathcal{H}$ and making use of \eqref{eq:Hdens} gives
\begin{align}
D_{KL}(\widehat{h}\Vert h_Q) & \ = \ E_{\widehat{Q}}\left[\log\left(\frac{\widehat{h}}
    {\sum_{i=1}^\ell f_i\,\frac{Q[A_i]}{P[A_i]}\,P[A_i\,|\,\mathcal{H}]}\right)\right]\notag\\
   & \ = \ E_{\widehat{Q}}[\log(\widehat{h})] -
   E_{\widehat{Q}}\left[\log\left(\sum\nolimits_{i=1}^\ell f_i\,\tfrac{Q[A_i]}{P[A_i]}\,
       P[A_i\,|\,\mathcal{H}]\right)\right].\label{eq:KLopt}
\end{align}
On the right-hand side of \eqref{eq:KLopt}, when optimisation over $h_Q$ is intended,
$E_{\widehat{Q}}[\log(\widehat{h})]$ is a constant and
can be ignored. As a consequence, estimation of $\widehat{h}$ in \eqref{eq:optim} can be avoided
for $D = D_{KL}$. In the second term on the right-hand side of \eqref{eq:KLopt},
the $P[A_i]$ and $P[A_i\,|\,\mathcal{H}]$ are observable and can be
treated as known quantities such that the $f_i$ and $Q[A_i]$ are left as variables whose values are to be optimally
chosen.

If $\widehat{\varphi}_i$ denotes an estimator for $f_i\,Q[A_i]$, the $\widehat{\varphi}_i$ must be
$\mathcal{F}$-measurable and satisfy the constraints
\begin{equation}\label{eq:QiA}
E_{P_i}[\widehat{\varphi}_i] \ = \ Q[A_i], \qquad i = 1, \ldots, \ell.
\end{equation}
This implies
\begin{equation}\label{eq:cond}
1 \ = \ \sum_{i=1}^\ell E_{P_i}[\widehat{\varphi}_i]  \ = \ E_P\left[\sum_{i=1}^\ell \widehat{\varphi}_i\,
    \frac{P[A_i\,|\,\mathcal{H}]}{P[A_i]}\right].
\end{equation}
\end{subequations}
Combining \eqref{eq:KLopt} and \eqref{eq:cond} for the specification of \eqref{eq:optim} with
$D = D_{KL}$ gives the optimisation problem
\begin{equation}\label{eq:optKL}
\begin{split}
\max\limits_{\widehat{\varphi}_1, \ldots, \widehat{\varphi}_\ell \ge 0,\, \mathcal{F}\text{-measurable}}
    E_{\widehat{Q}}\left[\log\left(\sum_{i=1}^\ell \widehat{\varphi}_i\,
    \tfrac{P[A_i\,|\,\mathcal{H}]}{P[A_i]}\right)\right], & \\
   \text{subject to:}\quad E_P\left[\sum_{i=1}^\ell \widehat{\varphi}_i\,
    \frac{P[A_i\,|\,\mathcal{H}]}{P[A_i]}\right] \ = \ 1. &
\end{split}
\end{equation}
Observe that in \eqref{eq:optKL} the expectation $E_P$ can be approximated as a sample mean on
the source dataset while $E_{\widehat{Q}}$ can be approximated as a sample mean on the target dataset.
Once optimal $\varphi_1^\ast, \ldots, \varphi_\ell^\ast$ have been found, estimates of the target
class prior probabilities $Q[A_i]$ can be determined by means of \eqref{eq:QiA}, with the expectations
$E_{P_i}$ replaced by sample means on the strata of the source dataset defined by the observed class labels.

\eqref{eq:optKL} corresponds to the optimisation objective for SEES-c of Chen et al.~\cite{chen&zaharia&Zou:SJS}
in the case where $m$ and $I$ in Definition~1 of Chen et al.~\cite{chen&zaharia&Zou:SJS} are fixed (case of fixed
shift index set).
In the paragraph `SEES-c', Chen et al.~\cite{chen&zaharia&Zou:SJS} suggest  keeping
$\mathcal{F}$ (i.e.\ the set of input variables
to the functions $\widehat{\varphi}_i$) variable in \eqref{eq:optKL} in order to be able to
find an `optimal' $\mathcal{F}$.
For achieving this, Chen et al.~add a term on the right-hand side of \eqref{eq:optKL} that
rewards solutions with low numbers of input features.

However, it is not clear that better fit by $\mathcal{F}$ than by $\mathcal{F}'$ implies that $\mathcal{F}$-SJS is
a more appropriate model than $\mathcal{F}'$-SJS. For instance, assuming straightforward covariate shift would
give an exact fit. Similarly, with `factorizable joint shift', exact fit could even be achieved for arbitrary target
label distributions (in the binary case, see Proposition~2 of Tasche~\cite{tasche2022factorizable}).
Moreover, Chen et al.~\cite{chen&zaharia&Zou:SJS} do not provide the details of how they identify the set of features
to explain the SJS once an optimal solution to their version of \eqref{eq:optKL} has been found.
Remark~\ref{rm:LeaveOneOut} above suggests that picking the features with the $d-1$ largest contributions
$\hat{\beta}_i$ as defined in Section~4.2 of Chen et al.~\cite{chen&zaharia&Zou:SJS} would
be a theoretically sound approach.

\subsection{Second estimation strategy}
\label{se:discrete}

The second proposal of Chen et al.~\cite{chen&zaharia&Zou:SJS} for implementing \eqref{eq:optim} is
intended to apply to the case where
all the features are discrete (or have been discretised). The proposal is not straightforward at first glance because
Chen et al.\ introduce -- somewhat unexpected -- in their Eq.~(4.2) an ``ML model'' $f: \mathbb{R}^d \to \mathcal{Y}$
which in the context of their SEES-d approach appears to be a hard classifier as in \eqref{eq:classifier}.

In abstract terms, the rationale behind the SEES-d approach of Chen et al.~\cite{chen&zaharia&Zou:SJS}
can be described as follows:

In practice, one issue with the optimisation problem \eqref{eq:optim} could be
that $\widehat{h}$ and the $P[A_i\,|\,\mathcal{H}]$
which appear in \eqref{eq:Hdens} are hard to estimate due to the curse of dimensionality when the
original feature information set $\mathcal{H}$ is big because it is generated by a high-dimensional feature
vector.

As a workaround for this problem, Chen et al.~\cite{chen&zaharia&Zou:SJS} suggest replacing the density
family $S(\mathcal{F})$ as defined by \eqref{eq:SF} with $S(\mathcal{F}, \mathcal{H}')$ defined
by
\begin{subequations}
\begin{equation}\label{eq:SFH}
S(\mathcal{F}, \mathcal{H}')\ = \ \left\{h_Q = \frac{d Q|\mathcal{H}'}{d P|\mathcal{H}'}: Q \ll P, \ Q\
\text{related to}\ P    \ \text{through}\ (\mathcal{F}, \mathcal{H}')\text{-SJS}\right\},
\end{equation}
where $\mathcal{H}' \supset \mathcal{F}$ is a smaller sub-$\sigma$-algebra of
$\mathcal{H}$. This could be a sub-$\sigma$-algebra generated by a lower-dimensional set of features.

\eqref{eq:optim} then is replaced with
\begin{equation}\label{eq:optimH}
h_Q^\ast \ = \ \arg\min\limits_{h_Q \in S(\mathcal{F}, \mathcal{H}')} D(\widehat{h}', h_Q),
\end{equation}
\end{subequations}
where $\widehat{h}'$ is a density of $\widehat{Q}|\mathcal{H}'$ with respect to
$P|\mathcal{H}'$ which can be represented as
\begin{equation}\label{eq:hprime}
\widehat{h}' \ = \ E_P[\widehat{h}\,|\,\mathcal{H}'].
\end{equation}
Assume that the probability measure $\widehat{Q}$ whose $\mathcal{H}$-density $\widehat{h}$
appears in \eqref{eq:optim} and $P$ are related through $\mathcal{F}$-SJS. Then it follows that
both $\min_{h_Q \in S(\mathcal{F})} D(\widehat{h}, h_Q) = 0$ and
$\min_{h_Q \in S(\mathcal{F}, \mathcal{H}')} D(\widehat{h}', h_Q) = 0$. In the first case the interesting
(not necessarily unique) minimising density $\widehat{h} = h_Q^\ast$ has the representation
\begin{equation*}
\widehat{h} = \sum_{i=1}^\ell \widehat{f}_i\,\frac{\widehat{Q}[A_i]}{P[A_i]}\,P[A_i\,|\,\mathcal{H}],
\end{equation*}
while in the second case, thanks to \eqref{eq:hprime} the $\mathcal{H}'$-density
$\widehat{h}'$ of $\widehat{Q}$ can be written as
\begin{equation*}
\widehat{h}' = \sum_{i=1}^\ell \widehat{f}_i\,\frac{\widehat{Q}[A_i]}{P[A_i]}\,P[A_i\,|\,\mathcal{H}'].
\end{equation*}
Hence if both minimising densities are unique (as for instance when Theorem~\ref{th:ident} is applicable),
they give rise to the same sets of $\widehat{f}_i$ and $\widehat{Q}[A_i]$ such that the full
$\sigma(\mathcal{H}\cup \mathcal{A})$-density as given by the right-hand side of \eqref{eq:SJSdens}
is well-defined and unique.

Unfortunately, if $\mathcal{H'}$ in \eqref{eq:SFH} is chosen too small the solution $h_Q^\ast$
may no longer be unique since
satisfying the rank condition \eqref{eq:rank} of Theorem~\ref{th:ident} might turn out to be impossible.

Chen et al.~\cite{chen&zaharia&Zou:SJS} deploy another workaround to deal with this side effect. Let
$\mathbf{C} = (C_1, \ldots, C_\ell)$ be a hard (or crisp) classifier in the sense of \eqref{eq:classifier}.
As mentioned before, Chen et al.~\cite{chen&zaharia&Zou:SJS} talk of ``ML model''
instead of classifier and denote it by $f$.
Define $\mathcal{C} = \sigma(\mathbf{C})= \sigma\bigl(\{C_1, \ldots, C_\ell\}\bigr)$.
Chen et al.~\cite{chen&zaharia&Zou:SJS}
then solve \eqref{eq:optimH} with $\mathcal{H}_\mathcal{C} = \sigma(\mathcal{H}' \cup \mathcal{C})$ substituted for
$\mathcal{H}'$. With this augmentation, if $\mathbf{C}$ is a reasonably accurate classifier, it becomes likely
that in Theorem~\ref{th:ident} the rank condition \eqref{eq:rank} can be satisfied with the choice
$(X_1, \ldots, X_\ell) = \mathbf{C}$.

Along this line of thought, also the choice $\mathcal{H}' = \sigma(\mathcal{F} \cup \mathcal{C})$ might make
sense in \eqref{eq:optimH}. Via Eq.~\eqref{eq:obtain} in Appendix~\ref{se:proofs} below,
with $\mathcal{G}= \mathcal{F}$ and $X_i = \mathbf{1}_{C_i}$,
this choice leads to a \emph{conditional} (on $\mathcal{F}$) \emph{confusion matrix} method for the posterior
probabilities in the target domain for general SJS and the original confusion matrix method
(Saerens et al.~\cite{saerens2002adjusting}) for prior probability shift (case $\mathcal{F} = \{\emptyset, \Omega\}$).
See \eqref{eq:linearF} for details.

A full `instantiation' of the approach sketched in this sub-section  is detailed for the readers' convenience
in Appendix~\ref{se:instant} for the case where all features are discrete.

\begin{remark}
In their implementation of SEES-d, Chen et al.~\cite{chen&zaharia&Zou:SJS} chose the
$\mathcal{H}'$ in \eqref{eq:optimH}
dependent on $\mathcal{F}$, i.e.\ $\mathcal{H}' = \Phi(\mathcal{F})$ for some algorithm $\Phi$, in order
to be able to select an `optimal' $\mathcal{F}$ by comparing the minima achieved in \eqref{eq:optimH}
for different $\mathcal{F}$. As different $\mathcal{H}'$ may result in different marginal densities
$\widehat{h}'$, there is a risk of comparing apples to pears when the originally observed marginal
feature density $\widehat{h}$ is not an element of $S(\mathcal{F})$ as defined in \eqref{eq:SF}.
As a consequence, the true minimum of \eqref{eq:optim} might be missed.\hfill \qed
\end{remark}

As mentioned before at the end of Section~\ref{se:KL}, taking recourse to Corollary~\ref{co:all} above,
perhaps along the lines suggested in Remark~\ref{rm:LeaveOneOut},
could provide an alternative approach to reduce the computational complexity of solving the opimisation problems
\eqref{eq:optim} and \eqref{eq:optimH}.

\section{Conclusions}

Modelling dataset shift with sparse joint shift (SJS) as proposed by Chen et al.~\cite{chen&zaharia&Zou:SJS}
is a promising approach to inferring the full joint distribution of features and labels of
a target dataset for which only the features have been observed. The approach is based on learning from
a source dataset with complete information on features and labels. It represents a proper alternative to
the popular inference approaches based on assumptions of prior probability and covariate shift respectively.

In this paper, we have revisited and complemented the theory for SJS in some aspects, in particular
with regard to the transmission of SJS from sets of features to larger sets of features, a conditional
correction formula for the class posterior probabilities, identifiability of SJS, and
the relationship between SJS and covariate shift.

In addition, with regard to the estimation of the shift characteristics in practice, we have
complemented the rationale for the algorithms proposed by Chen et al.~\cite{chen&zaharia&Zou:SJS} and pointed out
inconsistencies which could prevent them from identifying optimal solutions, while at the same
time suggesting improvements to avoid these issues.

\section*{Acknowledgments}
The author thanks an anonymous reviewer for suggestions that helped to
improve an earlier version of this article.

\appendix

\section{Some concepts and notation from probability theory}
\label{se:notation}

The general approach to the subject in this paper is based on application of
probabilistic measure theory, as described for instance in the textbooks of
Billingsley~\cite{billingsley1986probability} and Klenke~\cite{klenke2013probability}.

For quick reference, here is a brief glossary of concepts and notation frequently used
in the paper:

\textbf{Measurable space $(\Omega, \mathcal{M})$.} According to Billingsley~\cite{billingsley1986probability} (p.~16)
``In probability theory $\Omega$ consists of all the possible results or outcomes
$\omega$ of an experiment or observation.'' $\mathcal{M}$ is a $\sigma$-\emph{algebra}
(also called $\sigma$-field), i.e.\ a family of
subsets of $\Omega$ with the following three properties:
\begin{itemize}
\item[(i)] $\emptyset \in \mathcal{M}$,
\item[(ii)] $M \in \mathcal{M} \Rightarrow \Omega \setminus M \in \mathcal{M}$.
\item[(iii)] $M_n \in \mathcal{M}$ for $n \in \mathbb{N}$ $\Rightarrow$ $\bigcup_{n\in\mathbb{N}} M_n \in \mathcal{M}$.
\end{itemize}
The intersections of $\sigma$-algebras again are $\sigma$-algebras. Is $\mathcal{M}_0$ is any family of
subsets of $\Omega$, the (non-empty) intersection of all $\sigma$-algebras on $\Omega$ that contain
$\mathcal{M}_0$ is called $\sigma$-\emph{algebra generated by} $\mathcal{M}_0$ and denoted by $\sigma(\mathcal{M}_0)$.
The $\sigma$-algebra $\sigma(\mathcal{M}_0)$ is also the smallest $\sigma$-algebra containing $\mathcal{M}_0$.

\textbf{Probability space $(\Omega, \mathcal{M}, P)$.} $(\Omega, \mathcal{M})$ is a measurable space,
$P$ is a \emph{probability measure} on $(\Omega, \mathcal{M})$, i.e.\ a measure
(see Chapter~2 of Billingsley~\cite{billingsley1986probability}) with $P[\Omega] = 1$.
For $M\in \mathcal{M}$, $P[M]$ is the
\emph{probability of the event} $M$. A property of outcomes $\omega\in\Omega$ is said
to hold $P$-\emph{almost surely} if
the event of all the $\omega$ with the property has probability $1$ under $P$.

\textbf{Mappings and random variables.} For a fixed set $S$, the function $\mathbf{1}_S$ with
$\mathbf{1}_S(s) = 1$ for $s \in S$ and
$\mathbf{1}_S(s) = 0$ for $s \notin S$ is called \emph{indicator function}.

A mapping $X: (\Omega, \mathcal{M}) \to (\Omega', \mathcal{M}')$ is $\mathcal{M}'$-$\mathcal{M}$-\emph{measurable}
if $X^{-1}(M') \in \mathcal{M}$ for all $M' \in \mathcal{M}'$.

A real-valued measurable mapping $X$
defined on a probability space $(\Omega, \mathcal{M}, P)$ is also called \emph{random variable} or \emph{statistic}.

\textbf{Information set.} $\sigma$-algebras, on the one hand, are needed for consistent definitions of
integrals and probabilities. On the other hand, they are of interest because they can be used to reflect
available information. If $\mathcal{M}_1, \mathcal{M}_2$ are $\sigma$-algebras with $\mathcal{M}_1 \subset
\mathcal{M}_2$ then $\mathcal{M}_1$ conveys less information than $\mathcal{M}_2$. If $X$ is a mapping
into a measurable space $(\Omega', \mathcal{M}')$, the $\sigma$-algebra $\sigma(X) =
\sigma(X^{-1}(M'): M'\in \mathcal{M})$ reflects the information obtainable by observation of the outcomes of $X$.
For these reasons, $\sigma$-algebras are also called \emph{information sets}
(see, e.g., Holzmann and Eulert~\cite{HolzmannInformation2014}).

\textbf{Expected value and conditional expectation.} If $(\Omega, \mathcal{M}, P)$ is a probability space and $X$ an
integrable random variable
on $\Omega$, the \emph{expected value} of $X$ is $E_P[X] = \int X\,d P$. 

If $\mathcal{H}\subset \mathcal{M}$ is a sub-$\sigma$-algebra of $\mathcal{M}$, any $\mathcal{H}$-measurable
integrable random variable $Z$ is called \emph{expectation of} $X$ \emph{conditional on} $\mathcal{H}$ if
$E_P[\mathbf{1}_H\,Z] = E_P[\mathbf{1}_H\,X]$ for all $H \in \mathcal{H}$. If $Z_1$ and $Z_2$ are both
expectations of $X$ conditional on $\mathcal{H}$ it follows that $P[Z_1 \neq Z_2] = 0$. $E_P[X\,|\,\mathcal{H}]$ is
short hand for any expectation $Z$ of $X$ conditional on $\mathcal{H}$.

For any $M\in \mathcal{M}$, the conditional expectation $E_P[\mathbf{1}_M\,|\,\mathcal{H}] = P[M\,|\,\mathcal{H}]$
is called \emph{probability of} $M$ \emph{conditional on} $\mathcal{H}$.

\textbf{Absolute continuity and densities.} Let $(\Omega, \mathcal{M})$ be a measurable space and $\mu$ and $\nu$
be measures on $(\Omega, \mathcal{M})$. The measure $\mu$ is \emph{absolutely continuous with respect to} $\nu$
(short hand: $\mu \ll \nu$) if $\nu[N] = 0$ for any $N \in \mathcal{M}$ implies $\mu[N] = 0$.

By the Radon-Nikodym theorem (Theorem~32.2 of Billingsley~\cite{billingsley1986probability}),
if $\mu$ and $\nu$ are both
$\sigma$-finite then $\mu \ll \nu$ is equivalent to the existence of a density $\varphi = \frac{d \mu}{d \nu}$
of $\mu$ with respect to $\nu$, i.e.\ of a non-negative $\mathcal{M}$-measurable function $\varphi$ such that
$\mu[M] = \int_M \varphi\,d \nu$ for all $M\in \mathcal{M}$.

Let $Q \ll P$ be probability measures on $(\Omega, \mathcal{M})$ with density $\varphi = \frac{d Q}{d P}$ and
$\mathcal{H}$ be a sub-$\sigma$-algebra of $\mathcal{M}$. Then $\varphi_\mathcal{H} = E_P[\varphi\,|\,\mathcal{H}]$
is a density of $Q$ with respect to $P$ on the measurable space $(\Omega, \mathcal{H})$. In this case, the
application of the operator $E_P[\cdot\,|\,\mathcal{H}]$ corresponds to the `integrating out a set of variables'
operation in the setting of multivariate Lebesgue-densities in order to obtain a marginal distribution.

We use short hand $P|\mathcal{H}$ ($P$ restricted to $\mathcal{H}$) for the probability space
$(\Omega, \mathcal{H}, P)$.

\section{Proofs}
\label{se:proofs}

\begin{proof}[\textbf{Proof of Proposition~\ref{pr:properties}.}]
Since (i) implies (iii) with $\mathcal{G} = \mathcal{H}$,
we only need to prove that \eqref{eq:X} holds if \eqref{eq:restrictSJS} is true.

Observe that under Assumption~\ref{as:cont}, $Q$ is also absolutely continuous with respect to $P$ on $\mathcal{H}$.
Moreover, by Lemma~1 of Tasche~\cite{tasche2022factorizable} each target
class-conditional feature distribution $Q_i$ is also
absolutely continuous with respect to the source class-conditional feature distribution $P_i$ on $\mathcal{H}$.

Let $G \in \mathcal{G}$ and
$F \in\mathcal{F}$ as well as $i \in \{1, \ldots, \ell\}$
be given. Denote by $h_i$ any $\mathcal{H}$-density of $Q_i$ with respect to $P_i$.
Then $f_i = E_{P_i}[h _i\,|\,\mathcal{F}]$ is $\mathcal{F}$-measurable and a density of $Q_i$ with respect to $P_i$
on $\mathcal{F}$.
Therefore, we obtain for any $\bar{F} \in \mathcal{F}$
\begin{align*}
E_{Q_i}\bigl[\mathbf{1}_{\bar{F}}\,P_i[G\cap F\,|\,\mathcal{F}]\bigr] & =
    E_{P_i}\bigl[f_i\,\mathbf{1}_{\bar{F}}\,P_i[G\cap F\,|\,\mathcal{F}]\bigr]\\
    & = E_{P_i}\bigl[f_i\,\mathbf{1}_{\bar{F}\cap F}\,P_i[G\,|\,\mathcal{F}]\bigr]\\
    & = E_{P_i}\bigl[f_i\,\mathbf{1}_{\bar{F}\cap F}\,Q_i[G\,|\,\mathcal{F}]\bigr]\\
    & = E_{Q_i}\bigl[\mathbf{1}_{\bar{F}\cap F}\,Q_i[G\,|\,\mathcal{F}]\bigr]\\
    & = Q_i[\bar{F}\cap F\cap G].
\end{align*}
This proves \eqref{eq:SJS} for
$H \in \mathfrak{P}=\{F\cap G: F\in\mathcal{F}, G \in \mathcal{G}\} \subset
\mathcal{H}$.
The set family $\mathfrak{P}$ is $\cap$-stable. In addition, it can readily be shown that the
set family of all $H\in\mathcal{H}$ such that \eqref{eq:SJS} is true is an $\lambda$-system
(or Dynkin system) in the sense of (Billingsley~\cite{billingsley1986probability}, p.~36).
Since $\mathcal{H} = \sigma(\mathfrak{P})$, it follows from Dynkin's $\pi$-$\lambda$-theorem
(Billingsley~\cite{billingsley1986probability}, Theorem~3.6) that $\eqref{eq:X}$ is true for $X = \mathbf{1}_H$ if
$H \in \mathcal{H}$. This implies that $\eqref{eq:X}$ holds for any $\mathcal{H}$-measurable
simple random variable, and by monotone convergence even for all non-negative $\mathcal{H}$-measurable
random variables (Billingsley~\cite{billingsley1986probability}, Theorems~13.5 and 34.2).
\end{proof}

\begin{proof}[\textbf{Proof of Proposition~\ref{pr:corrH}}]
By Corollary~1 of Tasche~\cite{tasche2022factorizable}, we have the density
$f =\sum_{j=1}^\ell f_j\,\frac{Q[A_j]}{P[A_j]}\,P[A_j\,|\,\mathcal{F}]$
of $Q$ with respect to $P$ on $\mathcal{F}$, where $f_j$ denotes a density of $Q_j$ with respect to
$P_j$ on $\mathcal{F}$. As a consequence, it holds that $Q[f>0] = 1$. Corollary~2 of
Tasche~\cite{tasche2022factorizable} then implies that $Q$-almost surely
\begin{equation}\label{eq:fi}
Q[A_i\,|\,\mathcal{F}] \ =\ f_i\,\frac{Q[A_i]}{P[A_i]}\,P[A_i\,|\,\mathcal{F}]\,/\,f, \quad
    i = 1, \ldots, \ell.
\end{equation}
Define
\begin{equation*}
N \ = \ \bigcup_{j=1}^\ell \bigl\{P[A_j\,|\,\mathcal{F}]=0, P[A_j\,|\,\mathcal{H}]>0\bigr\}.
\end{equation*}
Then it follows that $Q[N] = 0$ because
\begin{align*}
E_P\bigl[P[A_j\,|\,\mathcal{H}]\,\mathbf{1}_{\{P[A_j\,|\,\mathcal{F}]=0\}}\bigr] & =
    P\bigl[A_j \cap \bigl\{P[A_j\,|\,\mathcal{F}]=0\bigr\}\bigr]\\
    & = E_P\bigl[P[A_j\,|\,\mathcal{F}]\,\mathbf{1}_{\{P[A_j\,|\,\mathcal{F}]=0\}}\bigr]\\
    & = 0,
\end{align*}
and therefore $0 = P\bigl[P[A_j\,|\,\mathcal{F}]=0, P[A_j\,|\,\mathcal{H}]>0\bigr]$ and also
\begin{equation*}
0\ =\ Q\bigl[P[A_j\,|\,\mathcal{F}]=0, P[A_j\,|\,\mathcal{H}]>0\bigr].
\end{equation*}
With the convention $0 / 0 = 0$, therefore \eqref{eq:fi} implies for $i=1, \ldots, \ell$ that
\begin{equation}\label{eq:replace}
f\,\frac{Q[A_i\,|\,\mathcal{F}]}{P[A_i\,|\,\mathcal{F}]}\,P[A_i\,|\,\mathcal{H}] \ =\
    f_i\,\frac{Q[A_i]}{P[A_i]}\,P[A_i\,|\,\mathcal{H}]
\end{equation}
holds on $\Omega \setminus N$ and hence $Q$-almost surely.

Theorem~\ref{th:eqSJS} implies that
\begin{equation*}
h \  =\ \sum_{j=1}^\ell f_j\,\frac{Q[A_j]}{P[A_j]}\,P[A_j\,|\,\mathcal{H}]
\end{equation*}
is a density of $Q$ with respect to $P$ on $\mathcal{H}$ and $Q[h>0]=1$ holds true. In this case,
Corollary~2 of Tasche~\cite{tasche2022factorizable} implies that $Q$-almost surely
\begin{equation*}
Q[A_i\,|\,\mathcal{H}] \ =\ \frac{f_i\,\frac{Q[A_i]}{P[A_i]}\,P[A_i\,|\,\mathcal{H}]}
 {\sum_{j=1}^\ell f_j\,\frac{Q[A_j]}{P[A_j]}\,P[A_j\,|\,\mathcal{H}]}.\quad
    i = 1, \ldots, \ell.
\end{equation*}
Substituting the left-hand side of \eqref{eq:replace} for the terms
$f_j\,\frac{Q[A_j]}{P[A_j]}\,P[A_j\,|\,\mathcal{H}]$, $j=1, \ldots, \ell$, now implies \eqref{eq:greater0}
and \eqref{eq:corrH}.
\end{proof}

\begin{proof}[\textbf{Proof of Lemma~\ref{le:total}}]
\eqref{eq:matrix} is implied by
\begin{align*}
E_\mu[X\,|\,\mathcal{G}] & = \sum_{j=1}^\ell E_\mu[X\,\mathbf{1}_{A_j}\,|\,\mathcal{G}]\\
    & = \sum_{j=1}^\ell E_\mu\bigl[E_\mu[X\,\mathbf{1}_{A_j}\,|\,\sigma(\mathcal{G}\cup\mathcal{A})]\bigm|
        \mathcal{G}\bigr]\\
    & = \sum_{j=1}^\ell E_\mu\bigl[\mathbf{1}_{A_j}\,E_\mu[X\,|\,\sigma(\mathcal{G}\cup\mathcal{A})]
        \bigm| \mathcal{G}\bigr]\\
    & \stackrel{\eqref{eq:genBayes}}{=} \sum_{j=1}^\ell
        E_\mu\bigl[\mathbf{1}_{A_j}\,E_{\mu_j}[X\,|\,\mathcal{G}]\bigm| \mathcal{G}\bigr]\\
    & = \sum_{j=1}^\ell \mu[A_j\,|\,\mathcal{G}]\,E_{\mu_j}[X\,|\,\mathcal{G}],
\end{align*}
for any non-negative $\mathcal{M}$-measurable random variable $X$.
\end{proof}

\begin{proof}[\textbf{Proof of Theorem~\ref{th:ident}}] Define $\mathcal{G} = \sigma(\mathcal{F}\cup\mathcal{F}')$.
By Corollary~\ref{co:all}, both $Q$ and $Q'$ are related to $P$ through $\mathcal{G}$-SJS.
Hence for $i, j \in \{1, \ldots, \ell\}$ it holds that
\begin{equation*}
E_{P_j}[X_i\,|\,\mathcal{G}] \  = \ E_{Q_j}[X_i\,|\,\mathcal{G}] \ = \ E_{Q'_j}[X_i\,|\,\mathcal{G}].
\end{equation*}
Therefore, we have $R(P, \mathcal{G}, \ell) =
R(Q, \mathcal{G}, \ell) = R(Q', \mathcal{G}, \ell) = R$.
The matrix $R$ is invertible with probability $1$ under $P$ and therefore also under
$Q$ and $Q'$. Hence, because $Q$ and $Q'$ are equal on $\mathcal{H}$, we obtain from Lemma~\ref{le:total} that
\begin{equation}\label{eq:obtain}
\begin{pmatrix}
Q'[A_1\,|\,\mathcal{G}] \\
\vdots \\
Q'[A_\ell\,|\,\mathcal{G}]
\end{pmatrix}
\ = \
\begin{pmatrix}
Q[A_1\,|\,\mathcal{G}] \\
\vdots \\
Q[A_\ell\,|\,\mathcal{G}]
\end{pmatrix}
\ = \ R^{-1} \times
\begin{pmatrix}
E_Q[X_1\,|\,\mathcal{G}] \\
\vdots \\
E_Q[X_\ell\,|\,\mathcal{G}]
\end{pmatrix}.
\end{equation}
Proposition~\ref{pr:corrH} and \eqref{eq:obtain} together now imply that
\begin{equation}\label{eq:equal}
\frac{\frac{Q[A_i\,|\,\mathcal{G}]}{P[A_i\,|\,\mathcal{G}]} P[A_i\,|\,\mathcal{H}]}
{\sum_{j=1}^\ell \frac{Q[A_j\,|\,\mathcal{G}]}{P[A_j\,|\,\mathcal{G}]} P[A_j\,|\,\mathcal{H}]}
\ = \ Q[A_i\,|\,\mathcal{H}] \ = \ Q'[A_i\,|\,\mathcal{H}], \quad
    i = 1\,\ldots, \ell.
\end{equation}
By \eqref{eq:Hbar}, for each $M \in \sigma(\mathcal{H}\cup \mathcal{A})$ there are $H_1, \ldots, H_\ell$ such
that $M = \bigcup_{i=1}^\ell (H_i\cap A_i) $. Therefore, it follows from \eqref{eq:equal} that
\begin{align*}
Q'[M] & = \sum_{i=1}^\ell Q'[H_i \cap A_i]\\
    & = \sum_{i=1}^\ell E_{Q'}\bigl[\mathbf{1}_{H_i}\,Q'[A_i\,|\,\mathcal{H}]\bigr]\\
    & = \sum_{i=1}^\ell E_{Q'}\bigl[\mathbf{1}_{H_i}\,Q[A_i\,|\,\mathcal{H}]\bigr]\\
    & = \sum_{i=1}^\ell E_{Q}\bigl[\mathbf{1}_{H_i}\,Q[A_i\,|\,\mathcal{H}]\bigr]\\
    & = \sum_{i=1}^\ell Q[H_i \cap A_i]\\
    & = Q[M].
\end{align*}
This completes the proof.
\end{proof}

\begin{proof}[\textbf{Proof of Lemma~\ref{le:binary}}]
Define $\mathcal{G} =  \sigma(\mathcal{F}\cup\mathcal{F}')$. We have to show that
\begin{equation*}
P\bigl[\det\bigl(R(P, \mathcal{G}, 2)\bigr) \neq 0\bigr]\ =\ 1
\end{equation*}
and \eqref{eq:binary} are equivalent. First, we obtain
\begin{align}
\det(R) & =  \begin{pmatrix}
E_{P_1}[X_1\,|\,\mathcal{G}] & E_{P_2}[X_1\,|\,\mathcal{G}]\\
E_{P_1}[X_2\,|\,\mathcal{G}] & E_{P_2}[X_2\,|\,\mathcal{G}]
\end{pmatrix} \notag\\
    & = \begin{pmatrix}
\frac{E_P[\mathbf{1}_{A_1}\,X_1\,|\,\mathcal{G}]}{P[A_1\,|\,\mathcal{G}]} &
\frac{E_P[\mathbf{1}_{A_2}\,X_1\,|\,\mathcal{G}]}{P[A_2\,|\,\mathcal{G}]}\\
\frac{E_P[\mathbf{1}_{A_1}\,X_2\,|\,\mathcal{G}]}{P[A_1\,|\,\mathcal{G}]} &
\frac{E_P[\mathbf{1}_{A_2}\,X_2\,|\,\mathcal{G}]}{P[A_2\,|\,\mathcal{G}]}
\end{pmatrix}. \label{eq:comment}
\end{align}
Note that in \eqref{eq:comment}, $P[A_j\,|\,\mathcal{G}] = 0$ implies
$E_P[\mathbf{1}_{A_j}\,X_i\,|\,\mathcal{G}] = 0$ and, therefore, ``$0/0$'' is understood as $0$.
It follows that
\begin{multline*}
\det(R) \neq 0 \\ \iff \quad
E_P[\mathbf{1}_{A_1}\,X_1\,|\,\mathcal{G}]\,E_P[\mathbf{1}_{A_2}\,X_2\,|\,\mathcal{G}] -
E_P[\mathbf{1}_{A_2}\,X_1\,|\,\mathcal{G}]\,E_P[\mathbf{1}_{A_1}\,X_2\,|\,\mathcal{G}] \neq 0.
\end{multline*}
Making now use of \eqref{eq:defX} and hence $X_2 = 1 - X_1$, we obtain
\begin{eqnarray*}
\lefteqn{E_P[\mathbf{1}_{A_1}\,X_1\,|\,\mathcal{G}]\,E_P[\mathbf{1}_{A_2}\,X_2\,|\,\mathcal{G}] -
E_P[\mathbf{1}_{A_2}\,X_1\,|\,\mathcal{G}]\,E_P[\mathbf{1}_{A_1}\,X_2\,|\,\mathcal{G}]}  \\
 & = & E_P\big[P[A_1\,|\,\mathcal{H}]^2\,\big|\, \mathcal{G}\big] \Big(P[A_2\,|\,\mathcal{G}] -
        E_P\big[P[A_1\,|\,\mathcal{H}]\,P[A_2\,|\,\mathcal{H}]\,\big|\, \mathcal{G}\big]\Big) \\
 & \qquad & - \Big(P[A_1\,|\,\mathcal{G}] -
        E_P\big[P[A_1\,|\,\mathcal{H}]^2\,\big|\, \mathcal{G}\big]\Big)
        E_P\big[P[A_1\,|\,\mathcal{H}]\,P[A_2\,|\,\mathcal{H}]\,\big|\, \mathcal{G}\big] \\
 & = & E_P\big[P[A_1\,|\,\mathcal{H}]^2\,\big|\, \mathcal{G}\big] \Big(1 -2\,P[A_1\,|\,\mathcal{G}] +
     E_P\big[P[A_1\,|\,\mathcal{H}]^2\,\big|\, \mathcal{G}\big]\Big)\\
 &  & \qquad\qquad
     -\ \Big(P[A_1\,|\,\mathcal{G}] -     E_P\Big[P[A_1\,|\,\mathcal{H}]^2\,\big|\, \mathcal{G}\Big]\Big)^2 \\
 & = & E_P\big[P[A_1\,|\,\mathcal{H}]^2\,\big|\, \mathcal{G}\big] - P[A_1\,|\,\mathcal{G}]^2 \\
 & = & \mathrm{var}_P\bigl[P[A_1\,|\,\mathcal{H}]\,\big|\,\mathcal{G}\bigr].
\end{eqnarray*}
As a result, we have found that
\begin{equation*}
\det(R) \neq 0 \qquad \iff \qquad \mathrm{var}_P\bigl[P[A_1\,|\,\mathcal{H}]\,\big|\,\mathcal{G}\bigr] >0.
\end{equation*}
Since $\mathrm{var}_P\bigl[P[A_1\,|\,\mathcal{H}]\,\big|\,\mathcal{G}\bigr] >0$ $\Leftrightarrow$
$P[A_1\,|\,\mathcal{H}] \neq P[A_1\,|\,\mathcal{G}]$ with positive probability,
the equivalence of \eqref{eq:rank} and \eqref{eq:binary} follows.
\end{proof}

\begin{lemma}\label{le:covariateShift}
Under Assumption~\ref{as:cont}, let $\mathcal{F}$ be a sub-$\sigma$-algebra of $\mathcal{H}$ such
that $Q$ is related to $P$ through $\mathcal{F}$-SJS. Suppose in addition that
$Q$ and $P$ are also related by $\mathcal{H}$-covariate shift (i.e.\ \eqref{eq:covshift} is true) and
that condition \eqref{eq:positive} is satisfied. Then it follows that
\begin{equation}\label{eq:hence}
Q[A_i\,|\,\mathcal{F}] \ = \ P[A_i\,|\,\mathcal{F}], \qquad i = 1, \ldots, \ell.
\end{equation}
\end{lemma}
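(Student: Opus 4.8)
The plan is to start from the conditional posterior correction formula of Proposition~\ref{pr:corrH}, which is available precisely because $Q$ is related to $P$ through $\mathcal{F}$-SJS, and then to feed in the $\mathcal{H}$-covariate-shift identity $Q[A_i\,|\,\mathcal{H}] = P[A_i\,|\,\mathcal{H}]$ to collapse the correction factors. First I would record a preparatory consequence of \eqref{eq:positive}. The argument already used in the proof of Proposition~\ref{pr:corrH} shows that the set $N = \bigcup_j \{P[A_j\,|\,\mathcal{F}] = 0,\ P[A_j\,|\,\mathcal{H}] > 0\}$ is $P$-null; since \eqref{eq:positive} forces $P[A_j\,|\,\mathcal{H}] > 0$ almost surely, this upgrades to $P[A_j\,|\,\mathcal{F}] > 0$ almost surely for every $j$. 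Hence the ratios $w_j = Q[A_j\,|\,\mathcal{F}]/P[A_j\,|\,\mathcal{F}]$ appearing in \eqref{eq:corrH} are genuine (no appeal to the $0/0$ convention is needed), the posteriors $p_j = P[A_j\,|\,\mathcal{H}]$ are strictly positive, and by \eqref{eq:greater0} the denominator $\sum_j w_j p_j$ is strictly positive as well.

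Next I would substitute. Writing \eqref{eq:corrH} as $Q[A_i\,|\,\mathcal{H}] = w_i p_i / \sum_j w_j p_j$ and replacing the left-hand side by $p_i = P[A_i\,|\,\mathcal{H}]$ via the assumed covariate shift \eqref{eq:covshift} (taken with $\mathcal{F} = \mathcal{H}$), I clear the positive denominator to obtain $p_i \sum_j w_j p_j = w_i p_i$ almost surely. Cancelling the strictly positive factor $p_i$ then yields $\sum_j w_j p_j = w_i$ for every $i$. In particular all the $w_i$ equal the single quantity $S = \sum_j w_j p_j$, so that $w_1 = \cdots = w_\ell$ almost surely.

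Finally I would pin down this common value. From $w_i = S$ we get $Q[A_i\,|\,\mathcal{F}] = S\,P[A_i\,|\,\mathcal{F}]$ for each $i$; summing over $i$ and using that $A_1, \ldots, A_\ell$ partition $\Omega$, so that $\sum_i Q[A_i\,|\,\mathcal{F}] = 1 = \sum_i P[A_i\,|\,\mathcal{F}]$, forces $S = 1$. Substituting back gives $Q[A_i\,|\,\mathcal{F}] = P[A_i\,|\,\mathcal{F}]$ for all $i$, which is exactly \eqref{eq:hence}.

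The step I expect to be most delicate is not any computation but the bookkeeping around well-definedness and the mode of almost-sure equality. Condition \eqref{eq:positive} is used twice in an essential way: once to guarantee $P[A_i\,|\,\mathcal{F}] > 0$ (so that the correction factors $w_i$ are legitimate rather than artefacts of the $0/0$ convention), and once to license the cancellation of $p_i = P[A_i\,|\,\mathcal{H}]$. Moreover, since Proposition~\ref{pr:corrH} supplies \eqref{eq:corrH} only $Q$-almost surely, every identity above must be read in the $Q$-almost-sure sense, and the conclusion \eqref{eq:hence} is accordingly obtained $Q$-almost surely — which is the natural reading here, as $Q[A_i\,|\,\mathcal{F}]$ is itself determined only up to $Q$-null sets.
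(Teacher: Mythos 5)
Your proof is correct and follows essentially the same route as the paper's: both substitute the covariate-shift identity into the conditional correction formula of Proposition~\ref{pr:corrH}, use condition \eqref{eq:positive} (together with the null-set argument from that proposition's proof, which yields $P[A_i\,|\,\mathcal{F}]>0$ almost surely) to cancel $P[A_i\,|\,\mathcal{H}]$, conclude that all ratios $Q[A_i\,|\,\mathcal{F}]/P[A_i\,|\,\mathcal{F}]$ coincide with the common normalising sum, and then show that this sum equals $1$. The only cosmetic differences are that the paper pins down the normaliser via $1=E_P[Z\,|\,\mathcal{F}]$ and treats the event $\{Z=0\}$ explicitly, whereas you sum the identities over $i$ and invoke \eqref{eq:greater0} to work $Q$-almost surely throughout --- which, as you correctly note, is the natural sense in which \eqref{eq:hence} is stated and subsequently used.
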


\begin{proof}[Proof of Lemma~\ref{le:covariateShift}]
It holds that $P\bigl[P[A_i\,|\,\mathcal{F}] = 0,\, P[A_i\,|\,\mathcal{H}] > 0\bigr] =0$,
as shown in the proof of Proposition~\ref{pr:corrH}. This implies
\begin{equation*}
P\bigl[P[A_i\,|\,\mathcal{F}]= 0\bigr]\ \le\ P\bigl[P[A_i\,|\,\mathcal{H}] = 0\bigr]\ =\ 0.
\end{equation*}
Define $Z = \sum_{j=1}^\ell \frac{Q[A_j\,|\,\mathcal{F}]}{P[A_j\,|\,\mathcal{F}]}\,
    P[A_j\,|\,\mathcal{\mathcal{H}}]$. Then it holds that
\begin{equation}\label{eq:zero}
Z = 0 \quad\Rightarrow\quad \frac{Q[A_j\,|\,\mathcal{F}]}{P[A_j\,|\,\mathcal{F}]}\,
    P[A_j\,|\,\mathcal{\mathcal{H}}]\ = \ 0 \quad \text{for all}\ j = 1, \ldots, \ell.
\end{equation}
By Proposition~\ref{pr:corrH} and the assumption of covariate shift, we obtain on the
event $\{Z >0\}$ for all $i = 1, \ldots, \ell$
\begin{equation*}
P[A_i\,|\,\mathcal{H}]\ = \ \frac{Q[A_i\,|\,\mathcal{F}]}{P[A_i\,|\,\mathcal{F}]}\,
    \frac{P[A_i\,|\,\mathcal{H}]}{Z}.
\end{equation*}
By \eqref{eq:zero} and $P[A_i\,|\,\mathcal{H}] > 0$, it follows
that for all $i = 1, \ldots, \ell$ with probability $1$ under $P$
\begin{equation*}
Z \ = \ \frac{Q[A_i\,|\,\mathcal{F}]}{P[A_i\,|\,\mathcal{F}]}.
\end{equation*}
This implies $1 = E_P[Z\,|\,\mathcal{F}] = \frac{Q[A_i\,|\,\mathcal{F}]}{P[A_i\,|\,\mathcal{F}]}$ and hence
also \eqref{eq:hence}.
\end{proof}

\begin{proof}[\textbf{Proof of Theorem~\ref{th:SJSvsCovShift}}]
Show (i): Since $R$ is invertible by assumption, Lemma \ref{le:total} implies $Q[A_i\,|\,\mathcal{F}] =
P[A_i\,|\,\mathcal{F}]$ for all $i= 1, \ldots, \ell$. (CSH) now follows from
Proposition~\ref{pr:corrH}.

Show (ii): For any $i = 1, \ldots, \ell$ and $H \in \mathcal{H}$, it holds that
\begin{align}
Q[A_i \cap H\,|\,\mathcal{F}] & = E_Q\bigl[\mathbf{1}_H\,Q[A_i\,|\,\mathcal{H}]\,\big|\,\mathcal{F}\bigr]\notag\\
    & \stackrel{(CSH)}{=} E_Q\bigl[\mathbf{1}_H\,P[A_i\,|\,\mathcal{H}]\,\big|\,\mathcal{F}\bigr]\notag\\
    & \stackrel{(CDI)}{=} E_P\bigl[\mathbf{1}_H\,P[A_i\,|\,\mathcal{H}]\,\big|\,\mathcal{F}\bigr]\notag\\
    & = E_P\bigl[P[A_i\cap H\,|\,\mathcal{H}]\,\big|\,\mathcal{F}\bigr]\notag\\
    & = P[A_i\cap H\,|\,\mathcal{F}].\label{eq:msparsecov}
\end{align}
Eq.~\eqref{eq:msparsecov}, with $H = \Omega$, implies $Q[A_i\,|\,\mathcal{F}] = P[A_i\,|\,\mathcal{F}]$,
$i=1, \ldots, \ell$. Dividing both sides of \eqref{eq:msparsecov} by $Q[A_i\,|\,\mathcal{F}]$ then
gives (with the convention `$0 / 0 = 0$')
\begin{equation*}
Q_i[H\,|\,\mathcal{F}] = \frac{Q[A_i \cap H\,|\,\mathcal{F}]}{Q[A_i\,|\,\mathcal{F}]} =
\frac{P[A_i \cap H\,|\,\mathcal{F}]}{P[A_i\,|\,\mathcal{F}]} = P_i[H\,|\,\mathcal{F}],
\end{equation*}
i.e.\ $Q$  is related to $P$ through $\mathcal{F}$-SJS.

Show (iii): If (SJS) and (CSH) as well as \eqref{eq:positive} are true then Lemma~\ref{le:covariateShift}
is applicable such that \eqref{eq:hence} follows.
As a consequence,
we can make the following calculation that applies to all $H\in\mathcal{H}$:
\begin{align*}
Q[H\,|\,\mathcal{F}] & = E_Q\bigl[Q[H\,|\,\sigma(\mathcal{F}\cup\mathcal{A})]\,\big|\,\mathcal{F}\bigr]\\
    & = \sum_{i=1}^\ell E_Q\bigl[Q_i[H\,|\,\mathcal{F}]\,\mathbf{1}_{A_i}\,\big|\,\mathcal{F}\bigr]\\
    & \stackrel{(SJS)}{=} \sum_{i=1}^\ell E_Q\bigl[P_i[H\,|\,\mathcal{F}]\,\mathbf{1}_{A_i}\,\big|\,\mathcal{F}\bigr]\\
    & = \sum_{i=1}^\ell P_i[H\,|\,\mathcal{F}]\,Q[A_i\,|\,\mathcal{F}]\\
    & \stackrel{\eqref{eq:hence}}{=}
        \sum_{i=1}^\ell P_i[H\,|\,\mathcal{F}]\,P[A_i\,|\,\mathcal{F}]\\
    & = \sum_{i=1}^\ell P[A_i\cap H\,|\,\mathcal{F}]\\
    & = P[H\,|\,\mathcal{F}].
\end{align*}
Thus, we have shown that (CDI) holds true.
\end{proof}

\section{Estimating sparse joint shift when all features are discrete}
\label{se:instant}

We consider here the special case of Assumption~\ref{as:setting} where the feature information set $\mathcal{H}$
is generated by an at most countable set of events. This implies that also any sub-$\sigma$-algebra $\mathcal{F}$
of $\mathcal{H}$
is generated by an at most countable set of events. Think of $\mathcal{H} = \sigma(X_i, \, i \in I)$ for
a feature vector $(X_i, \, i \in I)$ taking values in an at most countable set and
$\mathcal{F} = \sigma(X_j, \, j \in J)$ for a non-empty $J \subset I$.

For the following, let Assumption~\ref{as:setting} hold true and suppose that
\begin{itemize}
\item $\mathcal{H}$ is generated by an at most countable set of events, and
\item $\mathcal{F}$ is a sub-$\sigma$-algebra of $\mathcal{H}$ such that $Q$ is related to $P$ through
    $\mathcal{F}$-SJS in the sense of Definition~\ref{de:SJS}.
\end{itemize}
It is convenient to begin the description of this case with the sub-$\sigma$-algebra $\mathcal{F}$:
\begin{equation}\label{eq:Fdiscreted}
\mathcal{F}  = \sigma\bigl(\{F_n: n \in \mathbb{N}\}\bigr),
\end{equation}
for an $\mathcal{M}$-measurable disjoint decomposition $F_n$, $n \in \mathbb{N}$, of $\Omega$.
In practice, all but finitely many of the $F_n$ will be the $\emptyset$.
Because of $\mathcal{H} \supset \mathcal{F}$, \eqref{eq:Fdiscreted} implies for $\mathcal{H}$
\begin{equation*}
\mathcal{H}   = \sigma\bigl(\{H_{n, r}: n \in \mathbb{N}, r = 1,\ldots, r_n\}\bigr), \qquad
\text{with} \ \bigcup_{r=1}^{r_n} H_{n, r} = F_n, \quad \text{for all}\ n \in \mathbb{N},
\end{equation*}
where for each $n\in\mathbb{N}$, the set family $H_{n, 1}, \ldots, H_{n, r_n}$ is an $\mathcal{M}$-measurable
disjoint decomposition of $F_n$. We denote by $I_\mathcal{H} = \{(n,r): n\in \mathbb{N}, r = 1, \ldots, r_n\}$
the index set of $\mathcal{H}$.

Having introduced the basic notation, we obtain the following formulae for the quantities
needed for the description of the model:
\begin{itemize}
\item Posterior probabilities conditional on $\mathcal{H}$, for $i = 1, \ldots, \ell$:
\begin{equation*}
P[A_i\,|\,\mathcal{H}] \ = \ \sum_{(n,r)\in I_\mathcal{H}} \mathbf{1}_{H_{n,r}}\,P[A_i\,|\,H_{n,r}].
\end{equation*}
\item Density $h_i$, $i = 1, \ldots, \ell$, of $Q_i$ with respect to $P_i$ on $\mathcal{H}$,
without making use of the SJS assumption:
\begin{equation*}
h_i \ = \ \sum_{(n,r)\in I_\mathcal{H}} \mathbf{1}_{H_{n,r}}\,\frac{Q[H_{n,r}\,|\,A_i]}{P[H_{n,r}\,|\,A_i]}
    \ = \ \sum_{(n,r)\in I_\mathcal{H}} \mathbf{1}_{H_{n,r}}\,\frac{Q_i[H_{n,r}]}{P_i[H_{n,r}]}.
\end{equation*}
\item Density $f_i$, $i = 1, \ldots, \ell$, of $Q_i$ with respect to $P_i$ on $\mathcal{H}$ under
the SJS assumption:
\begin{equation*}
f_i \ = \ \sum_{n\in\mathbb{N}} \mathbf{1}_{F_N}\,\frac{Q_i[F_n]}{P_i[F_n]} \ = \
    \sum_{n\in\mathbb{N}} \mathbf{1}_{F_N}\,E_{P_i}[h_i\,|\,F_n].
\end{equation*}
\item Density $h$ of $Q$ with respect to $P$ on $\mathcal{H}$,
without making use of the SJS assumption:
\begin{equation*}
h \ = \ \sum_{(n,r)\in I_\mathcal{H}} \mathbf{1}_{H_{n,r}}\,\frac{Q[H_{n,r}]}{P[H_{n,r}]}.
\end{equation*}
\item $h_Q$ from \eqref{eq:Hdens} in the discrete setting (i.e.\
the density $h_Q$ of $Q$ with respect to $P$ on $\mathcal{H}$,
explicitly making use of the SJS assumption):
\begin{align*}
h_Q & = \sum_{i=1}^\ell \left(\sum_{n\in\mathbb{N}} \mathbf{1}_{F_N}\,\frac{Q_i[F_n]}{P_i[F_n]}\right)
    \frac{Q[A_i]}{P[A_i]} \Big(\sum_{(n,r)\in I_\mathcal{H}} \mathbf{1}_{H_{n,r}}\,P[A_i\,|\,H_{n,r}]\Big)\\
    & = \sum_{i=1}^\ell \sum_{n\in\mathbb{N}} \frac{Q[F_n\cap A_i]}{P[F_n\cap A_i]}
    \left(\sum_{r=1}^{r_n} \mathbf{1}_{H_{n,r}}\,P[A_i\,|\,H_{n,r}]\right).
\end{align*}
\item Difference of $h$ and $h_Q$:
\begin{equation}\label{eq:discreteDiff}
h - h_Q \ = \ \sum_{(n,r)\in I_\mathcal{H}} \mathbf{1}_{H_{n,r}}
    \left(\frac{Q[H_{n,r}]}{P[H_{n,r}]} -  \sum_{i=1}^\ell \frac{Q[F_n\cap A_i]}{P[F_n\cap A_i]}\,
    P[A_i\,|\,H_{n,r}]\right).
\end{equation}
\end{itemize}
Consequence of \eqref{eq:discreteDiff}: On the event $F_n$, it holds that
\begin{equation*}
(h - h_Q)\,\mathbf{1}_{F_n} \ = \ \sum_{r=1}^{r_n} \mathbf{1}_{H_{n,r}}
    \left(\frac{Q[H_{n,r}]}{P[H_{n,r}]} -  \sum_{i=1}^\ell Q[F_n\cap A_i]\,
    \frac{P[A_i\,|\,H_{n,r}]}{P[F_n\cap A_i]}\right).
\end{equation*}
Hence, for each $n\in\mathbb{N}$, the unobservable probabilities $Q[F_n\cap A_1], \ldots, Q[F_n\cap A_\ell]$
must satisfy the following system of $r_n$ linear equations for $\ell$ unknowns:
\begin{equation}\label{eq:linear}
\sum_{i=1}^\ell Q[F_n\cap A_i]\,\frac{P[A_i\,|\,H_{n,r}]}{P[F_n\cap A_i]} \ = \
    \frac{Q[H_{n,r}]}{P[H_{n,r}]}, \qquad r = 1, \ldots, r_n.
\end{equation}
\begin{subequations}
Note that once the $Q[F_n\cap A_i]$ have been determined for all $n\in\mathbb{N}$,
the target class priors are obtained by summing them up:
\begin{equation}\label{eq:confusion}
Q[A_i] \ = \ \sum_{n\in\mathbb{N}} Q[F_n\cap A_i], \qquad i = 1, \ldots, \ell.
\end{equation}
Moreover, evoking Proposition~\ref{pr:corrH} gives for all $(n,r) \in I_\mathcal{H}$ and
$i = 1, \ldots, \ell$ the \emph{conditional posterior correction formula} in the discrete case:
\begin{equation}
Q[A_i\,|\,H_{n,r}] \ = \ \frac{\frac{Q[A_i\cap F_n]}{P[A_i\cap F_n]}
    \,P[A_i\,|\,H_{n,r}]}{\sum_{j=1}^\ell \frac{Q[A_j\cap F_n]}{P[A_j\cap F_n]}\,P[A_j\,|\,H_{n,r}]}.
\end{equation}
\end{subequations}
As alluded to in Section~\ref{se:discrete} above,
in order to have any chance of uniqueness of the solution to \eqref{eq:linear} (for fixed $n\in\mathbb{N}$),
the number of equations ought to be at least equal to or greater than the number of unknowns. Hence $r_n \ge \ell$
is a natural requirement. Depending on the nature of the features, this requirement might not be satisfied, in
particular if the original feature information set $\mathcal{H}$ is replaced by some subset $\mathcal{H}'
\supset \mathcal{F}$ for efficiency reasons like the one that problem \eqref{eq:optimH} has to be solved instead of
\eqref{eq:optim}.

Chen et al.~\cite{chen&zaharia&Zou:SJS} deal with this issue by introducing a (hard) classifier $\mathbf{C} =
(C_1, \ldots, C_\ell)$ as in \eqref{eq:classifier}.
They then consider an augmented sub-$\sigma$-algebra $\mathcal{H}_\mathcal{C} = \sigma(\mathcal{H}'\cup\mathcal{C})$
of $\mathcal{H}$. Substituting this in \eqref{eq:linear} produces the equation system
\begin{equation}\label{eq:linearC}
\sum_{i=1}^\ell Q[F_n\cap A_i]\,\frac{P[A_i\,|\,H_{n,r}\cap C_j]}{P[F_n\cap A_i]} \ = \
    \frac{Q[H_{n,r}\cap C_j]}{P[H_{n,r}\cap C_j]}, \qquad r = 1, \ldots, r_n, j = 1, \ldots, \ell.
\end{equation}

The special choice $\mathcal{H}_\mathcal{C} = \sigma(\mathcal{F}\cup\mathcal{C})$ in \eqref{eq:linearC}
gives a conditional version of the confusion matrix approach to class prior estimation
(Saerens et al.~\cite{saerens2002adjusting}) in the discrete setting:
\begin{equation}\label{eq:linearF}
\sum_{i=1}^\ell Q[F_n\cap A_i]\,\frac{P[A_i\,|\,F_n\cap C_j]}{P[F_n\cap A_i]} \ = \
    \frac{Q[F_n\cap C_j]}{P[F_n\cap C_j]}, \qquad j = 1, \ldots, \ell.
\end{equation}
By combining \eqref{eq:confusion} and \eqref{eq:linearF}, a two-step approach to class prior estimation
under (proper) sparse joint shift can be implemented.

\providecommand{\href}[2]{#2}
\providecommand{\arxiv}[1]{\href{http://arxiv.org/abs/#1}{arXiv:#1}}
\providecommand{\url}[1]{\texttt{#1}}
\providecommand{\urlprefix}{URL }

\medskip
Received October 2023; revised February 2024; early access March 2024.
\medskip


\begin{thebibliography}{99}

\bibitem{Azizzadenesheli2022}[10.1109/TPAMI.2021.3086060]
\newblock K.~Azizzadenesheli,
\newblock Importance weight estimation and generalization in domain adaptation under label shift,
\newblock \emph{IEEE Transactions on Pattern Analysis and Machine Intelligence}, \textbf{44} (2022), 6578-6584.

\bibitem{Ben-David2007Representations}[10.7551/mitpress/7503.003.0022]
\newblock S.~Ben-David, J.~Blitzer, K.~Crammer and F.~Pereira,
\newblock Analysis of representations for domain adaptation,
\newblock in \emph{{Advances in Neural Information Processing Systems}}, (eds. B.~Sch\"{o}lkopf, J.~Platt and T.~Hoffman), vol.~19, MIT Press, 2006, 137-144.

\bibitem{billingsley1986probability} (MR830424)
\newblock P.~Billingsley,
\newblock \emph{Probability and Measure},
\newblock 2nd edition, John Wiley \& Sons, 1986.

\bibitem{chen&zaharia&Zou:SJS}
\newblock L.~Chen, M.~Zaharia and J.~Zou,
\newblock \doititle{Estimating and explaining model performance when both covariates and labels shift},
\newblock in \emph{{Advances in Neural Information Processing Systems, NeurIPS} 2022}, (eds. S.~Koyejo, S.~Mohamed, A.~Agarwal, D.~Belgrave, K.~Cho and A.~Oh), vol.~35, Curran Associates, Inc., 2022, 11467-11479.

\bibitem{Elkan01}
\newblock C.~Elkan,
\newblock \doititle{The foundations of cost-sensitive learning},
\newblock in \emph{Seventeenth International Joint Conference on Artificial Intelligence, IJCAI 2001}, (ed. B.~Nebel), Morgan Kaufmann, 2001, 973-978.

\bibitem{forman2008quantifying} (MR2434763) [10.1007/s10618-008-0097-y]
\newblock G.~Forman,
\newblock \doititle{Quantifying counts and costs via classification},
\newblock \emph{Data Mining and Knowledge Discovery}, \textbf{17} (2008), 164-206.

\bibitem{he2022domain} 
\newblock H.~He, Y.~Yang and H.~Wang,
\newblock \doititle{Domain adaptation with factorizable joint shift},
\newblock \emph{Presented at the ICML 2021 Workshop on Uncertainty and Robustness in Deep Learning}, (2021).

\bibitem{hofer2015adapting} (MR3300919) [10.1016/j.ejor.2014.11.022]
\newblock V.~Hofer,
\newblock Adapting a classification rule to local and global shift when only unlabelled data are available,
\newblock \emph{European Journal of Operational Research}, \textbf{243} (2015), 177-189.

\bibitem{HolzmannInformation2014} (MR3192004) [10.1214/13-AOAS709]
\newblock H.~Holzmann and M.~Eulert,
\newblock The role of the information set for forecasting -- with applications to risk management,
\newblock \emph{The Annals of Applied Statistics}, \textbf{8} (2014), 595-621.

\bibitem{pmlr-v89-johansson19a}
\newblock F.~Johansson, D.~Sontag and R.~Ranganath,
\newblock \doititle{Support and invertibility in domain-invariant representations},
\newblock in \emph{Proceedings of the Twenty-Second International Conference on Artificial Intelligence and Statistics}, (eds. K.~Chaudhuri and M.~Sugiyama), vol.~89 of Proceedings of Machine Learning Research, PMLR, 2019, 527-536.

\bibitem{kirchmeyer2021mapping}
\newblock M.~Kirchmeyer, A.~Rakotomamonjy, E.~de~Bezenac and P.~Gallinari,
\newblock \doititle{Mapping conditional distributions for domain adaptation under generalized target shift, 2021},
\newblock \arxiv{2110.15057}, Presented at ICLR 2022.

\bibitem{Klebaner} (MR2160228) [10.1142/p386]
\newblock F.~C. Klebaner,
\newblock \emph{Introduction to {S}tochastic {C}alculus with {A}pplications},
\newblock 2nd edition, Imperial College Press, 2005.

\bibitem{klenke2013probability} (MR3112259) [10.1007/978-1-4471-5361-0]
\newblock A.~Klenke,
\newblock \emph{Probability Theory: A Comprehensive Course},
\newblock Springer Science \& Business Media, 2013.

\bibitem{MorenoTorres2012521}[10.1016/j.patcog.2011.06.019]
\newblock J.~G. Moreno-Torres, T.~Raeder, R.~Alaiz-Rodriguez, N.~V. Chawla and F.~Herrera,
\newblock \doititle{A unifying view on dataset shift in classification},
\newblock \emph{Pattern Recognition}, \textbf{45} (2012), 521-530.

\bibitem{saerens2002adjusting}[10.1162/089976602753284446]
\newblock M.~Saerens, P.~Latinne and C.~Decaestecker,
\newblock Adjusting the {o}utputs of a {c}lassifier to {n}ew a {p}riori {p}robabilities: {A} {s}imple {p}rocedure,
\newblock \emph{Neural Computation}, \textbf{14} (2001), 21-41.

\bibitem{Scott2019} (MR3932867)
\newblock C.~Scott,
\newblock \doititle{A generalized Neyman-Pearson criterion for optimal domain adaptation},
\newblock in \emph{Proceedings of Machine Learning Research, 30th International Conference on Algorithmic Learning Theory}, vol.~98, 2019, 1-24.

\bibitem{shimodaira2000improving} (MR1795598) [10.1016/S0378-3758(00)00115-4]
\newblock H.~Shimodaira,
\newblock Improving predictive inference under covariate shift by weighting the log-likelihood function,
\newblock \emph{Journal of Statistical Planning and Inference}, \textbf{90} (2000), 227-244.

\bibitem{storkey2009training}[10.7551/mitpress/7921.003.0004]
\newblock A.~Storkey,
\newblock \doititle{When {t}raining and {t}est {s}ets {a}re {d}ifferent: {C}haracterizing {l}earning {t}ransfer},
\newblock in \emph{Dataset {S}hift in {M}achine {L}earning} (eds. J.~Qui{\~n}onero-Candela, M.~Sugiyama, A.~Schwaighofer and N.~Lawrence), The MIT Press, Cambridge, Massachusetts, 2009, chapter~1, 3-28.

\bibitem{sugiyama2012density} (MR2895762) [10.1017/CBO9781139035613]
\newblock M.~Sugiyama, T.~Suzuki and T.~Kanamori,
\newblock \emph{Density Ratio Estimation in Machine Learning},
\newblock Cambridge University Press, 2012.

\bibitem{Tasche2022} (MR4422842) [10.1080/02331888.2021.2016767]
\newblock D.~Tasche,
\newblock \doititle{Calibrating sufficiently},
\newblock \emph{Statistics}, \textbf{55} (2021), 1356-1386.

\bibitem{tasche2022class}
\newblock D.~Tasche,
\newblock \doititle{Class prior estimation under covariate shift: No problem?},
\newblock in \emph{ECML/PKDD 2022 Workshop Learning to Quantify: Methods and Applications (LQ 2022)}, 2022.

\bibitem{tasche2022factorizable}[10.3390/make4030038]
\newblock D.~Tasche,
\newblock \doititle{Factorizable joint shift in multinomial classification},
\newblock \emph{Machine Learning and Knowledge Extraction}, \textbf{4} (2022), 779-802.

\bibitem{Zhang:2013:TargetShift}
\newblock K.~Zhang, B.~Sch\"{o}lkopf, K.~Muandet and Z.~Wang,
\newblock \doititle{Domain adaptation under target and conditional shift},
\newblock in \emph{Proceedings of the 30th International Conference on International Conference on Machine Learning -- Volume 28}, ICML'13, JMLR.org, 2013, III--819-III--827.



\end{thebibliography}
\end{document}